\newcommand{\citewithauthor}[1]{\citeauthor{#1} \cite{#1}}
\newtheorem{theorem}{\TE{Theorem}}[section]
\newtheorem{lemma}[theorem]{\TE{Lemma}}
\newtheorem{assume}[theorem]{\TE{Assumption}}
\newtheorem{corollary}[theorem]{\TE{Corollary}}
\newtheorem{define}[theorem]{\TE{Definition}}
\algnewcommand{\IfThenElse}[3]{
  \State \algorithmicif\ #1\ \algorithmicthen\ #2\ \algorithmicelse\ #3}
\algnewcommand{\LineComment}[1]{\State \(\triangleright\) #1}
\newcommand*{\colorboxed}{}
\def\colorboxed#1#{%
  \colorboxedAux{#1}%
}
\newcommand*{\colorboxedAux}[3]{%
  \begingroup
    \colorlet{cb@saved}{.}%
    \color#1{#2}%
    \boxed{%
      \color{cb@saved}%
      #3%
    }%
  \endgroup
}
\def\Eqref Eq:#1:{\eqref{eq:#1}}
\newcommand{\TE}[1]{\textbf{#1}}
\newcommand{\FPP}[2]{\frac{\partial{#1}}{\partial{#2}}}
\newcommand{\FPPR}[2]{{\partial{#1}}/{\partial{#2}}}
\newcommand{\FPPT}[2]{\frac{\partial^2{#1}}{\partial{#2}^2}}
\newcommand{\FDD}[2]{\frac{d{#1}}{d{#2}}}
\newcommand{\THREE}[3]{\left(\setlength{\arraycolsep}{1pt}\begin{array}{ccc}{#1}, & {#2}, & {#3}\end{array}\right)}
\newcommand{\dist}{\text{dist}}
\newcommand{\adj}{\mathcal{M}}
\newcommand{\fmax}[1]{\underset{#1}{\max}}
\newcommand{\argmin}[1]{\underset{#1}{\text{argmin}}\;}
\newcommand{\ST}{\text{s.t.}\quad}
\definecolor{Blue} {rgb}{0.4, 0.4, 0.8}
\definecolor{Red}  {rgb}{0.8, 0.2, 0.2}
\definecolor{Green}{rgb}{0.2, 0.8, 0.2}
\newcommand{\revised}[1]{\textcolor{black}{#1}}
\newcommand\fs@ruled@notop{\def\@fs@cfont{\bfseries}\let\@fs@capt\floatc@ruled
  \def\@fs@pre{}%
  \def\@fs@post{\kern2pt\hrule\relax}%
  \def\@fs@mid{\kern2pt\hrule\kern2pt}%
  \let\@fs@iftopcapt\iftrue}
\renewcommand\fst@algorithm{\fs@ruled@notop}
\newif\ifsupp
\title{\large\bf Provably Feasible Semi-Infinite Program Under Collision Constraints via Subdivision}
\author{Duo Zhang$^1$, Chen Liang$^2$, Xifeng Gao$^1$, Kui Wu$^1$, and Zherong Pan$^{1\dagger}$  \\
\thanks{$^\dagger$ indicates corresponding author. $^1$Lightspeed Studios, Tencent. \{zduozhang, xifgao, kwwu, zrpan\}@global.tencent.com. $^2$The Department of Computer Science, Zhejiang university.}}
\begin{document}
\maketitle
\allowdisplaybreaks
\thispagestyle{empty}
\pagestyle{empty}

\begin{abstract}
We present a semi-infinite program (SIP) solver for trajectory optimizations of general articulated robots. These problems are more challenging than standard Nonlinear Program (NLP) by involving an infinite number of non-convex, collision constraints. Prior SIP solvers based on constraint sampling cannot guarantee the satisfaction of all constraints. Instead, our method uses a conservative bound on articulated body motions to ensure the solution feasibility throughout the optimization procedure. We further use subdivision to adaptively reduce the error in conservative motion estimation. Combined, we prove that our SIP solver guarantees feasibility while approaching the optimal solution of SIP problems up to arbitrary user-provided precision. We demonstrate our method towards several trajectory optimization problems in simulation, including industrial robot arms and UAVs. The results demonstrate that our approach generates collision-free locally optimal trajectories within a couple of minutes.
\end{abstract}
\begin{IEEEkeywords}
Semi-Infinite Program, Trajectory Optimization, Collision Handling, Articulated Body
\end{IEEEkeywords}
\section{Introduction}
This paper deals with trajectory generation for articulated robots, which is a fundamental problem in robotic motion planning. Among other requirements, providing strict collision-free guarantees is crucial to a reliable algorithm, i.e., the robot body should be bounded away from static and dynamic obstacles by a safe distance at any time instance. In addition to feasibility, modern planning algorithms such as~\cite{doi:10.1177/0278364914528132} further seek (local) optimality, i.e., finding trajectories that correspond to the minimizers of user-specified cost functions. Typical cost functions would account for smoothness~\cite{kyriakopoulos1988minimum}, energy efficacy~\cite{hansen2012enhanced}, and time-optimization~\cite{kunz2012time}. Despite decades of research, achieving simultaneous feasibility and optimality remains a challenging problem.

Several categories of techniques have attempted to generate feasible and optimal trajectories. The most widely recognized sampling-based motion planners~\cite{lavalle1998rapidly} and their optimal variants~\cite{doi:10.1177/0278364911406761} progressively construct a tree in the robot configuration space and then use low-level collision checker to ensure collision-free along each edge of the tree. The optimal trajectory restricted to the tree can asymptotically approach the global optima. However, most of the low-level collision checkers are based on discrete-time sampling~\cite{pan2012fcl} and cannot ensure collision-free during continuous motion. Optimal sampling-based methods are a kind of zeroth-order optimization algorithm that does not require gradient information to guide trajectory search. On the downside, the complexity of finding globally optimal trajectories is exponential in the dimension of configuration spaces~\cite{doi:10.1177/0278364917714338}.

In parallel, first- and second-order trajectory optimization algorithms~\cite{schulman2013finding} have been proposed to utilize derivative information to bring the trajectory towards a local optima with a polynomial complexity in the configuration space dimension. Based on the well-developed off-the-shelf NLP solvers such as~\cite{biegler2009large}, trajectory optimization has been adopted to solve complex high-dimensional planning problems. Despite the efficacy of high-order techniques, however, dealing with collision constraints becomes a major challenge as the number of constraints is infinite, leading to SIP problems~\cite{doi:10.1177/0278364920983353}. \revised{Existing trajectory optimizers for articulated robots are based on the exchange method~\cite{lopez2007semi}, i.e., sampling the constraints at discrete time instances. Similar to the discrete-time sampling used in the collision checkers, the exchange method can miss infeasible constraints and sacrifice the feasibility guarantee. In their latest works, \citewithauthor{marcucci2022motion} propose an alternative approach that first identifies feasible convex subsets of the configuration space, and then searches for globally optimal trajectories restricted to these subsets. While their method can provide feasibility and optimality guarantees, expensive precomputations are required to identify the feasible subsets~\cite{amice2022finding}.}

We propose a novel SIP solver with guaranteed feasibility. Our method is based on the discretization-based SIP solver~\cite{lopez2007semi}. We divide a robot trajectory into intervals and use conservative motion bound to ensure the collision-free property during each interval. These intervals further introduce barrier penalty functions, which guide the optimizer to stay inside the feasible domain and approach the optimal solution of SIP problems up to arbitrary user-specified precision. The key components of our method involve: 1) a motion bound that conservatively estimates the range of motion of a point on the robot over a finite time interval; 2) a safe line-search algorithm that prevents the intersection between the motion bound and obstacles; 3) a motion subdivision scheme that recursively reduces the error of conservative motion estimation. By carefully designing the motion bound and line-search algorithm, we prove that our solver converges within finitely many iterations to a collision-free, nearly locally optimal trajectory, under the mild assumption of Lipschitz motion continuity. We have also evaluated our method on a row of examples, including industrial robot arms reaching targets through complex environments and multi-UAV trajectory generation with simultaneous rotation and translation. Our method can generate safe trajectories within a couple of minutes on a single desktop machine.
\section{Related Work}
We review related works in optimality and feasibility of motion planning, SIP and its application in robotics, and various safety certifications.

\subsection{Optimality and Feasibility of Motion Planning}
We highlight three milestones in the development of motion planning frameworks: the trajectory optimization approach~\cite{betts1998survey}, the rapid-exploring random tree (RRT)~\cite{lavalle1998rapidly}, and its optimal variant (RRT-Star)~\cite{doi:10.1177/0278364911406761}. Although the performance of these frameworks largely depends on the concrete choices of algorithmic components, a major qualitative difference lies in their optimality and complexity. Trajectory optimization ensures the returned trajectory is optimal in a local basin of attraction; RRT returns an arbitrary feasible trajectory without any optimality guarantee; while RRT-Star provides an asymptotic global optimality guarantee. With the stronger guarantee in terms of optimality comes significantly higher complexity in the dimension of configuration spaces. Based on off-the-shelf NLP solvers such as~\cite{wright1997primal}, the complexity of trajectory optimization is polynomial. The complexity of RRT relies on the visibility property~\cite{hsu2007probabilistic}, which is not directly related to the dimension. Unfortunately, the complexity of optimal sampling-based motion planning is exponential~\cite{doi:10.1177/0278364917714338}, which is unsurprising considering the NP-hardness of general non-convex optimization. As a result, nearly global optimality can only be expected in low-dimensional problems, while local optimality is preferred in practical, high-dimensional planning problems. Although there have been considerable efforts in reducing the cost of RRT-Star, including the use of bidirectional exploration~\cite{jordan2013optimal}, branch-and-bound~\cite{karaman2011anytime}, informed-RRT-Star~\cite{gammell2014informed}, and lazy collision checkers~\cite{hauser2015lazy}, its worst-case complexity cannot be shaken.

In addition to optimality, providing a strict feasibility guarantee poses a major challenge for any of the aforementioned frameworks. For trajectory optimization, collision-constraints are formulated as differentiable hard constraints in NLP. However, since there are the infinite number of constraints, practical formulations~\cite{park2012itomp,schulman2013finding,zucker2013chomp} need to sample constraints at discrete time instances, which violate the feasibility guarantee. Even worse, many off-the-shelf NLP solvers~\cite{biegler2009large} can accept infeasible solutions and then use gradient information to guide the solutions back to the feasible domain, which is not guaranteed to succeed, especially when either the robot or the environment contains geometrically thin objects. An exception is the feasible SQP algorithm~\cite{Tits2009} that ensures iteration-wise feasibility, but this algorithm is not well-studied in the robotic community. On the other hand, RRT, RRT-Star, and their variants require a low-level collision checker to prune non-collision-free trajectory segments. The widely used discrete-time collision checker~\cite{pan2012fcl} again requires temporal sampling and violates the feasibility guarantee. There exists exact continuous-time collision checkers~\cite{choi2006continuous,choi2008continuous,brochu2012efficient}, but they make strong assumptions that robot links are undergoing linear or affine motions, which are only valid for point, rigid, or car-like robots. \revised{For more general articulated robot motions, inexact continuous-time collision checkers~\cite{schwarzer2004exact,10.1145/1276377.1276396,pan2011fast} have been proposed that provide motion upper bounds, but such bounds can be overly conservative and result in false negatives. In comparison, our trajectory optimization method also relies heavily on motion upper bounds, but we use recursive subdivision to adaptively tighten the motion bounds and provide both local optimality and feasibility guarantee for general configuration spaces.}

\subsection{SIP and Applications in Robotics}
SIP models mathematical programs involving a finite number of decision variables but an infinite number of constraints. SIPs frequently arise in robotic applications for modeling constraints on motion safety~\cite{doi:10.1177/0278364920983353,amice2022finding}, controllability and stability~\cite{majumdar2013control,clark2021verification}, reachability~\cite{majumdar2017funnel}, and pervasive contact realizability~\cite{zhang2021semi}. The key challenge of solving SIP lies in the reduction of the infinite constraint set to a computable finite set. To the best of our knowledge, a generally equivalent infinite-to-finite reduction is unavailable, except for some special cases~\cite{deits2015efficient,parrilo2000structured}. Therefore, general-purpose SIP solvers~\cite{lopez2007semi} rely on approximate infinite-to-finite reductions that transform SIP to a conventional NLP, which is then solved iteratively as a sub-problem. Two representative methods of this kind are the exchange and discretization methods. These methods sample the constraint index set to approximately reduce SIP to NLP. In terms of our collision constraints, this treatment resembles the discrete-time collision checker used in sampling-based motion planners. Unfortunately, even starting from a feasible initial point, general-purpose SIP solvers cannot guarantee the feasibility of solutions, which is an inherited shortcoming of the underlying NLP solver. Instead, we propose a feasible discretization method for solving the special SIP under collision constraints with a feasibility guarantee. Our method is inspired by the exact penalty formulation~\cite{pietrzykowski1969exact,conn1987exact}, which reduces the SIP to a conventional NLP by integrating over the constraint indices. The exact penalty method can be considered as a third method for infinite-to-finite reduction, but the integral in such penalty function is generally intractable to compute. Our key idea is to approximate such integrals by subdivision without compromising the theoretical guarantees.

\subsection{Planning Under Safety Certificates}
Our discussion to this point focuses on general algorithms applicable to arbitrary configuration spaces, where a feasibility guarantee is difficult to establish. But exceptions exist for several special cases or under additional assumptions. Assuming a continuous-time dynamic system, for example, the Control Barrier Function (CBF)~\cite{borrmann2015control,ames2019control} designs a controller to steer a robot while satisfying given constraints, but CBF is only concerned about the feasibility and cannot guarantee the steered robot trajectory is optimal. By approximating the robot as a point or a ball, the robot trajectory becomes a high-order spline, and tight motion bound can be derived to ensure safety. This approach is widely adopted for (multi-)UAV trajectory generation~\cite{deits2015efficient,liu2017search}, but it cannot be extended to more complex robot kinematics. Most recently, a novel formulation has been proposed by \citewithauthor{amice2022finding} to certify the feasibility of a positive-measure subset of the configuration space of arbitrary articulated robots. Their method relies on reformulation that transforms the collision constraint to a conditional polynomial positivity problem, which can be further combined with mixed-integer convex programming, as done in~\cite{marcucci2022motion}, to ensure feasibility. Compared to all these techniques, our feasibility guarantee is based on a much weaker assumption of Lipschitz motion bound, and we do not require any precomputation to establish the safety certificate.
\section{Problem Statement}
\begin{figure}[t]
\centering
\includegraphics[width=.95\linewidth]{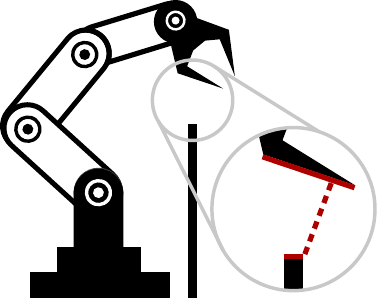}
\put(-124,163){\textcolor{white}{$\theta$}}
\put(-180,48){\textcolor{white}{$\theta$}}
\put(-198,140){$\theta$}
\put(-215,95){$\theta$}
\put(-92,50){$\text{dist}(b_{ij},o_k)$}
\put(-70,75){$b_{ij}$}
\put(-72,25){$o_k$}
\caption{\label{fig:illus} We consider a moving articulated robot arm, where the volume occupied by the $i$th link is denoted as $b_i$. Each $b_i$ admits a finite decomposition $b_i=\bigcup_j b_{ij}$ and each $b_{ij}$ is a simple shape, e.g. the red edge. $b_{ij}$ is a function of the time $t$ and trajectory parameters $\theta$, denoted as $b_{ij}(t,\theta)$. $\theta$ could be the control points of B\'ezier curves in the configuration space. Similarly, we can decompose the obstacle $o=\bigcup_k o_k$ where $o_k$ is the short red edge. We introduce log barrier energy bounding the distance $\text{dist}(b_{ij},o_k)$ (dashed line) away from a safety distance $d_0$.}
\end{figure}
In this section, we provide a general formulation for collision-constrained trajectory generation problems. Throughout the paper, we will use subscripts to index geometric entities or functions, but we choose not to indicate the total number of indices to keep the paper succinct, e.g., we denote $\sum_i$ as a summation over all indices $i$. We consider an open-loop articulated robot as composed of several rigid bodies. The $i$th rigid body occupies a finite volume in the global frame, denoted as $b_i\subset\mathbb{R}^3$. Without ambiguity, we refer to the rigid body and its volume interchangeably. We further denote $b_i^0\subset\mathbb{R}^3$ as the volume of $i$th body in its local frame. We further assume there is a set of static obstacles taking up another volume $o\subset\mathbb{R}^3$. By the articulated body kinematics, we can compute $b_i$ from $b_i^0$ via the rigid transform: $b_i=M_ib_i^0$ where we define $M_ib_i^0=\{M_ix|x\in b_i^0\}$. When a robot moves, $M_i$ and thus $b_i$ are time-dependent functions, denoted as $M_i(t,\theta)$ and $b_i(t,\theta)$, respectively. Here $t\in[0,T]$ is the time parameter and the trajectory is parameterized by a set of decision variables, denoted as $\theta$. The problem of collision-constrained trajectory generation aims at minimizing a twice-differentiable cost function $\mathcal{O}(\theta)$, such that each rigid body $b_i$ is bounded away from $o$ by a user-specified safe distance denoted as $d_0$ at any $t\in[0,T]$. Formally, this is defined as:
\begin{equation}
\begin{aligned}
\label{eq:prob}
\argmin{\theta}&\mathcal{O}(\theta)\\
\ST&\dist(b_i(t,\theta),o)\geq d_0\quad\forall i\land t\in[0,T],
\end{aligned}
\end{equation}
where $\dist(\bullet)$ is the shortest Euclidean distance between two sets. Under the very mild assumption of being twice-differentiable, the cost function $\mathcal{O}(\theta)$ can encode various user requirements for a ``good'' trajectory, i.e., the closedness between an end-effector and a target position, or the smoothness of motion. This is a SIP due to the infinitely many constraints, one corresponding to each time instance. Further, the SIP is non-smooth as the distance function between two general sets is non-differentiable. \prettyref{eq:prob} is a general definition incorporating various geometric representations of the robot and obstacles as illustrated in~\prettyref{fig:illus}.

\subsection{Smooth Approximation}
Although the main idea of this work is a discretization method for solving \prettyref{eq:prob}, most existing SIP solvers already adopt the idea of discretization for spatial representation of a rigid body $b_i$ to deal with non-smoothness of the function $\dist(\bullet)$. By spatial discretization, we assume that $b_i$ endows a finite decomposition $b_i=\bigcup_j b_{ij}$ where $b_{ij}$ is the $j$th subset of $b_i$ in world frame. Similarly, we can finitely decompose $o$ as $o=\bigcup_k o_k$ where $o_k$ is the $k$th subset of environmental obstacles $o$ in the world frame. If the distance function $\dist(b_{ij},o_k)$ between a pair of subsets is differentiable, then we can reduce the non-smooth SIP \prettyref{eq:prob} to the following smooth SIP:
\begin{equation}
\begin{aligned}
\label{eq:probSmooth}
\argmin{\theta}&\mathcal{O}(\theta)\\
\ST&\dist(b_{ij}(t,\theta),o_k)\geq d_0\quad\forall i,j,k\land t\in[0,T].
\end{aligned}
\end{equation}
In summary, spatial discretization is based on the following assumption:
\begin{assume}
\label{ass:Spatial}
Each $b_i$ and $o$ endows a finite decomposition denoted as $b_i=\bigcup_j b_{ij}$ and $o=\bigcup_k o_k$ such that $\dist(b_{ij},o_k)$ is sufficiently smooth for any $\left<i,j,k\right>$.
\end{assume}
\revised{\prettyref{ass:Spatial} holds for almost all computational representations of robot links. For example, common spatial discretization methods include point cloud, convex hull, and triangle mesh. In the case of the point cloud, each $b_{ij}$ or $o_k$ is a point, and $\dist(\bullet)$ is the differentiable pointwise distance. In the case of the convex hull, $\dist(\bullet)$ is the distance between a pair of convex hulls, which is non-differentiable in its exact form, but can be made sufficiently smooth by slightly bulging each convex hull to make them strictly convex~\cite{6710113}. In the case of the triangle mesh, it has been shown that the distance between a pair of triangles can be reduced to two sub-cases: 1) the distance between a point and a triangle and 2) the distance between a pair of edges, see~\cite{choi2006continuous}, both of which are special cases of the convex hull. Although spatial discretization can generate many more distance constraints, only a few constraints in close proximity to each other need to be activated and forwarded to the SIP solver for consideration, and these potentially active constraints can be efficiently identified using a spatial acceleration data structure~\cite{10.1145/1576246.1531393}. Despite these spatial discretizations, however, the total number of constraints is still infinite in the temporal domain.}

\subsection{The Exchange Method}
The exchange method is a classical algorithm for solving general SIP, which reduces SIP to a series of NLP by sampling constraints both spatially and temporally. Specifically, the algorithm maintains an instance set $\mathcal{I}$ that contains a finite set of $\left<i,j,k,t\right>$ tuples and reduces \prettyref{eq:prob} to the following NLP:
\begin{equation}
\begin{aligned}
\label{eq:probNLP}
\argmin{\theta}&\mathcal{O}(\theta)\\
\ST&\dist(b_{ij}(t,\theta),o_k)\geq d_0\quad\forall\left<i,j,k,t\right>\in\mathcal{I}.
\end{aligned}
\end{equation}
The algorithm approaches the solution of \prettyref{eq:prob} by alternating between solving \prettyref{eq:probNLP} and updating $\mathcal{I}$. The success of the exchange method relies on a constraint selection oracle for updating $\mathcal{I}$. Although several heuristic oracles are empirically effective, we are unaware of any exchange method that can guarantee the satisfaction of semi-infinite constraints. Indeed, most exchange methods insert new $\left<i,j,k,t\right>$ pairs into $\mathcal{I}$ when the constraint is already violated, i.e., $\dist(b_{ij}(t,\theta),o_k)<d_0$, and relies on the underlying NLP solver to pull the solution back onto the constrained manifold, where the feasibility guarantee is lost. 
\begin{figure}[t]
\centering
\includegraphics[width=.95\linewidth]{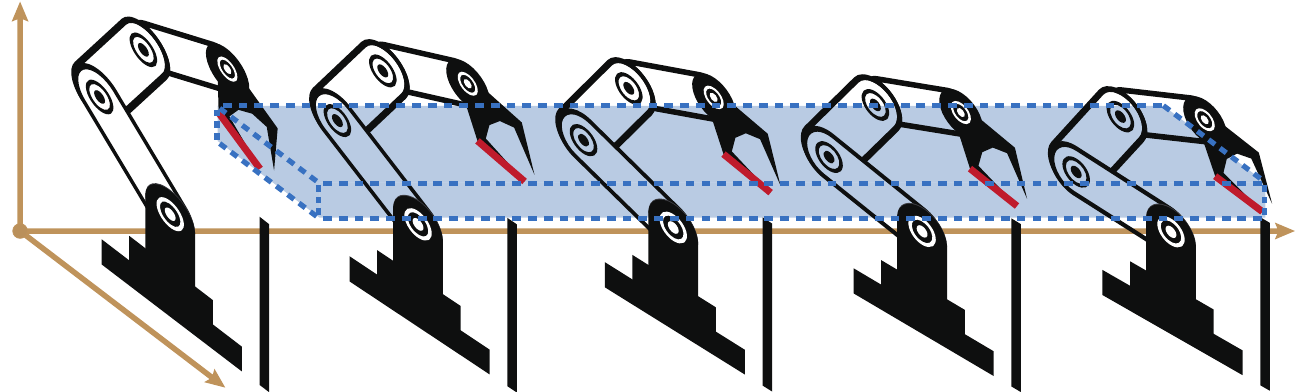}
\put(-200,-5){$x$}
\put(-240,75){$y$}
\put(-5,20){$t$}
\caption{\label{fig:illusAll} When the red edge illustrated in~\prettyref{fig:illus} is tracing out a temporal trajectory, we use a spatial-temporal motion bound (blue) to estimate its range and guarantee safety.
}
\end{figure}
\section{Subdivision-Based SIP Solver}
We propose a novel subdivision-based SIP solver inspired by the discretization method~\cite{lopez2007semi}. Unlike the exchange method that selects the instance set $\mathcal{I}$ using an oracle algorithm, the discretization method uniformly subdivides the index set into finite intervals and chooses a surrogate index from each interval to form the instance set $\mathcal{I}$, reducing the original problem into an NLP. As a key point of departure from the conventional infeasible discretization method, however, we design the surrogate constraint in \prettyref{sec:MotionBound} such that its feasible domain is a strict subset of the true feasible domain of~\prettyref{eq:prob}. We then show in~\prettyref{sec:Penalty} and~\prettyref{sec:PrimalIPM} that, by using the feasible interior point method such as~\cite[Chapter~4.1]{bertsekas1997nonlinear} to solve the NLP, our algorithm is guaranteed to generate iterations satisfying all the surrogate constraints. Since our surrogate constraint can limit the solution to an overly conservative subset, in \prettyref{sec:Subdivision}, we introduce a subdivision method to adaptively adjust the conservative subset and approach the original feasible domain.

\subsection{\label{sec:MotionBound}Surrogate Constraint}
We consider the following infinite spatial-temporal subset of constraints:
\begin{align}
\label{eq:subset}
\dist(b_{ij}(t,\theta),o_k)\geq d_0\quad\forall t\in[T_0,T_1]\subseteq[0,T],
\end{align}
where $b_{ij}$ and $o_k$ are two spatial subsets and $[T_0,T_1]\subseteq[0,T]$ is a temporal subset. Our surrogate constraint replaces the entire time interval with a single time instance. A natural choice is to use the following midpoint constraint:
\begin{align}
\label{eq:surrogate}
\dist\left(b_{ij}\left(\frac{T_0+T_1}{2},\theta\right),o_k\right)\geq d_0,
\end{align}
which is differentiable by~\prettyref{ass:Spatial}. Unfortunately, the domain specified by~\prettyref{eq:surrogate} is larger than that of~\prettyref{eq:subset}, violating our feasibility requirement. We remedy this problem by upper-bounding the feasibility error due to the use of our surrogate. A linear upper bound can be established by taking the following mild assumption:
\begin{assume}
\label{ass:Bounded}
The feasible domain of $t$ and $\theta$ is bounded.
\end{assume}
\begin{lemma}
\label{lem:bound}
Under \prettyref{ass:Spatial}, \ref{ass:Bounded}, there exists a constant $L_1$ such that:
\begin{align*}
|\dist(b_{ij}(t_1,\theta),o_k)-\dist(b_{ij}(t_2,\theta),o_k)|\leq L_1|t_1-t_2|.
\end{align*}
\end{lemma}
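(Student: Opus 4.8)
The plan is to establish the Lipschitz continuity of the distance function in $t$ by decomposing it through the composition of three maps: the trajectory map $t \mapsto \theta$-dependent rigid transform $M_i(t,\theta)$, the resulting body-placement map $M_i \mapsto b_{ij}(t,\theta)$, and finally the set-distance functional $b_{ij} \mapsto \dist(b_{ij}, o_k)$. The key observation is that $\dist(\cdot, o_k)$ is $1$-Lipschitz with respect to the Hausdorff distance on its first argument: for any two sets $A, A'$, one has $|\dist(A, o_k) - \dist(A', o_k)| \le d_H(A, A')$, where $d_H$ denotes the Hausdorff distance. This is a standard and elementary fact (pick near-optimal witness points and use the triangle inequality). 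Hence it suffices to bound $d_H(b_{ij}(t_1,\theta), b_{ij}(t_2,\theta))$ by a constant multiple of $|t_1 - t_2|$.

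First I would reduce the Hausdorff bound to a bound on the rigid transforms. Since $b_{ij}(t,\theta) = M_i(t,\theta)\,b_{ij}^0$ where $b_{ij}^0$ is the fixed local geometry, a point $x \in b_{ij}^0$ is mapped to $M_i(t_1,\theta)x$ versus $M_i(t_2,\theta)x$, and the displacement is $\|(M_i(t_1,\theta) - M_i(t_2,\theta))x\|$. Writing $M_i = (R_i, p_i)$ with rotation part $R_i$ and translation $p_i$, this is at most $\|R_i(t_1,\theta) - R_i(t_2,\theta)\|\,\|x\| + \|p_i(t_1,\theta) - p_i(t_2,\theta)\|$, and since $b_{ij}^0$ is a bounded set (robot links have finite extent), $\|x\|$ is bounded by some radius $r_{ij}$. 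So $d_H(b_{ij}(t_1,\theta), b_{ij}(t_2,\theta)) \le r_{ij}\,\|R_i(t_1,\theta) - R_i(t_2,\theta)\| + \|p_i(t_1,\theta) - p_i(t_2,\theta)\|$.

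Next I would invoke \prettyref{ass:Spatial} and \prettyref{ass:Bounded} to control the two remaining terms. By \prettyref{ass:Spatial}, the kinematics are smooth, so $R_i$ and $p_i$ are $C^1$ (indeed smooth) functions of $(t,\theta)$; their partial derivatives $\partial_t R_i$ and $\partial_t p_i$ are therefore continuous. By \prettyref{ass:Bounded}, the feasible set of $(t,\theta)$ is bounded, hence contained in a compact set after taking closure, on which these continuous derivatives attain finite maxima. The mean value inequality then gives $\|R_i(t_1,\theta) - R_i(t_2,\theta)\| \le (\sup \|\partial_t R_i\|)\,|t_1 - t_2|$ and similarly for $p_i$. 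Combining with the previous step and taking $L_1$ to be the maximum over the finitely many triples $\langle i,j,k\rangle$ of $r_{ij}\sup\|\partial_t R_i\| + \sup\|\partial_t p_i\|$ yields the claim.

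The main obstacle is making the smoothness hypothesis on $\dist(b_{ij},o_k)$ interact cleanly with the Hausdorff estimate: \prettyref{ass:Spatial} is phrased as smoothness of $\langle i,j,k\rangle$-indexed distance functions, not directly as smoothness of the kinematic maps $M_i$, so I need to either argue that the stated smoothness implicitly presupposes smooth kinematics, or bypass $M_i$ entirely and work with $\dist(b_{ij}(t,\theta),o_k)$ as a function of $t$ directly — noting that it is a composition of the smooth map $t\mapsto b_{ij}(t,\theta)$ (parametrized smoothly via $M_i$) with the $1$-Lipschitz set-distance functional, so its $t$-derivative exists almost everywhere and is bounded on the compact feasible domain, which by the mean value inequality already gives the Lipschitz bound. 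A minor subtlety is that the feasible domain may not be closed; this is handled by passing to its closure, on which boundedness still holds, and observing that a Lipschitz bound on a dense subset extends to the closure by continuity. I do not expect either point to cause real difficulty, but they are the places where a careful write-up must be precise about which assumption is doing the work.
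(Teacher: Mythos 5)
Your proof is correct, but it takes a genuinely different and far more explicit route than the paper's. The paper's entire proof is one sentence: a differentiable function on a bounded domain is Lipschitz, so take $L_1$ to be the Lipschitz constant of $\dist(b_{ij}(t,\theta),o_k)$ in $t$, with differentiability supplied by \prettyref{ass:Spatial} and boundedness by \prettyref{ass:Bounded}. You instead factor the map through the kinematics: the set-distance functional is $1$-Lipschitz in the Hausdorff metric, the Hausdorff displacement of $b_{ij}$ is controlled by the variation of the rigid transform $(R_i,p_i)$ times the link radius, and the transform's $t$-derivative is bounded on the compact closure of the feasible domain. What your approach buys is (i) a constructive identification of what $L_1$ actually is, which is essentially the chain-rule decomposition $\FPPR{\dist}{b_{ij}}\cdot\FPPR{b_{ij}}{\Theta}\cdot\FPPR{\Theta}{t}$ that the paper only introduces later in \prettyref{sec:L1} when it needs to \emph{compute} $L_1^{ij}$ in practice; and (ii) a repair of a small imprecision in the paper's argument --- differentiability on a bounded (open) domain alone does not give a Lipschitz bound without boundedness of the derivative, which you correctly obtain by passing to the compact closure where the continuous derivative attains its maximum. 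The subtlety you flag about \prettyref{ass:Spatial} being phrased in terms of the composite distance rather than the kinematic maps is real but harmless: the paper clearly intends ``sufficiently smooth'' to mean continuously differentiable in $(t,\theta)$ jointly, and either of your two resolutions (presupposing smooth kinematics, or working directly with the composite and the a.e.\ derivative) closes it.
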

\begin{proof}
A differentiable function in a bounded domain is also Lipschitz continuous so that we can define $L_1$ as the Lipschitz constant.
\end{proof}
The above result implies that the feasibility error of the midpoint surrogate constraint is upper bounded by $L_1(T_1-T_0)/2$. Further, the feasible domain is specified by the following more strict constraint:
\begin{align*}
\dist(b_{ij}(t,\theta),o_k)\geq d_0+L_1|T_1-T_0|/2,
\end{align*}
is a subset of the true feasible domain. However, such a subset can be too restrictive and oftentimes lead to an empty feasible domain. Instead, our method only uses~\prettyref{lem:bound} as an additional safety check as illustrated in~\prettyref{fig:illusAll}, while the underlying optimizer deals with the standard constraint~\prettyref{eq:surrogate}. Note that the bound in~\prettyref{lem:bound} is not tight and there are many sophisticated upper bounds that converge superlinearly, of which a well-studied method is the Taylor model~\cite{10.1145/1276377.1276396}. Although we recommend using the Taylor model in the implementation of our method, our theoretical results merely require a linear upper bound.

\subsection{\label{sec:Penalty}Barrier Penalty Function}
To ensure our algorithm generates feasible iterations, we have to solve the NLP using a feasible interior-point method such as~\cite[Chapter~4.1]{bertsekas1997nonlinear}. These algorithms turn each inequality collision constraint into the following penalty function:
\begin{align*}
\mathcal{P}_{ijk}(t,\theta)\triangleq\mathcal{P}\left(\dist\left(b_{ij}\left(t,\theta\right),o_k\right)-d_0\right),
\end{align*}
where $\mathcal{P}$ is a sufficiently smooth, monotonically decreasing function defined on $(0,\infty)$ such that $\lim_{x\to0}\mathcal{P}(x)=\infty$ and $\lim_{x\to\infty}\mathcal{P}(x)=0$. In order to handle SIP problems, we need the following additional assumption to hold for $\mathcal{P}$:
\begin{assume}
\label{ass:Barrier}
The barrier function $\mathcal{P}$ satisfies: 
\begin{align*}
\lim_{x\to0}x\mathcal{P}(x)=\infty.
\end{align*}
\end{assume}
The most conventional penalty function is the log-barrier function $\mathcal{P}(x)=-\log(x)$, but this function violates \prettyref{ass:Barrier}. By direct verification, one could see that a valid penalty function is $\mathcal{P}(x)=-\log(x)/x$. In \cite{10.1145/1576246.1531393}, authors showed that a locally supported $\mathcal{P}$ is desirable for a spatial acceleration data structure to efficiently prune inactive constraints, for which we propose the following function:
\begin{align*}
\vcenter{\hbox{\includegraphics[width=.55\columnwidth]{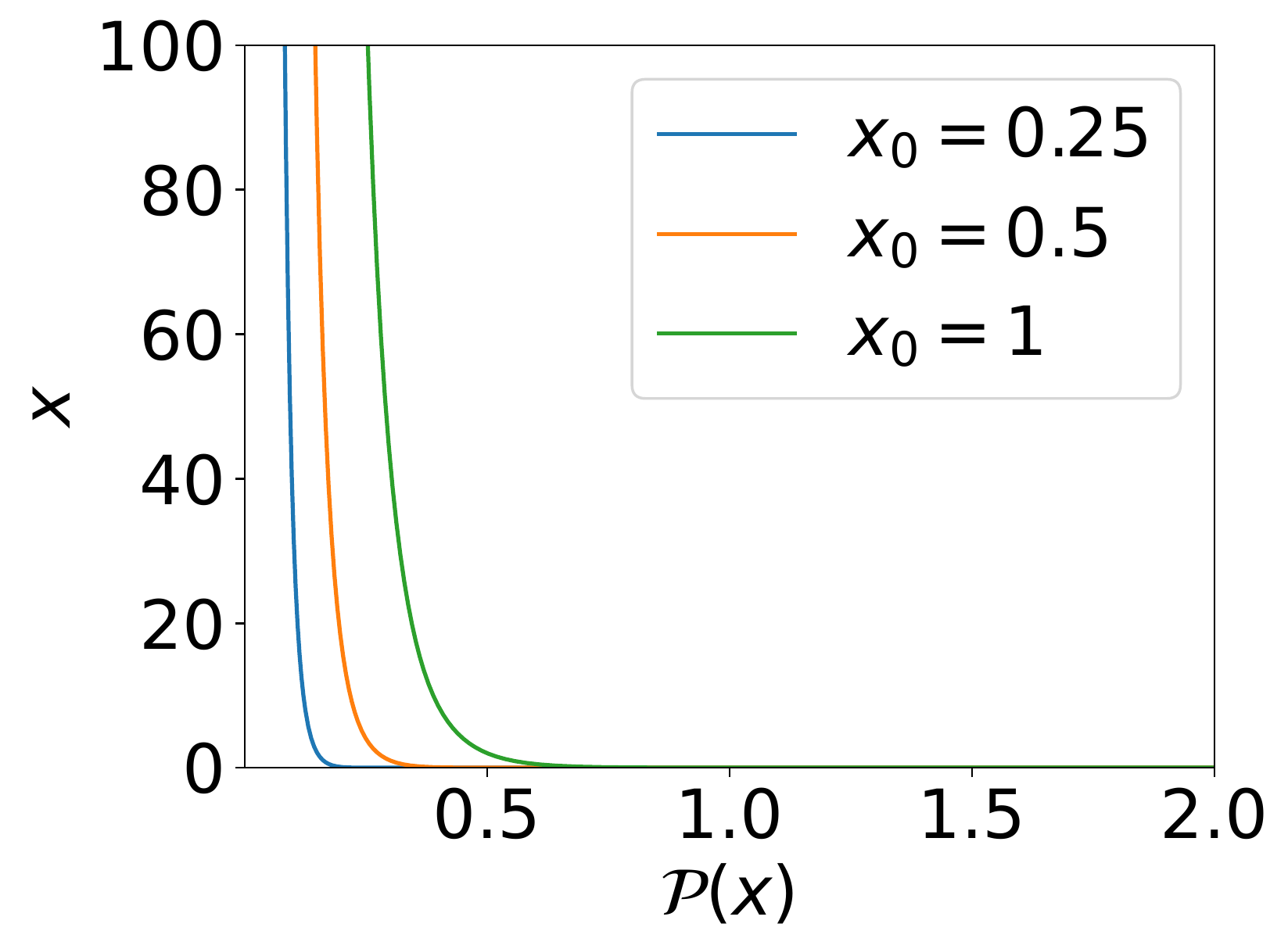}}}
\mathcal{P}(x)=\begin{cases}
\frac{(x_0-x)^3}{x^4}\quad&x\leq x_0\\
0\quad&x>x_0,
\end{cases}
\end{align*}
which is twice differentiable and locally supported within $(0,x_0]$ with $x_0$ being a small positive constant. 
The intuition behind \prettyref{ass:Barrier} lies in the integral reformulation of semi-infinite constraints. Indeed, we can transform the infinite constraints into a finite form by integrating the penalty function over semi-infinite variables, giving the following finite integral penalty function, denoted as $\bar{\mathcal{P}}$:
\begin{align}
\label{eq:integral}
\bar{\mathcal{P}}_{ijk}(T_0,T_1,\theta)\triangleq\int_{T_0}^{T_1}\mathcal{P}_{ijk}(t,\theta)dt.
\end{align}
The above integral penalty function has been considered in~\cite{pietrzykowski1969exact,conn1987exact,schattler1996interior} to solve SIP. However, their proposed algorithms are only applicable for special forms of constraints, where the integral in~\prettyref{eq:integral} has a closed-form expression. Unfortunately, such an integral in our problem does not have a closed-form solution. Instead, we propose to approximate the integral via spatial-temporal discretization. We will show that the error in our discrete approximation is controllable, which is crucial to the convergence of our proposed solver.
In order for the penalty function to guarantee feasibility, $\bar{\mathcal{P}}_{ijk}$ must tend to infinity when:
\begin{align}
\label{eq:infty}
\exists t\in[T_0,T_1]\quad\dist\left(b_{ij}\left(t,\theta\right),o_k\right)\to d_0.
\end{align}
\begin{wrapfigure}{r}{0.2\textwidth}
\vspace{-5px}
\scalebox{0.8}{
\includegraphics[width=.2\textwidth]{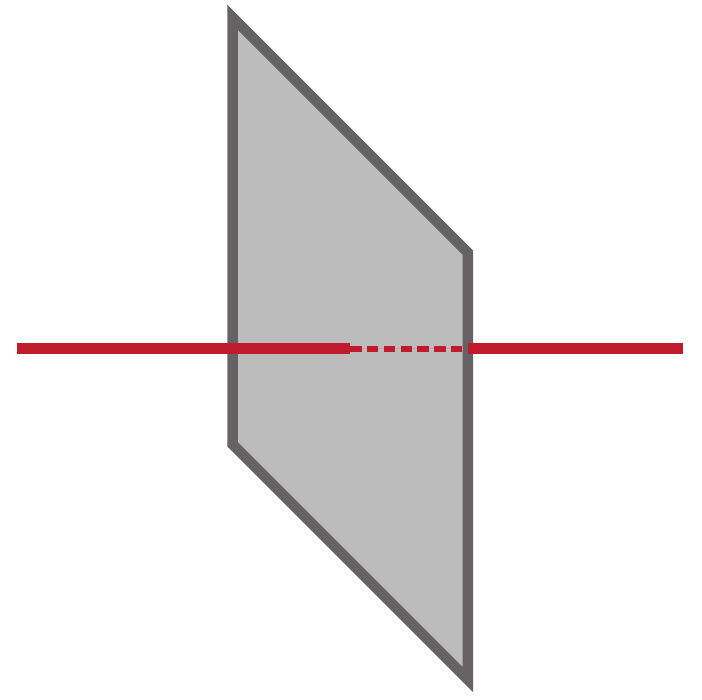}
\put(-100,40){\small$b_{ij}(0)$}
\put(- 25,40){\small$b_{ij}(1)$}
\put(-62,100){\small$o$}}
\vspace{-5px}
\end{wrapfigure}
However, the log-barrier function does not satisfy this property. As illustrated in the inset, suppose there is a straight line trajectory $b_{ij}(t)=\THREE{t}{0}{0}$ along the positive X-axis, $o$ is the YZ-plane that intersects the X-axis at $\THREE{1/2}{0}{0}$, $T=1$ and $d_0=0$, then $\bar{\mathcal{P}}$ takes the following finite value:
\begin{align*}
\bar{\mathcal{P}}_{ijk}(0,1)\triangleq\int_0^1-\log\left(\left|t-\frac{1}{2}\right|\right)dt= \log(2)+1<\infty.
\end{align*}
Instead, our \prettyref{ass:Barrier} could ensure the well-definedness of $\bar{\mathcal{P}}$ as shown in the following lemma:
\begin{lemma}
\label{lem:infty}
Suppose \prettyref{ass:Spatial}, \ref{ass:Bounded}, \ref{ass:Barrier}, and \prettyref{eq:infty} holds, then: $\bar{\mathcal{P}}_{ijk}\to\infty$.
\end{lemma}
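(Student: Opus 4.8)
The plan is to read the limiting statement \prettyref{eq:infty} as an assertion about a sequence. It provides trajectory parameters $\theta_n$, along which the barrier $\mathcal{P}_{ijk}(\cdot,\theta_n)$ is well defined (the iterates stay feasible, so $\dist(b_{ij}(\cdot,\theta_n),o_k)\geq d_0$), together with time instants $t_n\in[T_0,T_1]$ attaining $\min_{t\in[T_0,T_1]}\dist(b_{ij}(t,\theta_n),o_k)$ by compactness, such that $d_n\triangleq\dist(b_{ij}(t_n,\theta_n),o_k)-d_0\to 0^+$. Since $\mathcal{P}>0$ on $(0,\infty)$, each $\bar{\mathcal{P}}_{ijk}(T_0,T_1,\theta_n)$ is a well-defined element of $(0,\infty]$, so it suffices to exhibit a lower bound that diverges with $n$.

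First I would localize the integral \prettyref{eq:integral} to a fixed-length neighborhood of $t_n$. Since $t_n$ falls in one of the two halves of $[T_0,T_1]$, there is an interval $J_n\subseteq[T_0,T_1]$ of length $\ell_0\triangleq(T_1-T_0)/2$ having $t_n$ as an endpoint, so $|t-t_n|\leq\ell_0$ on $J_n$. Invoking \prettyref{lem:bound} --- whose Lipschitz constant $L_1$ is \emph{uniform in $\theta$}, which is precisely what \prettyref{ass:Bounded} buys --- gives $\dist(b_{ij}(t,\theta_n),o_k)-d_0\leq d_n+L_1|t-t_n|$ on $J_n$, and monotonicity of $\mathcal{P}$ turns this into the pointwise bound $\mathcal{P}_{ijk}(t,\theta_n)\geq\mathcal{P}\bigl(d_n+L_1|t-t_n|\bigr)$. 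Restricting the integral to $J_n$ and substituting $u=d_n+L_1|t-t_n|$ then produces
\begin{align*}
\bar{\mathcal{P}}_{ijk}(T_0,T_1,\theta_n)\;\geq\;\int_{J_n}\mathcal{P}\bigl(d_n+L_1|t-t_n|\bigr)\,dt\;=\;\frac{1}{L_1}\int_{d_n}^{d_n+L_1\ell_0}\mathcal{P}(u)\,du .
\end{align*}

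Finally I would read off the divergence from the behaviour of $\mathcal{P}$ near the origin. For $n$ large enough that $2d_n\leq d_n+L_1\ell_0$, nonnegativity and monotonicity of $\mathcal{P}$ give
\begin{align*}
\frac{1}{L_1}\int_{d_n}^{d_n+L_1\ell_0}\mathcal{P}(u)\,du\;\geq\;\frac{1}{L_1}\int_{d_n}^{2d_n}\mathcal{P}(u)\,du\;\geq\;\frac{d_n}{L_1}\,\mathcal{P}(2d_n)\;=\;\frac{1}{2L_1}\bigl(2d_n\bigr)\mathcal{P}(2d_n),
\end{align*}
and, as $2d_n\to0^+$, \prettyref{ass:Barrier} forces the right-hand side to $\infty$; the degenerate case $d_n=0$ follows from the same substitution together with $\int_0^{L_1\ell_0}\mathcal{P}(u)\,du=\infty$, itself implied by \prettyref{ass:Barrier} since that assumption makes $\mathcal{P}(u)\geq C/u$ near $0$ for every $C$. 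I expect the essential point to be conceptual rather than technical: one has to recognize that the finite-value counterexample in the text (where $\int\log(1/u)\,du<\infty$) is ruled out exactly by strengthening the barrier so that $x\mathcal{P}(x)\to\infty$, which is the sharp non-integrability threshold that survives the Lipschitz flattening of the distance profile near $t_n$; the remaining bookkeeping --- the one-sided choice of $J_n$ when $t_n$ sits near $\partial[T_0,T_1]$, and the uniformity of $L_1$ over $\theta$ --- is routine given \prettyref{ass:Bounded}.
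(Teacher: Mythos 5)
Your proof is correct and follows essentially the same route as the paper's: localize the integral around the near-contact time, use \prettyref{lem:bound} plus monotonicity of $\mathcal{P}$ to lower-bound the integrand, and conclude from $x\mathcal{P}(x)\to\infty$ (\prettyref{ass:Barrier}). The only difference is bookkeeping --- the paper shrinks the localization window to width $2\epsilon_1$ tied to the distance and bounds the integral by $\epsilon_1\mathcal{P}(\epsilon_1+L_1\epsilon_1)$, whereas you keep a fixed window, change variables, and truncate to $[d_n,2d_n]$ to extract the same quantity $(2d_n)\mathcal{P}(2d_n)$; your version also treats the endpoint case $t_n\in\partial[T_0,T_1]$ and the exact-contact case $d_n=0$ more explicitly than the paper's ``without loss of generality'' step.
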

\begin{proof}
By \prettyref{ass:Spatial}, we have the finite decomposition and $\bar{\mathcal{P}}_{ijk}$ is well-defined. Without loss of generality, we assume $t\in(T_0,T_1)$ so we can pick a positive $\epsilon_1$ such that $[t-\epsilon_1,t+\epsilon_1]\subseteq[T_0,T_1]$. For any $t'\in[t-\epsilon_1,t+\epsilon_1]$, by the boundedness of $t'$ and \prettyref{lem:bound}, we have:
\begin{align*}
\dist\left(b_{ij}\left(t',\theta\right),o_k\right)\leq 
\dist\left(b_{ij}\left(t,\theta\right),o_k\right)+L_1|t'-t|.
\end{align*}
Putting things together, we have:
\begin{align*}
&\bar{\mathcal{P}}_{ijk}(T_0,T_1,\theta)\geq\bar{\mathcal{P}}_{ijk}(t-\epsilon_1,t+\epsilon_1,\theta)\\
\geq&\epsilon_1\mathcal{P}(\dist\left(b_{ij}\left(t,\theta\right),o_k\right)-d_0+L_1\epsilon_1)
\geq\epsilon_1\mathcal{P}(\epsilon_1+L_1\epsilon_1),
\end{align*}
where the last inequality is due to \prettyref{eq:infty} and by choosing $\theta$ so that $\dist\left(b_{ij}\left(t,\theta\right),o_k\right)-d_0\leq\epsilon_1$. The lemma is proved by tending $\epsilon_1$ to zero and applying \ref{ass:Barrier}.
\end{proof}

\subsection{\label{sec:PrimalIPM}Feasible Interior-Point Method}
We now combine the above ideas to design a feasible interior point method for the SIP problem. We divide the bounded temporal domain into a disjoint set of intervals, $[0,T]=\bigcup_l[T_0^l,T_1^l]$, and choose the midpoint constraint as the representative. As a result, the penalty functions transform the inequality-constrained NLP into an unconstrained one as follows:
\begin{equation}
\begin{aligned}
\label{eq:penalty}
\argmin{\theta}&\mathcal{E}(\theta)\triangleq\mathcal{O}(\theta)+
\mu\sum_{ijkl}(T_1^l-T_0^l)\mathcal{P}_{ijkl}(\theta)\\
&\mathcal{P}_{ijkl}(\theta)\triangleq\mathcal{P}_{ijk}(\frac{T_0^l+T_1^l}{2},\theta),
\end{aligned}
\end{equation}
where $\mu$ is a positive weight of the barrier coefficient. Note that we weight the penalty function $\mathcal{P}_{ijkl}$ by the time span $T_1^l-T_0^l$ in order to approximate the integral form $\bar{\mathcal{P}}_{ijkl}(\theta)\triangleq\bar{\mathcal{P}}_{ijk}(T_0^l,T_1^l,\theta)$ in the sense of Riemann sum. Standard first- and second-order algorithms can be utilized to solve \prettyref{eq:penalty}, where the search direction of a first-order method $d^{(1)}$ is:
\begin{align*}
d^{(1)}\triangleq-\nabla_\theta\mathcal{E},
\end{align*}
and that of the second-order method is:
\begin{align*}
d^{(2)}\triangleq\adj\left(\nabla_\theta^2\mathcal{E}\right)^{-1}d^{(1)}.
\end{align*}
Here $\adj(\bullet)$ is an modulation function for a Hessian matrix such that $\underline{\beta}I\preceq\adj(H)\preceq\bar{\beta}I$ for some positive constants $\underline{\beta}$ and $\bar{\beta}$. After a search direction is computed, a step size $\alpha$ is adaptively selected to ensure the first Wolfe's condition:
\begin{align}
\label{eq:Wolfe}
\mathcal{E}(\theta+d\alpha)\leq\mathcal{E}(\theta)+c\left<d\alpha,\nabla_\theta\mathcal{E}\right>,
\end{align}
where $c\in(0,1)$ is a positive constant. It has been shown that if the smallest $z\in\mathbb{Z}^+$ is chosen such that $\alpha=1/z$ satisfies \prettyref{eq:Wolfe}, then the feasible interior-point method will converge to the first-order critical point of $\mathcal{E}$ \cite[Proposition~1.2.4]{bertsekas1997nonlinear}. Under the finite-precision arithmetic of a computer, we would terminate the loop of the $\theta$ update when $\|d^{(1)}\|_\infty\leq\epsilon_d$. Furthermore, we add an outer loop to reduce the duality gap by iteratively reducing $\mu$ down to a small constant $\epsilon_\mu$. The overall interior point procedure of solving inequality-constrained NLP is summarized in \prettyref{alg:FIPM}.
\begin{algorithm}[ht]
\caption{\label{alg:FIPM} Feasible Interior Point Method}
\begin{algorithmic}[1]
\Require{Feasible $\theta$, initial $\mu$, $\epsilon_\alpha$, $\epsilon_d$, $\epsilon_\mu$, $\gamma\in(0,1)$}
\Ensure{Locally optimal $\theta$}
\While{$\mu>\epsilon_\mu$}
\State $d\gets d^{(1)}$ or $d\gets d^{(2)}$
\While{$\|d\|_\infty>\epsilon_d$}
\State $\alpha,\epsilon_\alpha\gets$Line-Search($\theta,d,\epsilon_\alpha$)
\State $\theta\gets\theta+d\alpha$\label{ln:Update}
\State $d\gets d^{(1)}$ or $d\gets d^{(2)}$
\EndWhile
\State $\mu\gets\mu\gamma$
\EndWhile
\State Return $\theta$
\end{algorithmic}
\end{algorithm}

\subsection{\label{sec:Subdivision}Adaptive Subdivision}
\prettyref{alg:FIPM} is used to solve NLP instead of SIP. As analyzed in \prettyref{sec:MotionBound}, the feasible domain of NLP derived by surrogate constraints can be larger than that of SIP. To ensure feasibility in terms of semi-infinite constraints, we utilize the motion bound \prettyref{lem:bound} and add an additional safety check in the line search procedure as summarized in \prettyref{alg:safety}. \prettyref{alg:safety} uses a more conservative feasibility condition that shrinks the feasible domain by $\psi(T_1^l-T_0^l)$. The motion bound \prettyref{lem:bound} immediately indicates that $\psi(x)=L_1x/2$. However, our theoretical analysis requires an even more conservative $\psi$ defined as:
\begin{align}
\label{eq:psi}
\psi(x)=L_1x/2+L_2x^\eta,
\end{align}
where $L_2$ and $\eta$ are positive constants. We choose to only accept $\alpha$ found by the line search algorithm when $\theta+d\alpha$ passes the safety check. On the other hand, the failure of a safety check indicates that the surrogate constraint is not a sufficiently accurate approximation of the semi-infinite constraints and a subdivision is needed. We thus adopt a midpoint subdivision, dividing $[T_0^l,T_1^l]$ into two pieces $[T_0^l,(T_0^l+T_1^l)/2]$ and $[(T_0^l+T_1^l)/2,T_1^l]$. This procedure is repeated until $\alpha$ found by the line search algorithm passes the safety check. Note that the failure of safety check can be due to two different reasons: 1) the step size $\alpha$ is too large; 2) more subdivisions are needed. Since the first reason is easier to check and fix, so we choose to always reduce $\alpha$ when safety check fails, until some lower bound of $\alpha$ is reached. We maintain such a lower bound denoted as $\epsilon_\alpha$. Our line-search method is summarized in \prettyref{alg:search}. Our SIP solver is complete by combining \prettyref{alg:FIPM}, \ref{alg:safety}, and \ref{alg:search}.
\begin{algorithm}[ht]
\caption{\label{alg:safety} Safety-Check($\theta$)}
\begin{algorithmic}[1]
\Ensure{$\left<i,j,k,l\right>$ such that $\mathcal{P}_{ijkl}$ violates safety condition}
\For{Each penalty term $\mathcal{P}_{ijkl}$}
\If{$\dist\left(b_{ij}\left(\frac{T_0^l+T_1^l}{2},\theta\right),o_k\right)\leq d_0+\psi(T_1^l-T_0^l)$}
\State Return $\left<i,j,k,l\right>$
\EndIf
\EndFor
\State Return None
\end{algorithmic}
\end{algorithm}

\begin{algorithm}[ht]
\caption{\label{alg:search} Line-Search($\theta,d,\epsilon_\alpha$)}
\begin{algorithmic}[1]
\Require{Initial $\alpha_0$, $\gamma\in(0,1)$}
\Ensure{Step size $\alpha$ and updated $\epsilon_\alpha$}
\State $\alpha\gets\alpha_0$
\State $\theta'\gets\theta+d\alpha$
\State $\left<i,j,k,l\right>\gets$Safe-Check($\theta'$)
\While{$\left<i,j,k,l\right>\neq$None $\lor$ $\theta'$ violates \prettyref{eq:Wolfe}}
\If{$\left<i,j,k,l\right>\neq$None}
\If{$\alpha\leq\epsilon_\alpha$}
\State $\epsilon_\alpha\gets\gamma\epsilon_\alpha$
\State Subdivide($\mathcal{P}_{ijkl}$) and re-evaluate $\mathcal{E}(\theta)$
\State $d\gets d^{(1)}$ or $d\gets d^{(2)}$
\Else
\State $\alpha\gets\gamma\alpha$
\EndIf
\Else
\State $\alpha\gets\gamma\alpha$
\EndIf
\State $\theta'\gets\theta+d\alpha$
\State $\left<i,j,k,l\right>\gets$Safe-Check($\theta'$)
\EndWhile
\State Return $\alpha, \epsilon_\alpha$
\end{algorithmic}
\end{algorithm}
\section{Convergence Analysis}
In this section, we argue that our~\prettyref{alg:FIPM} is suited for solving SIP problems~\prettyref{eq:prob} by establishing three properties. First, the following result is straightforward and shows that our algorithm generates feasible iterations:
\begin{theorem}
\label{thm:feasible}
Under \prettyref{ass:Spatial} and \prettyref{ass:Bounded}, \revised{if \prettyref{alg:FIPM} can find a positive $\alpha$ and update $\theta$ in~\prettyref{ln:Update} during an iteration, then the updated $\theta$ is a feasible solution to \prettyref{eq:prob}.}
\end{theorem}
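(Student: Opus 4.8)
The plan is to show that once \prettyref{alg:FIPM} accepts a step in \prettyref{ln:Update}, the new $\theta$ satisfies every semi-infinite constraint in \prettyref{eq:prob}, i.e. $\dist(b_i(t,\theta),o)\geq d_0$ for all $i$ and all $t\in[0,T]$. First I would reduce the set-valued distance constraint to the spatially discretized one: by \prettyref{ass:Spatial} we have $b_i=\bigcup_j b_{ij}$ and $o=\bigcup_k o_k$, so $\dist(b_i(t,\theta),o)=\min_{j,k}\dist(b_{ij}(t,\theta),o_k)$, and it suffices to bound each pairwise distance from below by $d_0$. Fix an arbitrary triple $\left<i,j,k\right>$ and an arbitrary $t\in[0,T]$. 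Since $[0,T]=\bigcup_l[T_0^l,T_1^l]$, there is an interval index $l$ with $t\in[T_0^l,T_1^l]$, and this interval carries a penalty term $\mathcal{P}_{ijkl}$ in the current discretization.

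The crucial step is that the accepted step passed the safety check in \prettyref{alg:search}, so by \prettyref{alg:safety} the returned tuple was \textbf{None}, meaning the conservative feasibility condition holds for \emph{every} penalty term at $\theta$; in particular
\begin{align*}
\dist\left(b_{ij}\left(\frac{T_0^l+T_1^l}{2},\theta\right),o_k\right)> d_0+\psi(T_1^l-T_0^l).
\end{align*}
Now I would invoke \prettyref{lem:bound} with $t_1=t$ and $t_2=(T_0^l+T_1^l)/2$, both lying in the bounded feasible time domain by \prettyref{ass:Bounded}, to get
\begin{align*}
\dist(b_{ij}(t,\theta),o_k)\geq \dist\left(b_{ij}\left(\tfrac{T_0^l+T_1^l}{2},\theta\right),o_k\right)-L_1\left|t-\tfrac{T_0^l+T_1^l}{2}\right|\geq \dist\left(b_{ij}\left(\tfrac{T_0^l+T_1^l}{2},\theta\right),o_k\right)-\frac{L_1(T_1^l-T_0^l)}{2},
\end{align*}
since the distance from any point of an interval to its midpoint is at most half the interval length. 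Combining the last two displays, and using that $\psi(x)=L_1x/2+L_2x^\eta\geq L_1x/2$ from \prettyref{eq:psi} with $L_2,\eta>0$, yields $\dist(b_{ij}(t,\theta),o_k)> d_0+L_2(T_1^l-T_0^l)^\eta\geq d_0$. As $\left<i,j,k\right>$ and $t$ were arbitrary, taking the minimum over $j,k$ recovers $\dist(b_i(t,\theta),o)\geq d_0$ for all $i$ and $t\in[0,T]$, which is exactly feasibility for \prettyref{eq:prob}.

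The argument is essentially bookkeeping once the right pieces are lined up; I do not anticipate a serious obstacle. The one point to be careful about is matching the discretization state to the moment of the update: the safety check in \prettyref{alg:search} is performed on the trial point $\theta'=\theta+d\alpha$ that eventually becomes the accepted iterate, and subdivisions only refine the intervals (they never coarsen), so the partition $[0,T]=\bigcup_l[T_0^l,T_1^l]$ used in the final safety check is a genuine cover of $[0,T]$ and every pairwise index $\left<i,j,k\right>$ that is geometrically relevant is represented by a penalty term; I would state this explicitly. A secondary subtlety is the boundary case $t\in\{T_0^l,T_1^l\}$, which is harmless since the intervals are closed and the bound $|t-(T_0^l+T_1^l)/2|\leq (T_1^l-T_0^l)/2$ still holds, and the strict-versus-nonstrict inequality, which only helps us. Everything else follows from \prettyref{lem:bound} and the definition of $\psi$.
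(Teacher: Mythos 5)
Your proof is correct and follows essentially the same route as the paper's: passing the safety check gives $\dist(b_{ij}((T_0^l+T_1^l)/2,\theta),o_k)>d_0+\psi(T_1^l-T_0^l)$, and combining \prettyref{lem:bound} with $\psi(x)\geq L_1x/2$ yields $\dist(b_{ij}(t,\theta),o_k)>d_0$ for every $t$ in the interval. The extra bookkeeping you add (reducing the set distance to the pairwise minimum, and noting the partition covers $[0,T]$) is a harmless elaboration of the paper's closing remark that the safety check covers every spatial-temporal constraint subset.
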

\begin{proof}
A step size generated by \prettyref{alg:search} must pass the safety check, which in turn ensures that:
\begin{align*}
&\dist\left(b_{ij}\left(\frac{T_0^l+T_1^l}{2},\theta\right),o_k\right)\\
\geq&d_0+\psi(T_1-T_0)>d_0+L_1\frac{T_1^l-T_0^l}{2},
\end{align*}
where we have used our choice of $\psi$ in \prettyref{eq:psi}. From \prettyref{lem:bound}, we have for any $t\in[T_0^l,T_1^l]$ that:
\begin{align*}
&\dist\left(b_{ij}\left(t,\theta\right),o_k\right)\\
\geq&\dist\left(b_{ij}\left(\frac{T_0^l+T_1^l}{2},\theta\right),o_k\right)-L_1\left|t-\frac{T_0^l+T_1^l}{2}\right|>d_0.
\end{align*}
Since \prettyref{alg:search} would check every spatial-temporal constraint subset, the proof is complete.
\end{proof}
\prettyref{thm:feasible} depends on the fact that \prettyref{alg:FIPM} does generate an iteration after a finite amount of computation. However, the finite termination of \prettyref{alg:FIPM} is not obvious for two reasons. First, the line search \prettyref{alg:search} can get stuck in the while loop and never pass the safety check. Second, even if the line search algorithm always terminate finitely, the inner while loop in \prettyref{alg:FIPM} can get stuck forever. This is because a subdivision would remove one and contribute two more penalty terms of form:  $(T_1^l-T_0^l)\mathcal{P}_{ijkl}$ to $\mathcal{E}(\theta)$, which changes the landscape of objective function. As a result, it is possible for a subdivision to increase $\|d\|_\infty$ and \prettyref{alg:FIPM} can never bring $\|d\|_\infty$ down to user-specified $\epsilon_d$. However, the following result shows that neither of these two cases would happen by a proper choice of $\eta$:
\begin{theorem}
\label{thm:termination}
Under \prettyref{ass:Spatial}, \ref{ass:Bounded}, \ref{ass:Barrier}, and suppose $\eta<1/6$, \prettyref{alg:FIPM} terminates after a finite number of subdivisions.
\end{theorem}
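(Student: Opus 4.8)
The plan is to argue by contradiction: suppose the algorithm performs infinitely many subdivisions. There are two mechanisms that could cause this, matching the two failure modes flagged in the text, so I would handle them in turn. First, I would show the line search \prettyref{alg:search} cannot loop forever at a \emph{fixed} subdivision level. As $\alpha\to0$, the point $\theta+d\alpha\to\theta$; since the current iterate $\theta$ is strictly feasible for all surrogate constraints (each $\mathcal{P}_{ijkl}$ is finite, so $\dist(b_{ij}(\cdot),o_k)>d_0$), and since there are finitely many penalty terms at the current level, there is a uniform gap $\delta>0$ with $\dist(b_{ij}((T_0^l+T_1^l)/2,\theta),o_k)\ge d_0+\delta$. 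Once $\alpha$ is small enough that the displacement in distance is below $\delta-\psi(T_1^l-T_0^l)$ — which is possible only if $\psi(T_1^l-T_0^l)<\delta$, i.e. the interval is already fine enough — the safety check passes, and the Wolfe condition \prettyref{eq:Wolfe} is satisfied for small $\alpha$ by the standard Armijo argument since $d$ is a descent direction. So either the line search terminates, or it triggers a subdivision (the branch $\alpha\le\epsilon_\alpha$), shrinking $\epsilon_\alpha$ by $\gamma$ and halving an offending interval. Hence infinitely many subdivisions is the only way to fail, and I would derive a contradiction from that.

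The heart of the argument is the second mechanism. When interval $[T_0^l,T_1^l]$ (length $h$) is subdivided, the single term $h\,\mathcal{P}_{ijkl}$ in $\mathcal{E}$ is replaced by two terms each weighted by $h/2$ and evaluated at the two sub-midpoints. I would control how much this perturbs the gradient $\nabla_\theta\mathcal{E}$. Each new sub-midpoint is within $h/4$ in time of the old midpoint; by \prettyref{lem:bound} (and \prettyref{ass:Spatial}, \prettyref{ass:Bounded}, which give Lipschitz/bounded-derivative control on $\dist$, $\mathcal{P}$, and $b_{ij}$ on the bounded feasible domain), the change in the contribution of this family of penalty terms to $\mathcal{E}$ and to $\nabla_\theta\mathcal{E}$ is $O(h^{\,p})$ for some $p>1$ coming from the Riemann-sum/midpoint consistency — the midpoint rule error and the Lipschitz-in-$t$ bound each contribute a power of $h$. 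The crucial point is that the safety-check failure that \emph{forced} this subdivision certifies the interval was \emph{not} already redundant: the safety margin $\psi(h)=L_1h/2+L_2h^\eta$ was violated, which pins the distance at the midpoint into a band of width $\Theta(h^\eta)$ above $d_0$, and there the barrier $\mathcal{P}$ and its derivative are large — of order governed by $\mathcal{P}(\Theta(h^\eta))$ and $\mathcal{P}'(\Theta(h^\eta))$. I would weigh the \emph{gain} from this term (it is being refined precisely where the surrogate is inaccurate, so the new, finer surrogate value moves away from the dangerous band) against the worst-case \emph{perturbation} $O(h^p)$ to the rest of the landscape. Choosing $\eta<1/6$ is exactly what makes the bookkeeping close: it guarantees that each forced subdivision strictly decreases a suitable merit quantity (e.g. $\mathcal{E}$ itself, or $\mathcal{E}$ plus a summable correction accounting for the reweighting), by an amount bounded below in terms of $h$, so that an infinite sequence of subdivisions drives $\mathcal{E}\to-\infty$, contradicting \prettyref{ass:Bounded} together with continuity of $\mathcal{O}$ and nonnegativity of $\mathcal{P}$.

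Concretely, the steps in order are: (i) fix notation for a subdivision event and quantify the displacement of sub-midpoints ($\le h/4$ in $t$); (ii) use \prettyref{ass:Spatial}--\prettyref{ass:Bounded} to get uniform Lipschitz constants for $\dist(b_{ij}(\cdot,\cdot),o_k)$ in both $t$ and $\theta$, and for $\mathcal{P}$, $\mathcal{P}'$, $\mathcal{P}''$ on any region bounded away from the singularity; (iii) show the line search terminates at a fixed level (Armijo + strict feasibility + finitely many terms); (iv) bound the change $\Delta\mathcal{E}$ and $\Delta(\nabla_\theta\mathcal{E})$ due to one subdivision as a sum of a midpoint-quadrature error term $O(h^{3})$-type and a barrier-magnitude term tied to the violated safety margin; (v) show that a safety-check failure forces the midpoint distance below $d_0+\psi(h)$ yet above $d_0$ (strict feasibility is maintained), so $\mathcal{P}$ at the old midpoint is $\gtrsim \mathcal{P}(\psi(h))$ while the refined surrogate, being closer to feasibility witnesses, is smaller — yielding a strict decrease of magnitude $\gtrsim \mu\, h\,\bigl(\mathcal{P}(\psi(h))-\text{refined}\bigr)$; (vi) verify that with $\eta<1/6$ this decrease dominates the $O(h^{p})$ perturbation from reweighting, so each subdivision nets a strict, non-vanishing decrease in a bounded-below merit function; (vii) conclude finiteness. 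The main obstacle I anticipate is step (v)--(vi): making ``the refined surrogate is enough smaller'' quantitative requires carefully relating the violated margin $\psi(h)$, the barrier's blow-up rate near $d_0$ (constrained by \prettyref{ass:Barrier}), and the midpoint-rule consistency, and it is precisely the interplay of these three rates that forces the specific threshold $\eta<1/6$; getting the exponent accounting exactly right — rather than merely ``some $\eta$ small enough'' — is the delicate part.
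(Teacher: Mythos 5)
Your skeleton (contradiction from infinitely many subdivisions, plus the observation that the line search at a fixed subdivision level must either succeed for small $\alpha$ or trigger a subdivision) matches the paper, but the core of your plan --- steps (v)--(vi) --- has a genuine gap, and it is precisely the step the paper's appendix is built to circumvent. You claim each forced subdivision strictly decreases $\mathcal{E}$ (or a corrected merit) by $\gtrsim\mu\,h\,(\mathcal{P}(\psi(h))-\text{refined})$ because ``the refined surrogate \ldots is smaller.'' This is wrong in sign in general: a subdivision is triggered exactly because the coarse midpoint may be hiding a closer approach to $o_k$ somewhere inside $[T_0^l,T_1^l]$, so the two new sub-midpoints can be \emph{closer} to the obstacle than the old one, and replacing $h\,\mathcal{P}_{ijkl}$ by the two half-weighted terms can \emph{increase} $\mathcal{E}(\theta)$ --- the paper explicitly warns that subdivision changes the landscape and can increase $\|d\|_\infty$. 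Moreover, the safety-check failure occurs at the rejected trial point $\theta'=\theta+d\alpha$, not at the current iterate $\theta$ where $\mathcal{E}$ is re-evaluated after subdividing, so your ``violated band of width $\Theta(h^\eta)$'' does not pin down the penalty value that actually enters the merit function. Finally, since $\mathcal{E}$ is redefined at every subdivision, you cannot telescope per-event decreases into $\mathcal{E}\to-\infty$ without first exhibiting a comparison functional that is \emph{invariant} under subdivision; your ``summable correction'' is exactly this missing object, and it is never constructed.

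The paper supplies that object: the hybrid penalty $\hat{\mathcal{P}}(\epsilon_2)$, which uses the midpoint surrogate on intervals longer than $\epsilon_2$ and the exact integral $\int_{T_0^l}^{T_1^l}\mathcal{P}_{ijk}(t)\,dt$ on shorter ones. It becomes invariant to further subdivision after finitely many steps (\prettyref{lem:invariance}); it differs from $\tilde{\mathcal{P}}$ by $\mathbf{O}(\epsilon_2^{1-5\eta})$ in value and $\mathbf{O}(\epsilon_2^{1-6\eta})$ in gradient (\prettyref{lem:valueBound}, \prettyref{lem:gradientBound}) --- this is where $\eta<1/6$ actually enters, through the blow-up rate of $\mathcal{P}''$ evaluated at distance $L_2(T_1^l-T_0^l)^\eta$ guaranteed by the safety check, not through a gain-versus-quadrature-error tradeoff; hence $\hat{\mathcal{E}}=\mathcal{O}+\mu\hat{\mathcal{P}}(\epsilon_2)$ strictly decreases along the Wolfe iterates once $\epsilon_2$ is small enough that the gradient discrepancy is dominated by the Armijo decrease. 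The final contradiction also runs in the opposite direction from yours: infinitely many subdivisions force $\hat{\mathcal{P}}(\epsilon_2)$ to be \emph{unbounded above} or some iterate to be unsafe (\prettyref{lem:unboundedEpsilon}), which is incompatible with a decreasing, finite $\hat{\mathcal{E}}$ along safe iterates. Without the invariant hybrid functional and the gradient-discrepancy bound, your plan cannot be completed as written.
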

\begin{proof}
See \prettyref{sec:termination}.
\end{proof}
\prettyref{thm:termination} shows the well-definedness of \prettyref{alg:FIPM}, which aims at solving the NLP \prettyref{eq:penalty} instead of the original \prettyref{eq:prob}. Our final result bridges the gap by showing that the first-order optimality condition of \prettyref{eq:penalty} approaches that of \prettyref{eq:prob} by a sufficiently small choice of $\mu$ and $\epsilon_\mu$:
\begin{theorem}
\label{thm:optimality}
We take \prettyref{ass:Spatial}, \ref{ass:Bounded}, \ref{ass:Barrier}, \ref{ass:GMFCQ} and suppose $\eta<1/6$. If we run \prettyref{alg:FIPM} for infinite number of iterations using null sequences $\{\mu^k\}$ and $\{\epsilon_d^k\}$, where $k$ is the iteration number, then we get a solution sequence $\{\theta^k\}$ such that every accumulation point $\theta^0$ satisfies the first-order optimality condition of \prettyref{eq:prob}.
\end{theorem}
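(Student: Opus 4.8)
The plan is to treat the iterate $\theta^r$ produced when the inner loop of \prettyref{alg:FIPM} stops at outer iteration $r$ (I write $r$ for the iteration number, reserving $k$ for the obstacle index) as an \emph{approximate} KKT point of the barrier subproblem \prettyref{eq:penalty}, to read discrete multiplier estimates off the barrier gradient, to pack them into Radon measures on $[0,T]$, and then to pass to the limit along a subsequence, with \prettyref{ass:GMFCQ} providing the essential uniform bound on the total multiplier mass. By \prettyref{thm:termination} each outer iteration ends with a finite partition $[0,T]=\bigcup_l[T_0^{l,r},T_1^{l,r}]$ and $\|d\|_\infty\le\epsilon_d^r$; since $\underline{\beta}I\preceq\adj(\nabla_\theta^2\mathcal{E})\preceq\bar{\beta}I$, this forces $\|\nabla_\theta\mathcal{E}(\theta^r)\|_\infty\le C\epsilon_d^r$ for a fixed constant $C$ (with $C=1$ for the first-order variant). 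Setting $\lambda_{ijkl}^r\triangleq-\mu^r(T_1^{l,r}-T_0^{l,r})\mathcal{P}'\big(\dist(b_{ij}(\tfrac{T_0^{l,r}+T_1^{l,r}}{2},\theta^r),o_k)-d_0\big)\ge0$ (nonnegative as $\mathcal{P}$ decreases), this reads
\begin{align*}
\Big\|\nabla_\theta\mathcal{O}(\theta^r)-\sum_{ijkl}\lambda_{ijkl}^r\,\nabla_\theta\dist\big(b_{ij}(\tfrac{T_0^{l,r}+T_1^{l,r}}{2},\theta^r),o_k\big)\Big\|_\infty\le C\,\epsilon_d^r .
\end{align*}
Each $\theta^r$ is feasible for \prettyref{eq:prob} by \prettyref{thm:feasible} and lies in a compact set by \prettyref{ass:Bounded}, so I extract a subsequence $\theta^r\to\theta^0$, still feasible by continuity of $\dist$. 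Writing the estimates as the finite atomic measures $\nu_{ijk}^r\triangleq\sum_l\lambda_{ijkl}^r\,\delta_{(T_0^{l,r}+T_1^{l,r})/2}$ on $[0,T]$ (there are finitely many triples $\langle i,j,k\rangle$ by \prettyref{ass:Spatial}), the displayed bound becomes the Riemann-sum relation $\nabla_\theta\mathcal{O}(\theta^r)=\sum_{ijk}\int_0^T\nabla_\theta\dist(b_{ij}(t,\theta^r),o_k)\,d\nu_{ijk}^r(t)+O(\epsilon_d^r)$.

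Two claims then finish the argument. First, \emph{asymptotic complementary slackness}: if $\dist(b_{ij}(t^\star,\theta^0),o_k)>d_0$, then on a fixed open neighborhood $N$ of $t^\star$ all midpoint distances exceed $d_0+\delta$ for large $r$, so $\mathcal{P}'$ is bounded there and $\nu_{ijk}^r(N)\le\mu^r|N|\sup_{[\delta,\infty)}|\mathcal{P}'|\to0$; by lower semicontinuity of mass on open sets, every weak-$*$ limit $\nu_{ijk}^0$ vanishes on $N$, hence is supported on the active set $\{t:\dist(b_{ij}(t,\theta^0),o_k)=d_0\}$. Second, a \emph{uniform mass bound}: \prettyref{ass:GMFCQ} supplies a unit direction $\bar d$ with $\nabla_\theta\dist(b_{ij}(t,\theta^0),o_k)^\top\bar d\ge c>0$ at every active $\langle i,j,k,t\rangle$, and joint continuity of $\nabla_\theta\dist$ in $(t,\theta)$ propagates this to $\ge c/2$ at all midpoints with distance below $d_0+\delta$ once $\theta^r$ is near $\theta^0$, while the mass of midpoints with distance above $d_0+\delta$ is $o(1)$ by the first claim. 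Taking the inner product of the Riemann-sum relation with $\bar d$ gives $\tfrac{c}{2}\sum_{ijk}\nu_{ijk}^r([0,T])\le\nabla_\theta\mathcal{O}(\theta^r)^\top\bar d+O(\epsilon_d^r)+o(1)$, which is bounded since $\{\theta^r\}$ is bounded and $\nabla_\theta\mathcal{O}$ is continuous.

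It remains to pass to the limit. The mass bound keeps $\{\nu_{ijk}^r\}_r$ in a fixed ball of the Radon measures on the compact interval $[0,T]$, so a further subsequence yields $\nu_{ijk}^r\rightharpoonup^*\nu_{ijk}^0\ge0$ for every one of the finitely many triples; as $\theta^r\to\theta^0$ and $\nabla_\theta\dist(b_{ij}(\cdot,\cdot),o_k)$ is continuous on a compact set, the integrands converge uniformly in $t$, so the Riemann-sum relation passes to the limit and, using $\epsilon_d^r\to0$,
\begin{align*}
\nabla_\theta\mathcal{O}(\theta^0)=\sum_{ijk}\int_0^T\nabla_\theta\dist(b_{ij}(t,\theta^0),o_k)\,d\nu_{ijk}^0(t),\qquad\nu_{ijk}^0\ge0,
\end{align*}
with each $\nu_{ijk}^0$ supported on the active set by the first claim. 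Since $\theta^0\in\mathbb{R}^{n}$, a Carath\'eodory reduction replaces each $\nu_{ijk}^0$ by an atomic measure with at most $n$ atoms, producing finitely many active indices $\langle i_m,j_m,k_m,t_m\rangle$ and multipliers $\lambda_m\ge0$ with $\dist(b_{i_mj_m}(t_m,\theta^0),o_{k_m})=d_0$ and $\nabla_\theta\mathcal{O}(\theta^0)=\sum_m\lambda_m\nabla_\theta\dist(b_{i_mj_m}(t_m,\theta^0),o_{k_m})$ --- precisely the first-order optimality condition of \prettyref{eq:prob}. I expect the main obstacle to be the uniform mass bound: every subdivision injects new penalty terms with vanishingly small Riemann weights $T_1^l-T_0^l$, so the index set carrying multiplier mass grows without bound in $r$, and it is exactly \prettyref{ass:GMFCQ} together with this Riemann weighting that prevents the aggregate mass from diverging.
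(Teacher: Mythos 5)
Your proof is correct, but it takes a genuinely different route from the paper's. The paper argues by contradiction on the primal side: if first-order optimality failed at $\theta^0$, GMFCQ would supply a direction $D_\theta^3$ that simultaneously decreases $\mathcal{O}$ and increases every nearly-active distance (\prettyref{lem:D3}); after classifying each constraint as either inactive-with-margin or nearly-active-but-improving along $D_\theta^3$ (\prettyref{lem:category}, extended to a neighborhood in \prettyref{lem:D3Neighbor}) and letting $\mu^k\to0$ kill the inactive contributions, it concludes $\left<D_\theta^3,\nabla_\theta\mathcal{E}\right><-\epsilon_{15}$ uniformly near $\theta^0$ (\prettyref{lem:gradientBoundWay}), contradicting $\epsilon_d^k\to0$. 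You instead argue constructively on the dual side: you read approximate multipliers off the barrier gradient, package them as atomic measures on $[0,T]$, prove asymptotic complementary slackness, and use GMFCQ in its classical role of bounding the total multiplier mass, so that weak-$*$ compactness yields a measure-valued KKT system at $\theta^0$; this implies the paper's directional optimality condition (for any $D_\theta$ with $\left<D_\theta,\nabla_\theta\dist\right>\geq0$ on the active set, integrating against the nonnegative limit measures gives $\left<D_\theta,\nabla_\theta\mathcal{O}(\theta^0)\right>\geq0$), and is in fact stronger since it exhibits an explicit dual certificate --- the Carath\'eodory reduction is optional for the stated conclusion. Both proofs hinge on the same two ingredients, GMFCQ and the near/far dichotomy for constraints, with $\mu^k\to0$ suppressing the far terms. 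Two small points to tighten: your step ``continuity propagates the GMFCQ bound to $\geq c/2$ at all midpoints with distance below $d_0+\delta$'' silently uses the compactness/subsequence argument the paper spells out in \prettyref{lem:category} (pointwise continuity at each active $t$ does not by itself yield a uniform $\delta$ and a uniform lower bound), and $\sup_{[\delta,\infty)}|\mathcal{P}'|<\infty$ relies on the monotonicity of $|\nabla_x\mathcal{P}|$ that the paper also assumes implicitly in \prettyref{lem:valueBound}. What your route buys is explicit multipliers and a sharper KKT statement; what the paper's route buys is brevity and the avoidance of measure-theoretic machinery.
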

\begin{proof}
See \prettyref{sec:optimality}.
\end{proof}
\section{\label{sec:implementation}Realization on Articulated Robots}
We introduce two versions of our method. In our first version, we assume both the robot and the environmental geometries are discretized using triangular meshes. Although triangular meshes can represent arbitrary concave shapes, they requires a large number of elements leading to prohibitive overhead even using the acceleration techniques introduced in~\prettyref{sec:acceleration}. Therefore, our second version reduces the number of geometric primitives by approximating each robot link and obstacle with a single convex hull~\cite{dai2018synthesis,amice2022finding} or multiple convex hulls via a convex decomposition~\cite{lien2004approximate}. In other words, the $b_{ij}$ and $o_k$ in our method can be a moving point, edge, triangle, or general convex hull. \revised{In our first version, we need to ensure the two triangle meshes are collision-free. To this end, it suffices to ensure the distances between every pair of edges and every pair of vertex and triangle are larger than $d_0$~\cite{1730806}.} In our second version, we need to ensure the distance between every convex-convex pair is larger than $d_0$. However, it is known that edge-edge or convex-convex distance functions are not differentiable. We follow~\cite{6710113} to resolve this problem by bulging each edge or convex hull using curved surfaces, making them strictly convex with well-defined derivatives. In this section, we present technical details for a practical realization of our method to generate trajectories of articulated robots with translational and hinge rotational joints.

\begin{figure*}[th]
\centering
\scalebox{1.0}{
\setlength{\tabcolsep}{2px}
\begin{tabular}{cccc}
\frame{\includegraphics[height=.24\linewidth,trim=20cm 5cm 23cm 10cm,clip]{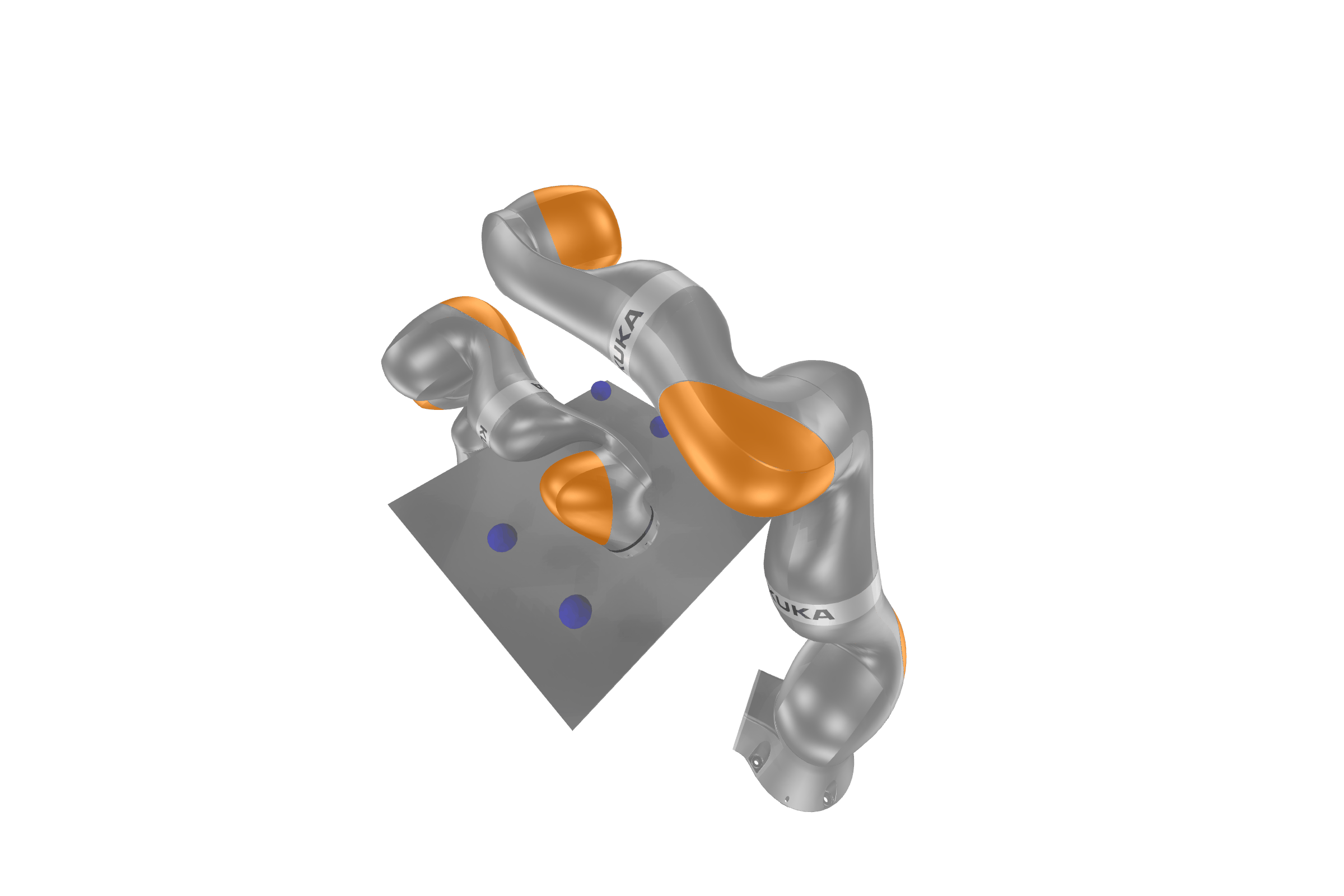}}
\put(-75,-10){(a) dimension=12} &
\frame{\includegraphics[height=.24\linewidth,trim=5cm 8cm 5cm 3cm,clip]{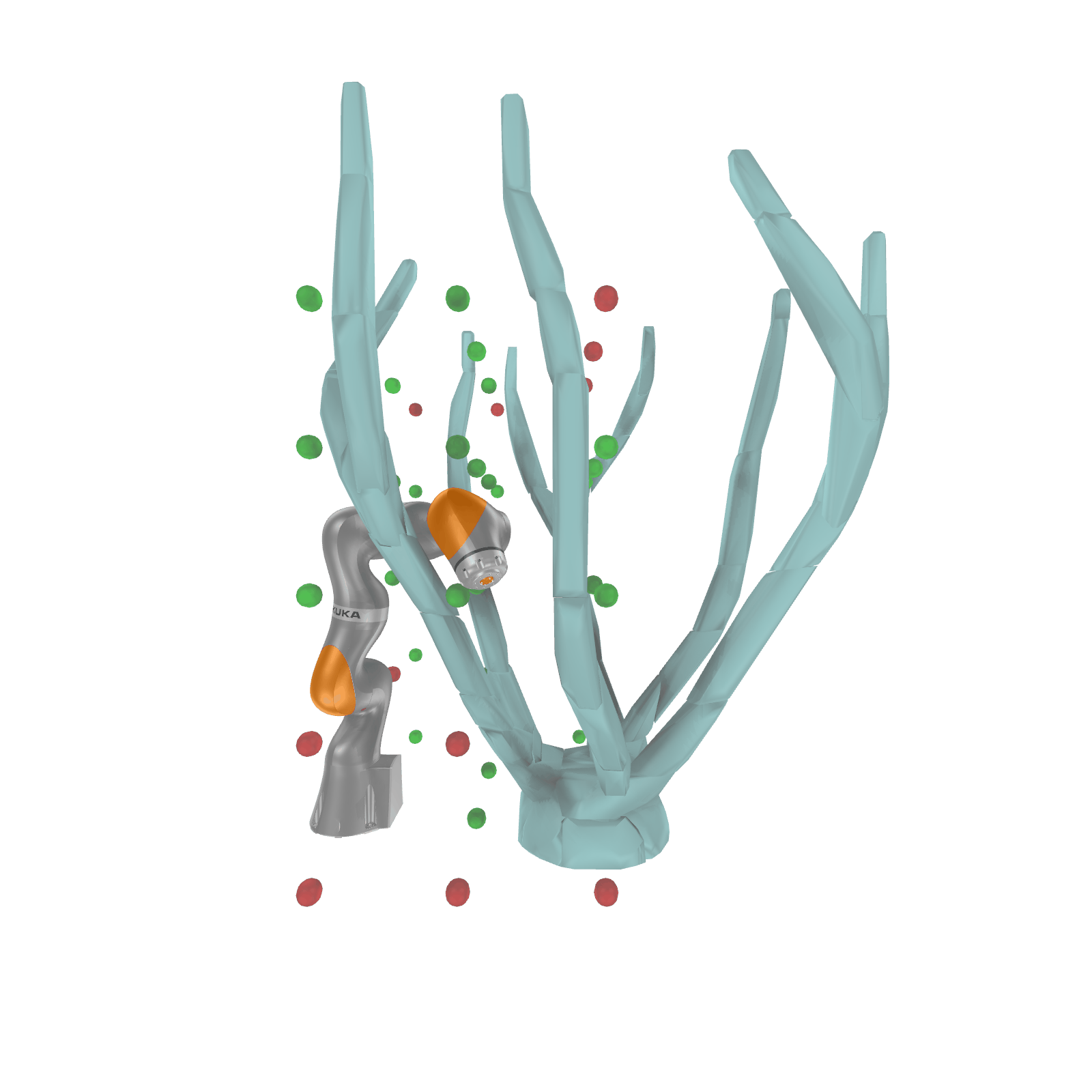}}
\put(-75,-10){(b) dimension=6} &
\frame{\includegraphics[height=.24\linewidth,trim=11cm 3cm 10cm 0cm,clip]{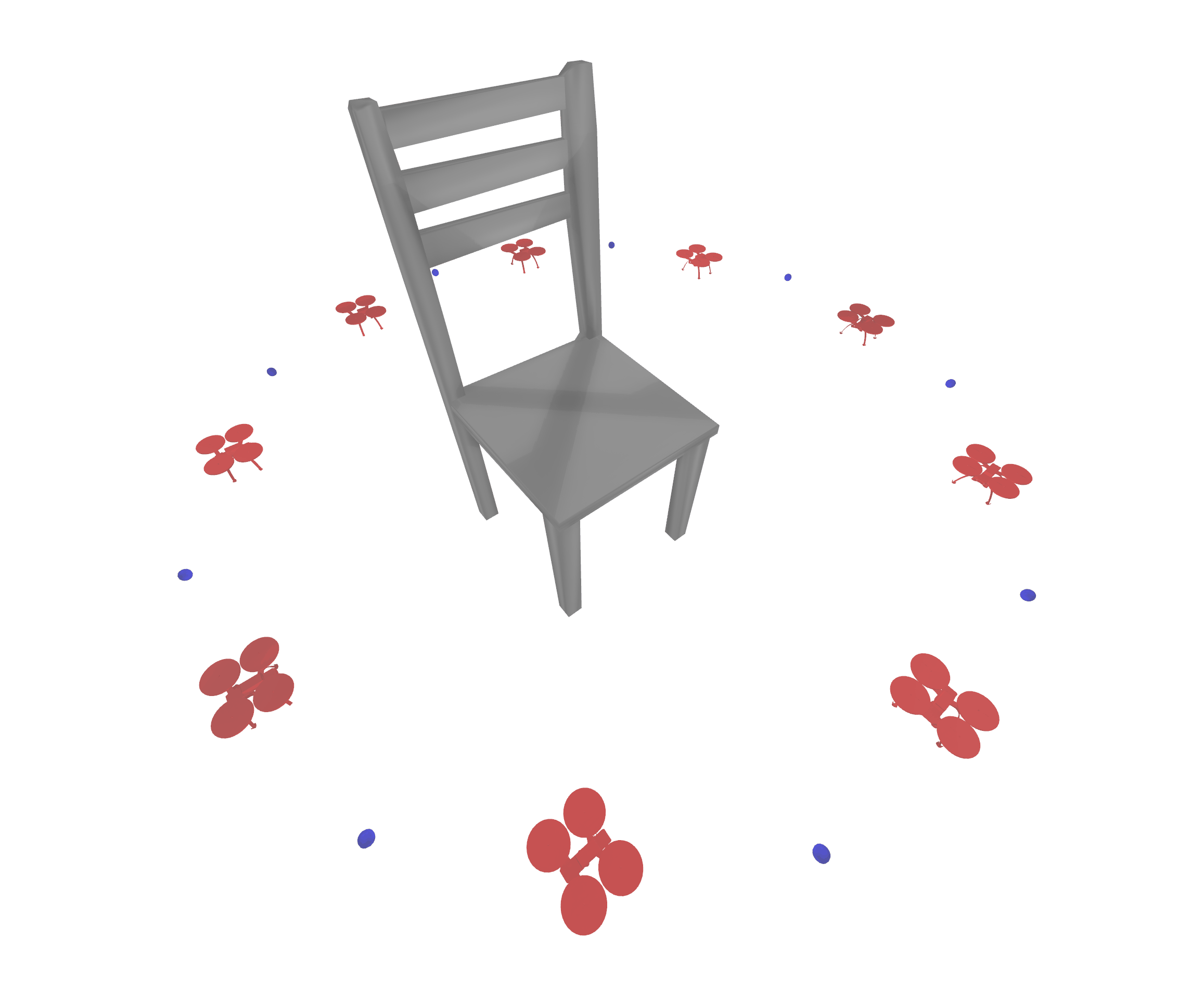}}
\put(-75,-10){(c) dimension=54} &
\frame{\includegraphics[height=.24\linewidth,trim=0cm 4cm 8cm 4cm,clip]{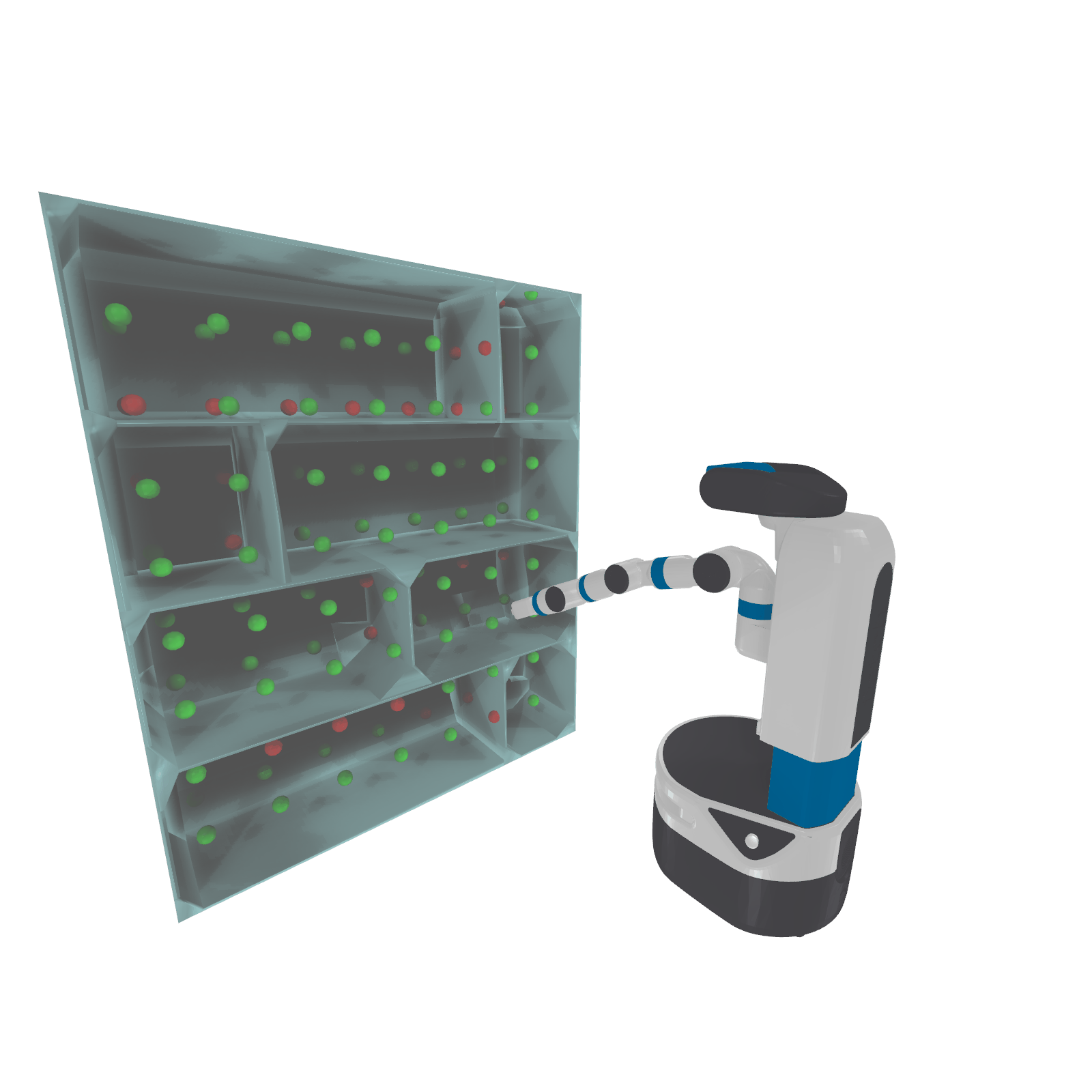}}
\put(-75,-10){(d) dimension=11}
\end{tabular}}
\caption{\label{fig:examples}Snapshots of our four benchmark problems with labeled dimension of configuration spaces.}
\end{figure*}
\subsection{\label{sec:L1}Computing Lipschitz Upper Bound}
Our method requires the Lipschitz constant $L_1$ to be evaluated for each type of geometric shape. \revised{We denote by $L_1^{ijk}$ as the Lipschitz constant for the pair of $b_{ij}$ and $o_k$ satisfying~\prettyref{lem:bound}. We first consider the case with $b_{ij}$ being a moving point, and all other cases are covered by minor modifications. We denote $\Theta$ as the vector of joint parameters, which is also a function of $t$ and $\theta$. By the chain rule, we have:
\begin{align*}
L_1^{ijk}=&\max\left|\FPP{\dist(b_{ij}(t,\theta),o_k)}{t}\right|\\
=&\max\left|\FPP{\dist(b_{ij}(t,\theta),o_k)}{b_{ij}(t,\theta)}\FPP{b_{ij}(t,\theta)}{\Theta(t,\theta)}\FPP{\Theta(t,\theta)}{t}\right|.
\end{align*}
Without a loss of generality, we assume each entry of $\FPPR{\Theta(t,\theta)}{t}$ is limited to the range $[-1,1]$. The first term is a distance function, whose subgradient has a norm at most 1~\cite{osher2005level}. These results combined, we derive the following upper bound for $L_1^{ijk}$ independent of $o^k$, which in turn is denoted as $L_1^{ij}$:}
\begin{align*}
L_1^{ij}\leq\max\left|\FPP{b_{ij}(t,\theta)}{\Theta(t,\theta)}\right|_{2,1},
\end{align*}
which means $L_1^{ij}$ is upper bound of the $l_{2,1}$-norm of the Jacobian matrix (the sum of $l_2$-norm of each column). Next, we consider the general case with $b_{ij}$ being a convex hull with $N$ vertices denoted as $b_{ij}^{1,\cdots,N}$. We have the closest point on $b_{ij}$ to $o_k$ lies on some interpolated point $\sum_{m=1}^Nb_{ij}^m\xi^m$, where $\xi^m$ are convex-interpolation weights that are also function of $\theta(t)$. \revised{We have the following upper-bound for the distance variation over time:
\begin{align*}
&\left|\text{dist}(\sum_{m=1}^{N}b_{ij}^m(t_1,\theta)\xi_1^m,o_k)-\text{dist}(\sum_{m=1}^{N}b_{ij}^m(t_2,\theta)\xi_2^m,o_k)\right|\\
\leq&\left|\text{dist}(\sum_{m=1}^{N}b_{ij}^m(t_1,\theta)\xi_1^m,o_k)-\text{dist}(\sum_{m=1}^{N}b_{ij}^m(t_2,\theta)\xi_1^m,o_k)\right|\\
\leq&L_1^{ij}(\xi_1^m)|t_1-t_2|.
\end{align*}
The inequality above is due to the fact that coefficients $\xi^m$ minimize the distance, so replacing $\xi_2^m$ with $\xi_1^m$ will only increase the distance. Here we abbreviate $\xi_\bullet^m\triangleq\xi^m(\theta(t_\bullet))$ and assume that $\text{dist}(b_{ij}(t_1,\theta),o_k)<\text{dist}(b_{ij}(t_2,\theta),o_k)$, and we can switch $t_1$ and $t_2$ otherwise. Next, we treat $\xi_1^m$ as a constant independent of $\theta(t)$ and estimate the $\xi_1^m$-dependent Lipschitz constant $L_1^{ij}(\xi_1^m)$ as:
\small
\begin{align*}
L_1^{ij}(\xi_1^m)=&\max\left|\FPP{\dist(b_{ij}(t,\theta),o_k)}{\sum_{m=1}^Nb_{ij}^m\xi_1^m}
\FPP{\sum_{m=1}^Nb_{ij}^m\xi_1^m}{\Theta(t,\theta)}\FPP{\Theta(t,\theta)}{t}\right|\\
\leq&\max\sum_{m=1}^N\left|\FPP{b_{ij}^m(t,\theta)}{\Theta(t,\theta)}\right|_{2,1}\xi_1^m
\leq\max_{m=1,\cdots,N}\left|\FPP{b_{ij}^m(t,\theta)}{\Theta(t,\theta)}\right|_{2,1}.
\end{align*}
\normalsize
where the last inequality is due to the fact that $\xi_1^m$ form a convex combination. We see that our estimate of $L_1^{ij}(\xi_1^m)$ is indeed independent of $\xi_1^m$ and can be re-defined as our desired Lipschitz constant $L_1^{ij}$. In other words, the Lipschitz constant of a moving convex hull is the maximal Lipschitz constant over its vertices, and the Lipschitz constants of edge and triangle are just special cases of a convex hull.}

\begin{wrapfigure}{r}{0.22\textwidth}
\centering
\includegraphics[width=0.2\textwidth]{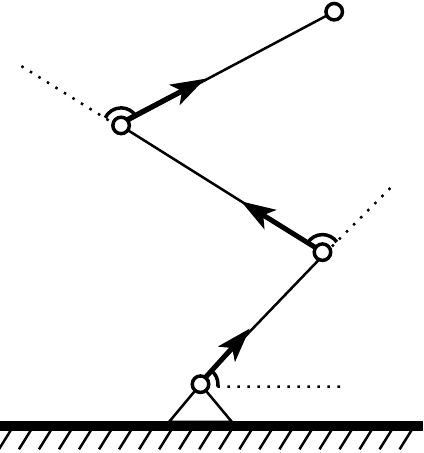}
\put(-40,20){\small$\Theta_1$}
\put(-30,60){\small$\Theta_2$}
\put(-80,90){\small$\Theta_3$}
\put(-50,35){\small$l_1$}
\put(-50,70){\small$l_2$}
\put(-20,95){\small$l_3$}
\end{wrapfigure}
It remains to evaluate the upper bound of the $l_{2,1}$-norm of the Jacobian matrix for a moving point $b_{ij}$. We can derive this bound from the forward kinematic function. For simplicity, we assume a robot arm with only hinge joints as illustrated in the inset. We use $\Theta_k$ to denote the angle of the $k$th hinge joint. If $b_{ij}$ lies on the $K$th link, then only $\Theta_{1,\cdots,K}$ can affect the position of $b_{ij}$. We assume the $k$th link has length $l_k$, then the maximal influence of $\Theta_k$ on $b_{ij}$ happens when all the $k,\cdots,K$th links are straight, so that:
\begin{align*}
\left|\FPP{b_{ij}}{\Theta_k}\right|\leq\sum_{m=1}^kl_m\Longrightarrow 
\left|\FPP{b_{ij}}{\Theta}\right|_{2,1}\leq\sum_{k=1}^K\sum_{m=k}^Kl_m.
\end{align*}

\subsection{High-Order Polynomial Trajectory Parameterization}
Our $L_1^{ij}$ formulation relies on the boundedness of $\FPPR{\Theta(t,\theta)}{t}$. And articulated robots can have joint limits which must be satisfied at any $t\in[0,T]$. To these ends, we use high-order composite B\'ezier curves to parameterize the trajectory $\Theta(t,\theta)$ in the configuration space, so that $\Theta(t,\theta)$ is a high-order polynomial function. In this form, bounds on $\Theta(t,\theta)$ at an arbitrary $t$ can be transformed into bounds on its control points~\cite{shikin1995handbook}. We denote the lower- and upper-joint limits as $\underline{\Theta}$ and $\bar{\Theta}$, respectively. If we denote by $M_k$ the matrix extracting the control points of $\Theta_k$ and $M_{ik}$ the $i$th row of $M_k$, then the joint limit constraints can be conservatively enforced by the following barrier function:
\begin{align}
\label{eq:jointLimit}
\sum_i\sum_k\mathcal{P}(\bar{\Theta}_k-M_{ik}\Theta_k)+\mathcal{P}(-\underline{\Theta}_k+M_{ik}\Theta_k).
\end{align}
A similar approach can be used to bound $\FPPR{\Theta_k(t,\theta)}{t}$ to the range $[-1,1]$. We know that the gradient of a B\'ezier curves is another B\'ezier curves with a lower-order, so we can denote by $M_k'$ the matrix extracting the control points of $\FPPR{\Theta_k(t,\theta)}{t}$ and $M_{ik}'$ the $i$th row of $M_k'$. The boundedness of $\FPPR{\Theta(t,\theta)}{t}$ for any $t$ can then be realized by adding the following barrier function:
\begin{align}
\label{eq:gradientBound}
\sum_i\sum_k\mathcal{P}(1-M_{ik}'\Theta_k)+\mathcal{P}(1+M_{ik}'\Theta_k).
\end{align}
Note that these constraints are strictly conservative. However, one can always use more control points in the B\'ezier curve composition to allow an arbitrarily long trajectory of complex motions.

\subsection{\label{sec:acceleration}Accelerated \& \revised{Adaptive} Computation of Barrier Functions}
A naive method for computing the barrier function terms $\sum_{ijkl}\mathcal{P}_{ijkl}$ could be prohibitively costly, and we propose several acceleration techniques that is compatible with our theoretical analysis. Note first that our theoretical results assume the same $L_1$ for all $b_{ij}$, but the $L_1^{ij}$ constant computed in~\prettyref{sec:L1} is different for each $b_{ij}$. Instead of letting $L_1=\fmax{ij}L_1^{ij}$, we could use a different $\phi(x)=L_1^{ij}x/2+L_2x^\eta$ for each $b_{ij}$, leading to a loose safety condition and less subdivisions. Further, note that our potential function $\mathcal{P}$ is locally supported by design and we only need to compute $\mathcal{P}_{ijk}$ if the distance between $b_{ij}$ and $o_k$ is less than \revised{$x_0+d_0$}. We propose to build a spatial-temporal, binary-tree-based bounding volume hierarchy (BVH)~\cite{gu2013efficient} for pruning unnecessary $\mathcal{P}_{ijk}$ terms, where each leaf node of our BVH indicates a unique tuple $<b_{ij},T_0^l,T_1^l>$, which can be checked against each obstacle $o_k$ to quickly prune $<b_{ij},o_k>$ pairs with distance larger than $x_0+d_0$. \revised{The BVH further provides a convenient data-structure to perform adaptive subdivision. When safety check fails for the term $\mathcal{P}_{ijk}$, we only subdivide that single term without modifying the subdivision status of other $b_{ij}$ and $o_k$ pairs. Specifically, a leaf node tuple $<b_{ij},T_0^l,T_1^l>$ is replaced by an internal node with two children: $<b_{ij},T_0^l,(T_0^l+T_1^l)/2>$ and $<b_{ij},(T_0^l+T_1^l)/2,T_1^l>$ upon subdivision.}

\subsection{Handling Self-Collisions}
\revised{Our method inherently applies to handle self-collisions. Indeed, for two articulated robot subsets $b_{ij}$ and $b_{i'j'}$, we have the following generalized motion bound:
\begin{align*}
&|\text{dist}(b_{ij}(t_1,\theta),b_{i'j'}(t_1,\theta))-\text{dist}(b_{ij}(t_2,\theta),b_{i'j'}(t_2,\theta))|\\
\leq&|\text{dist}(b_{ij}(t_1,\theta),b_{i'j'}(t_1,\theta))-\text{dist}(b_{ij}(t_2,\theta),b_{i'j'}(t_1,\theta))|+\\
&|\text{dist}(b_{ij}(t_2,\theta),b_{i'j'}(t_1,\theta))-\text{dist}(b_{ij}(t_2,\theta),b_{i'j'}(t_2,\theta))|\\
\leq&(L_1^{ij}+L_1^{i'j'})|t_1-t_2|,
\end{align*}
where the second inequality is derived by treating $b_{i'j'}(t_1,\theta)$ and $b_{ij}(t_2,\theta)$ as a static obstacle in the first and second term, respectively. The above result implies that if~\prettyref{lem:bound} holds for distances to static obstacles, it also holds for distances between two moving robot subsets by summing up the Lipschitz constants. As a result, we can use the following alternative definition of $\phi(x)$ within the safety check to prevent self-collisions:
\begin{align*}
\phi(x)=(L_1^{ij}+L_1^{i'j'})x/2+L_2x^\eta.
\end{align*}
Readers can verify that all our theoretical results follow for self-collisions by the same argument, and we omit their repetitive derivations for brevity. Notably, using our adaptive subdivision scheme introduced in~\prettyref{sec:acceleration}, the subdivision status of $b_{ij}$ and $b_{i'j'}$ can be different. For example, $b_{ij}$ can have a subdivision interval $[T_0^l,T_1^l]$, while $b_{i'j'}$ has an overlapping interval $[T_0^{l'},T_1^{l'}]$ such that $[T_0^l,T_1^l)\cap[T_0^{l'},T_1^{l'})\neq\emptyset$, but $[T_0^l,T_1^l]\neq[T_0^{l'},T_1^{l'}]$. Since we use the midpoint constraint as the representative of the interval, there is no well-defined midpoint for such inconsistent interval pairs. To tackle this issue, we note that by the midpoint subdivision rule, we have either $[T_0^l,T_1^l]\subset[T_0^{l'},T_1^{l'}]$ or $[T_0^{l'},T_1^{l'}]\subset[T_0^l,T_1^l]$. As a result, we could recursively subdivide the larger interval until the two intervals are identical.}
\section{Evaluation}
We implement our method using C++ and evaluate the performance on a single desktop machine with one 32-core AMD 3970X CPU. We make full use of the CPU cores to parallelize the BVH collision check, energy function, and derivative computations. For all our experiments, we use $x_0=10^{-3}$, $L_2=10^{-4}$, $\eta=1/7$, $\mu=10^{-2}$, $\epsilon_d=10^{-4}$. The trajectory is parameterized in configuration space using a $5$th-order composite B\'ezier curve with $5$ segments over a horizon of $T=5$s. We use four computational benchmarks discussed below.

\subsection{Benchmark Problems}
\begin{figure}[ht]
\centering
\setlength{\tabcolsep}{2px}
\begin{tabular}{cc}
\includegraphics[height=.3\linewidth]{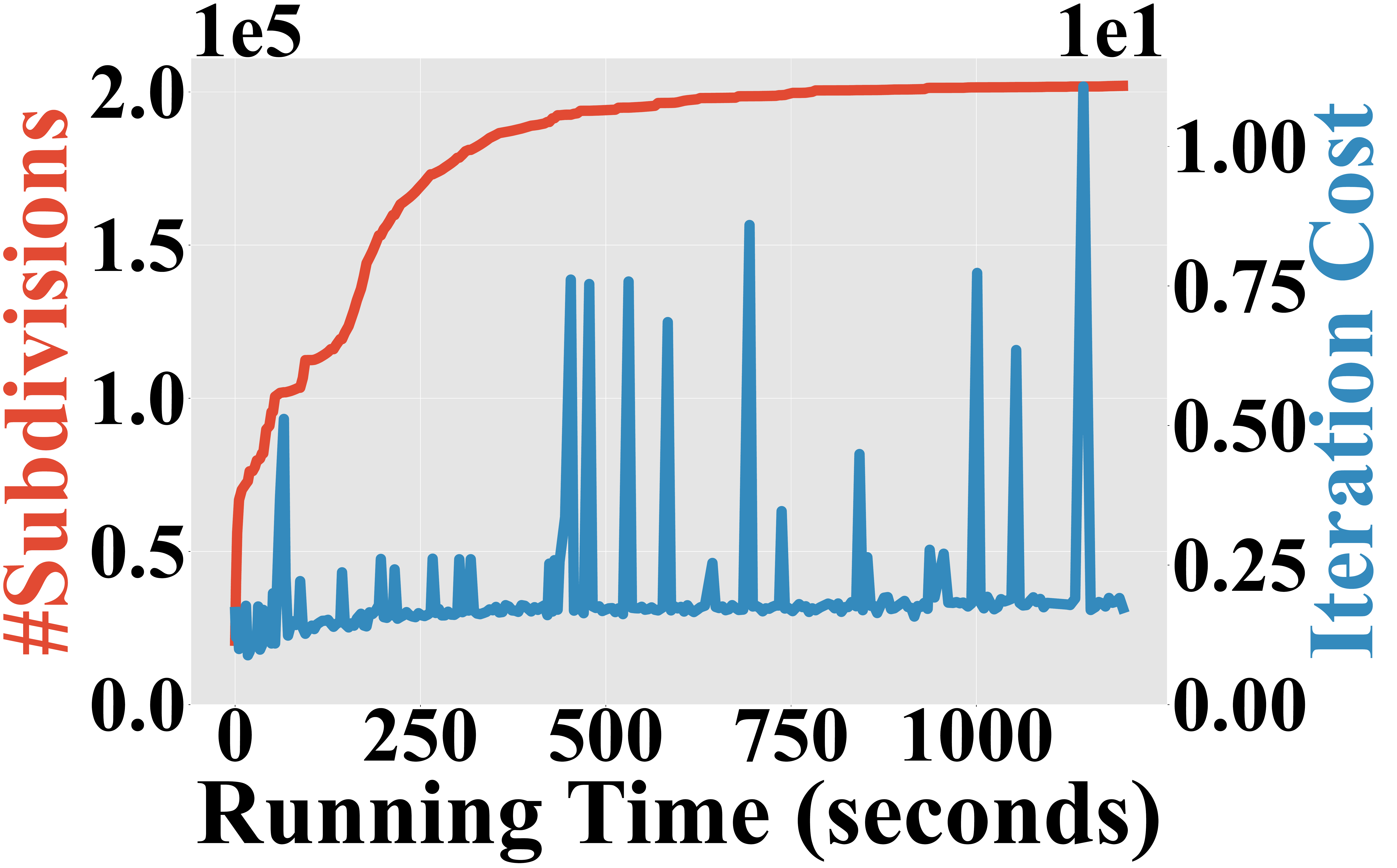}
\put(-10,-3){(a)} &
\includegraphics[height=.3\linewidth]{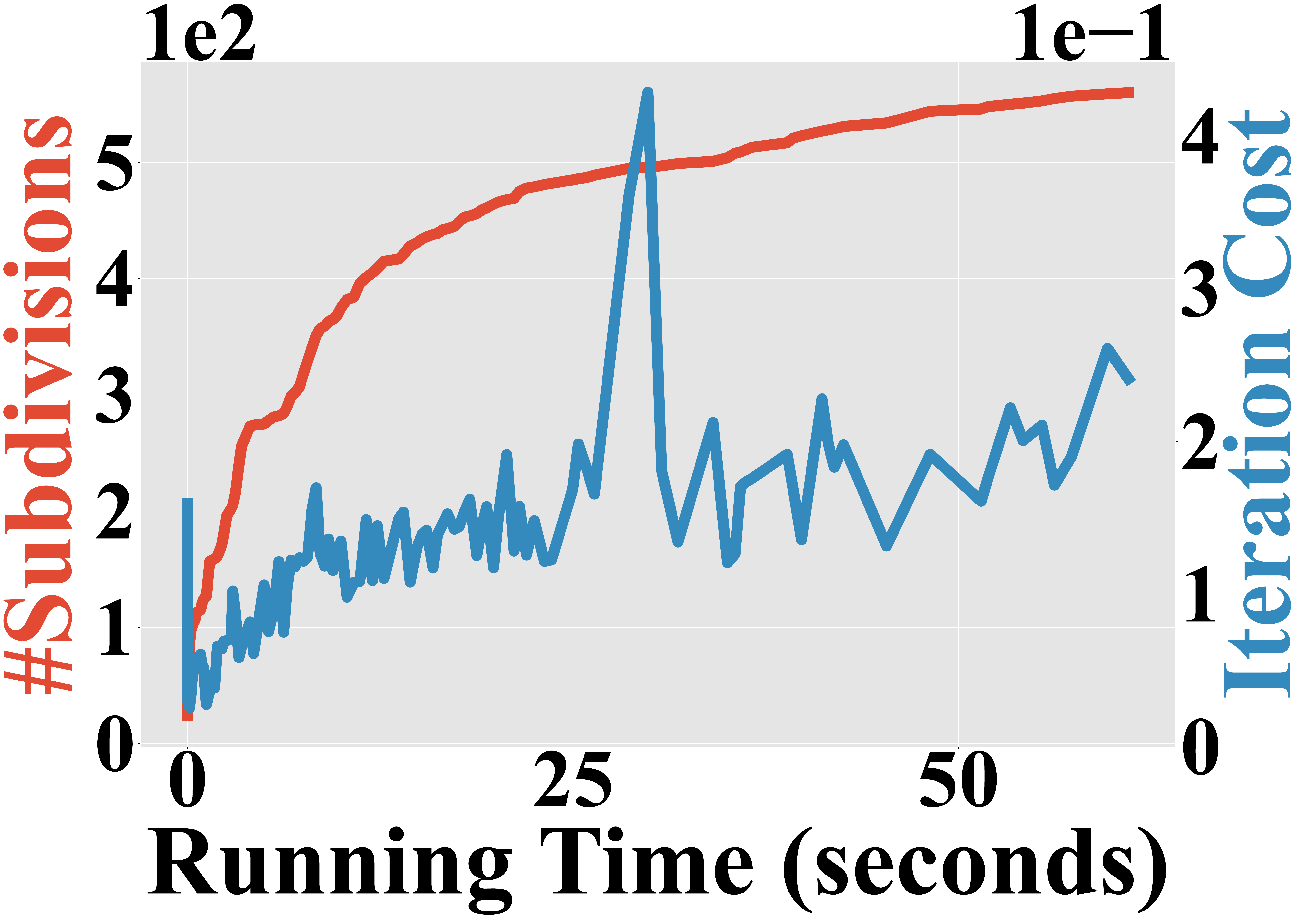}
\put(-10,-3){(b)}
\end{tabular}
\caption{\label{fig:conv1} The number of subdivisions and cost per iteration in seconds plotted against the computational time for our first benchmark using triangular mesh (a) and convex hull (b) representations.}
\end{figure}
Our first benchmark (\prettyref{fig:examples}a) involves two LBR iiwa robot arms simultaneously reaching 4 target points in a shared workspace. To this end, our objective function involves a distance measure between the robot end-effectors and the target points, as well as a Laplacian trajectory smoothness metric as in~\cite{park2012itomp}. The convergence history as well as the number of subdivisions is plotted in~\prettyref{fig:conv1} for both versions of geometric representations: triangular mesh and convex hull. Our method using triangular meshes is much slower than that using convex hulls, due to the fine geometric details leading to a large number of triangle-triangle pairs. Our method with the convex hull representation converges after 359 subdivisions, 231 iterations, and 2.23 minutes of computation to reach all 4 target positions. We have also plotted the cost per iteration in~\prettyref{fig:conv1}, which increases as more subdivisions and barrier penalty terms are introduced.

Our second benchmark (\prettyref{fig:examples}b with successful points in green and unsuccessful points in red) involves a single LBR iiwa robot arm interacting with a tree-like obstacle with thin geometric objects. Such obstacles can lead to ill-defined gradients for infeasible discretization methods or even tunnel through robot links~\cite{doi:10.1177/0278364914528132}, while our method can readily handle such ill-shaped obstacles. We sample a grid of target positions for the end-effector to reach and run our algorithm for each position. Our method can successfully reach 16/56 positions. We further conduct an exhaustive search for each unsuccessful point. Specifically, if an unsuccessful point is neighboring a successful point, we optimize an additional trajectory for the end-effector to connect the two points. Such connection is performed until no new points can be reached. In this way, our method can successfully reach 39/56 positions. On average, to reach each target point, our method with the convex hull representation converges after 416 subdivisions, 246 iterations, and 0.31 minutes of computation, so the total computational time is 17.36 minutes.

\begin{figure}[ht]
\centering
\setlength{\tabcolsep}{2px}
\begin{tabular}{cc}
\includegraphics[width=.45\linewidth]{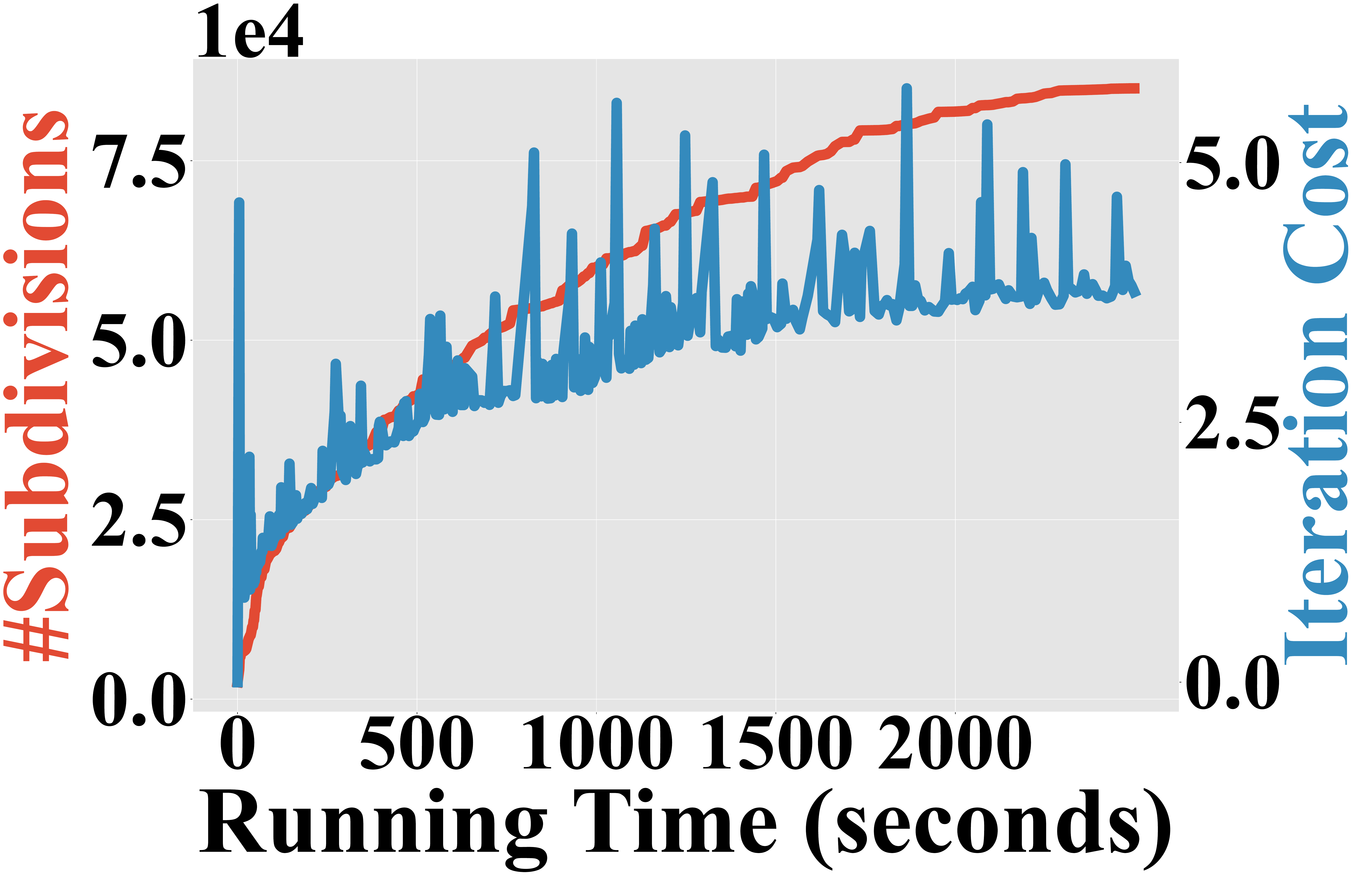}
\put(-10,-3){(a)} &
\includegraphics[width=.45\linewidth]{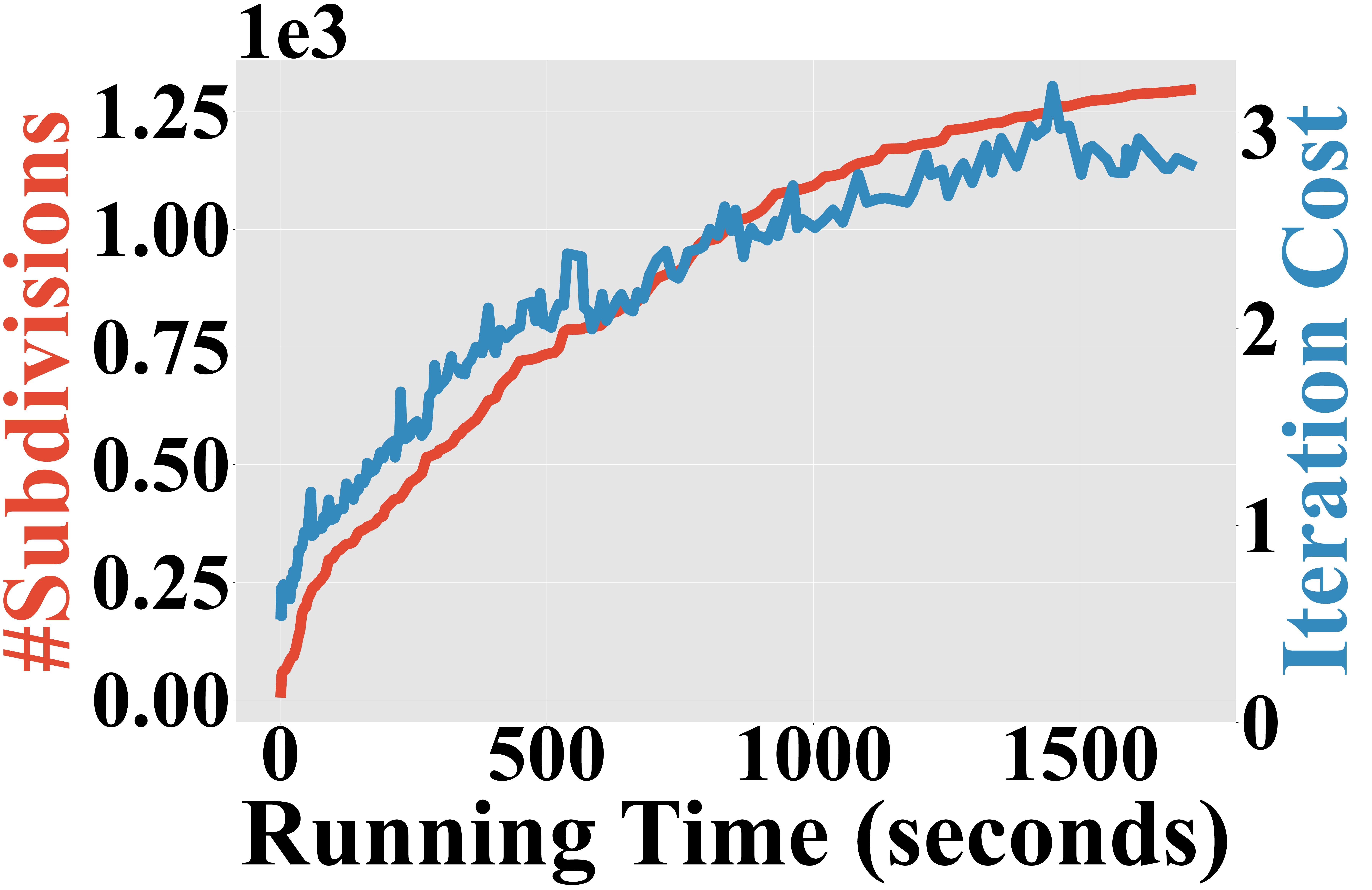}
\put(-10,-3){(b)}
\end{tabular}
\caption{\label{fig:conv3} The number of subdivisions and cost per iteration in seconds plotted against the computational time for our third benchmark using triangular mesh (a) and convex hull (b) representations.} 
\end{figure}
Our third benchmark (\prettyref{fig:examples}c) involves a swarm of UAVs navigating across each other through an obstacle. A UAV can be modeled as a free-flying rigid body with $6$ degrees of freedom. Prior work~\cite{oleynikova2016continuous} searches for only the translations and then exploits differential flatness~\cite{5980409} to recover feasible orientation trajectories. However, such orientation trajectories might not be collision-free. Instead, we can optimize both the translation and rotation with a collision-free guarantee. We have experimented with both geometric representations, and the convergence history of this example is plotted~\prettyref{fig:conv3}. Again, our method with triangular mesh can represent detailed geometries but takes significantly more computation time. In comparison, our method with the convex hull representation converges after 1297 subdivisions, 188 iterations, and 25.87 minutes of computation.
\begin{table}[ht]
\centering
\begin{tabular}{ll}
\toprule
Step & Time  \\
\midrule
Line-search direction computation & 1.6\%\\
Subdivision & 3.3\%\\
Safety-check & 14.4\%\\
Objective function evaluation & 80.7\%\\
\bottomrule
\end{tabular}
\caption{\label{table:timeProportion} Over an optimization, we summarize the fraction of computation on each component. Our major bottleneck lies in the energy evaluation, taking $80.7\%$ of the computation.}
\end{table}
In~\prettyref{table:timeProportion}, we summarize the fraction of computation for each component of our method for both of the triangle mesh and convex hull representations. Our major bottleneck lies in the energy evaluation, i.e. computing $\mathcal{E}(\theta)$ and its derivatives. Note that we use either triangular mesh or convex hull as our geometric representation. In the former case, each distance term is between a pair of edges or a pair of point and triangle. Although such distance function is cheap to compute, the number of $\mathcal{P}_{ijkl}$ terms is large, leading to a major computational burden. In the latter case, the number of $\mathcal{P}_{ijkl}$ terms is much smaller, but each evaluation of $\mathcal{P}_{ijkl}$ involves the non-trivial computation of the shortest distance between two general convex hulls, which is also the computational bottleneck.

Our final benchmark (\prettyref{fig:examples}d) plans for an armed mobile robot to reach a grid of locations on a book-shelf. We use convex decomposition to represent both the robot links and the book-shelf as convex hulls. Starting from a faraway initial guess, our method can guide the robot to reach 88/114 target positions, achieving a success rate of 77.19\%. On average over each target point, our method with the convex hull representation converges after 874 subdivision, 402 iterations, and 1.77 minutes of computation, so the total computational time is 201.78 minutes.

\begin{table}[ht]
\centering
\setlength{\tabcolsep}{3pt}
\begin{tabular}{lcccccccccc}
\toprule
$i$th Link   & 1    & 2    & 3    & 4    & 5    & 6    & 7    & 9    & 10    \\
\midrule
$\delta L_1^i$ & 3.71 & 5.24 & 4.87 & 5.06 & 5.84 & 6.14 & 6.88 & 7.21 & 7.79  \\
\bottomrule
\end{tabular}
\caption{\label{table:overestimation} Overestimation of Lipschitz constant for each link of the armed mobile robot in \prettyref{fig:examples}d (The $1$st link is closest to the root joint).}
\end{table}
\subsection{Conservativity of Lipschitz Constant}
According to~\prettyref{table:timeProportion}, our main computational bottleneck lies in the large number of penalty terms resulting from repeated subdivision. This is partly due to over-estimation of the Lipschitz constant, resulting in an overly conservative motion bound and more subdivisions to tighten the bound. To quantify the over-estimation for the $i$th rigid body, we compare our Lipschitz constant with the following groundtruth tightest bound:
\begin{align*}
L_1^{ij*}\triangleq\argmin{t\in[T_0^l,T_1^l]}\left|\FPP{b_{ij}(t,\theta)}{t}\right|,
\end{align*}
and define the over-estimation metric as $\delta L_1^i\triangleq\max_j L_1^{ij}/L_1^{ij*}$,
where we calculate $L_1^{ij*}$ by sampling time instances within $t\in [T_0^l,T_1^l]$ at an interval of $\delta t=10^{-3}$ and pick the largest value. We initialize 1000 random trajectories lasting for $1$s ($T_1^l-T_0^l=1$) for the armed mobile robot in \prettyref{fig:examples}d and we summarize the average $\Delta L_1^i$ over 1000 cases for each robot link in~\prettyref{table:overestimation}, with the $1$st link being closest to the root joint. The level of over-estimation increases for links further down the kinematic chain, due to the over-estimation of each joint on the chain. As a result, more subdivisions are needed with longer kinematic chains.

\subsection{Comparisons with Exchange Method}
We combine the merits of prior works into a reliable implementation of the exchange method~\cite{lopez2007semi}. During each iteration, the spatio-temporal deepest penetration point between each pair of convex objects is detected and inserted into the finite index set. To detect the deepest penetration point, we densely sample the temporal domain with a finite interval of $\epsilon=10^{-3}$ and compute penetration depth for every pair of robot links and each time instance. We use the exact point-to-mesh distance computation implemented in CGAL~\cite{fabri2009cgal} to compute penetration depth at each time instance, and we adopt the bounding volume hierarchy and branch-and-bound technique to efficiently prune non-deepest penetrated points as proposed in~\cite{doi:10.1177/0278364911406761}. After the index set is updated, an NLP is formulated and solved using the IPOPT software package~\cite{biegler2009large}. We terminate NLP when the inf-norm of the gradient is less than $10^{-4}$, same as for our method. Note that we sample the temporal domain with a finite interval of $\epsilon=10^{-3}$ and we only insert new constraints into the index set and never remove constraints from it. Therefore, our exchange solver is essentially reducing SIP to NLP solver with progressive constraint instantiation, so that our solver pertains to the same feasibility guarantee as an NLP solver. Such is the strongest feasibility guarantee an exchange-based SIP solver can provide, to the best of our knowledge. For fairness of comparison, we only use the collision constraints for both methods, i.e. the joint limit~\prettyref{eq:jointLimit} and gradient bound~\prettyref{eq:gradientBound} are not used in our method for this section.

\begin{figure}[ht]
\centering
\begin{tabular}{cc}
\frame{\includegraphics[width=.45\linewidth,trim=25cm 5cm 25cm 5cm,clip]
{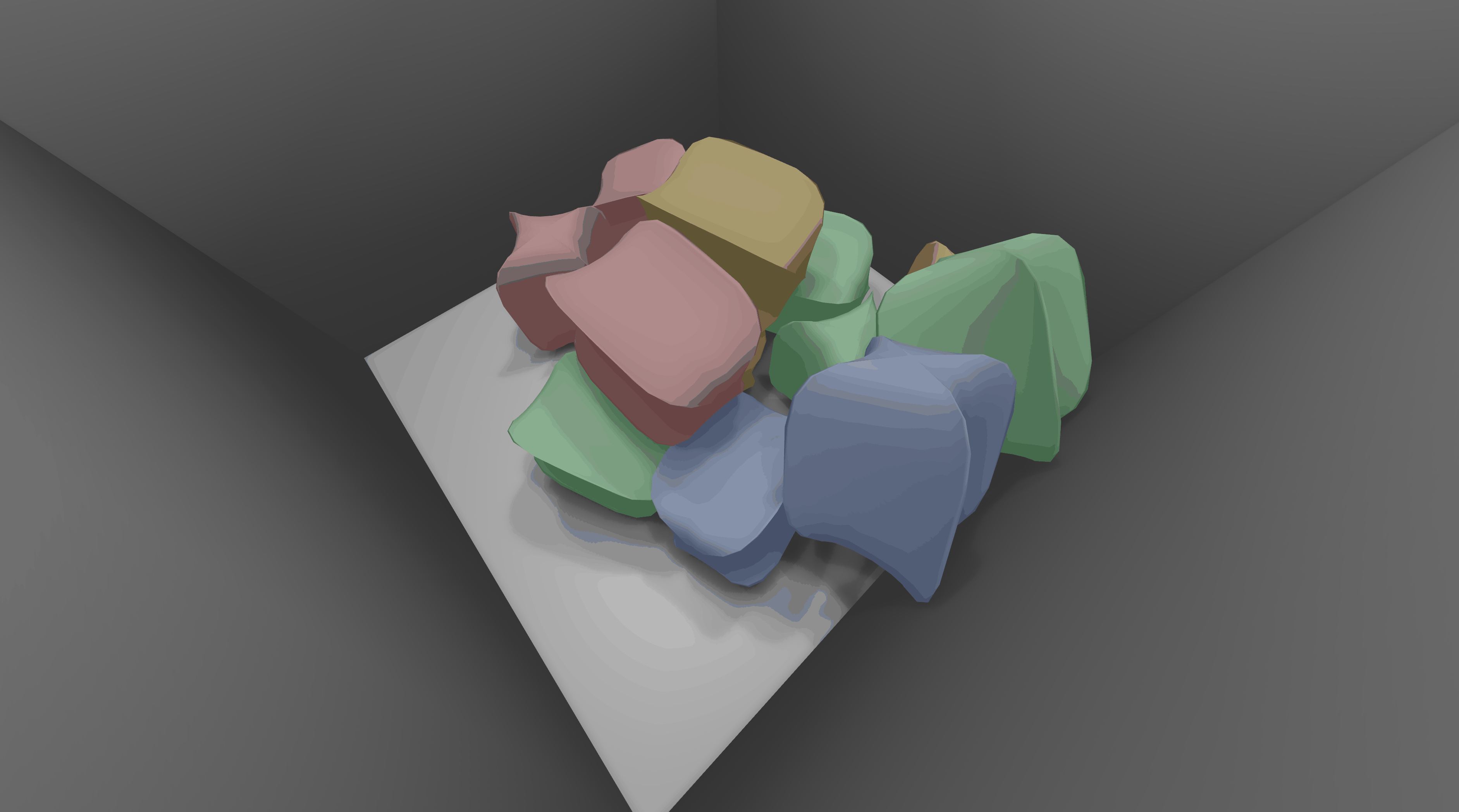}}
\put(-20,5){\textcolor{white}{(a)}}&
\frame{\includegraphics[width=.45\linewidth,trim=25cm 5cm 25cm 5cm,clip]
{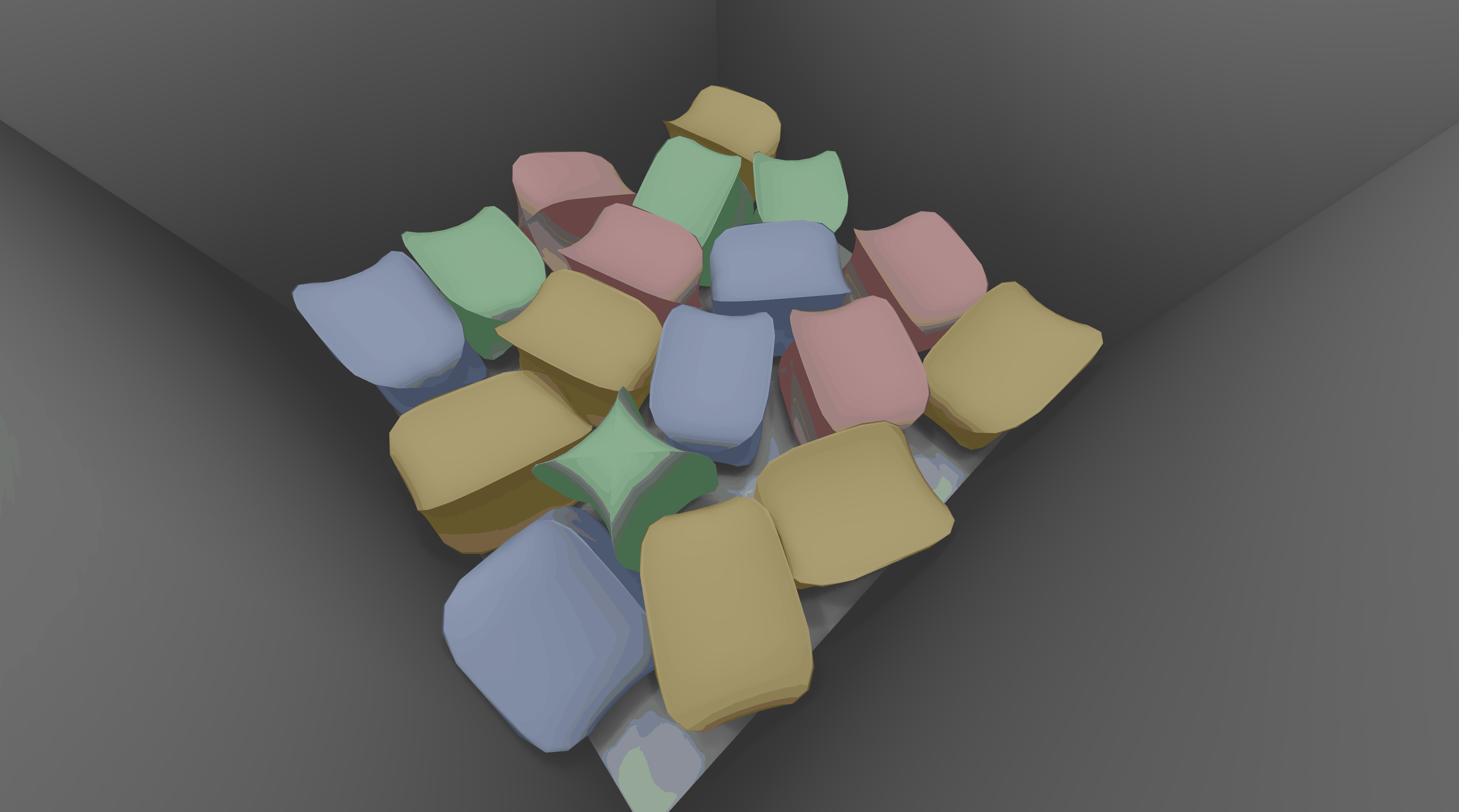}}
\put(-20,5){\textcolor{white}{(b)}}\\
\frame{\includegraphics[width=.45\linewidth,trim=25cm 10cm 25cm 15cm,clip]
{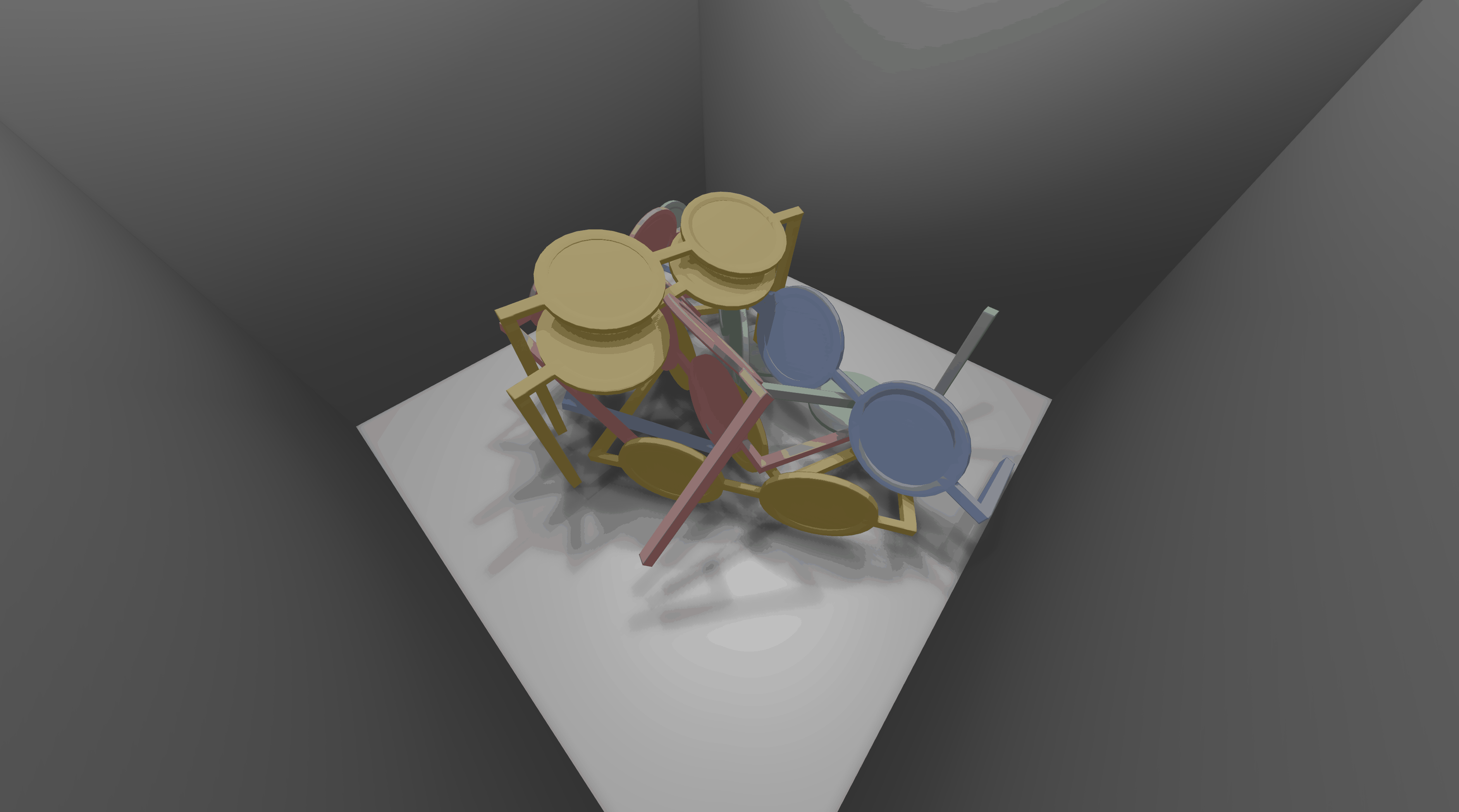}}
\put(-20,5){\textcolor{white}{(c)}}&
\frame{\includegraphics[width=.45\linewidth,trim=25cm 10cm 25cm 15cm,clip]
{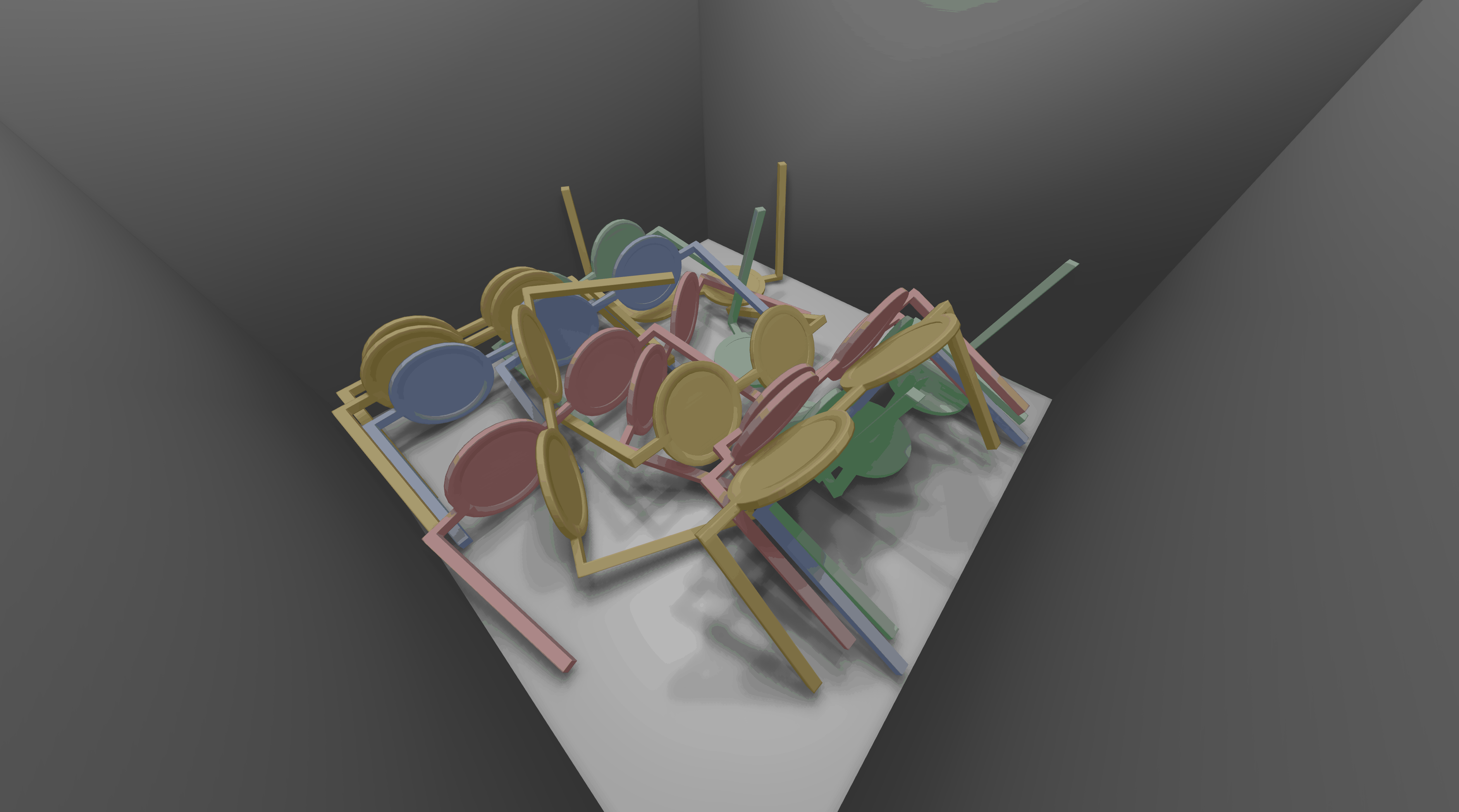}}
\put(-20,5){\textcolor{white}{(d)}}
\end{tabular}
\caption{\label{fig:settle} The result of object settling for 18 bulky objects one-by-one (a) and jointly (b) using our method. Further, we can settle very thin objects such as eyeglasses, either one-by-one (c) or jointly (d).}
\end{figure}
\begin{table}[ht]
\centering
\setlength{\tabcolsep}{2pt}
\begin{tabular}{lcccccccccc}
\toprule
\#Bulky Object & 1 & 2 & 3 & 4 & 5 & 6 & 7 & 9 & 10     \\
\midrule
Discretization & 2.13 & 4.31 & 6.67 & 5.96 & 9.33 & 6.18 & 12.44 & 21.54 & 14.43 \\
Exchange       & 0.11 & 0.06 & 0.11 & 0.14 & 0.22 & 0.31 & 0.84 & 1.93 & 0.40    \\
\toprule
\#Thin Object  & 1 & 2 & 3 & 4 & 5 & 6 & 7 & 9 & 10     \\
\midrule
Discretization & 3.90 & 8.43 & 4.74 & 22.87 & 24.10 & 9.71 & 23.43 & 10.87 & 11.94    \\
Exchange       & -    & -    & -    & -     & -     & -    & -     & -     & -        \\
\bottomrule
\end{tabular}
\caption{\label{table:settleTime} Computational time in seconds for settling 10 objects using our discretization method or the exchange method. We use ``-'' to indicate failed runs.}
\end{table}
Our first comparative benchmark involves object settling, where we drop non-convex objects into a box and use both methods to compute the force equilibrium poses. This can be achieved by setting the objective function to the gravitational potential energy and optimizing a single pose for all the objects. As compared with trajectory optimization, object settling is a much easier task, since no temporal subdivision or sampling is needed. We consider two modes of object settling: the first one-by-one mode sequentially optimizes one object's pose per-run, assuming all previously optimized objects stay still; the second joint mode optimizes all the objects' poses in a single run. The one-by-one mode is faster to compute, but cannot find accurate force equilibrium poses. The joint mode can find accurate poses, but takes considerably more iterations to converge. In our first experiment, we drop 18 bulky objects into a box and the results are shown in~\prettyref{fig:settle}a and the average computational time for the first 10 objects is summarized in~\prettyref{table:settleTime}. The exchange method is more than 10$\times$ faster than our method. This is because our method needs to consider all potential contacts between all pairs of triangles during each iteration. In our second experiment, we drop 18 eyeglasses into the box as illustrated in~\prettyref{fig:settle}c. Our method can still find force equilibrium poses, but the exchange method fails to find a feasible solution, because the eyeglasses have thin geometries with no well-defined penetration depth. Finally, we run our method in joint mode for both examples, where our method takes 19 minutes for the bulky objects and 20 minutes for the thin objects. As illustrated in~\prettyref{fig:settle}bd, the joint mode computes poses with much lower gravitational potential energy.

\begin{figure}[ht]
\centering
\begin{tabular}{cc}
\frame{\includegraphics[width=.45\linewidth,trim=18cm 15cm 2cm 15cm,clip]{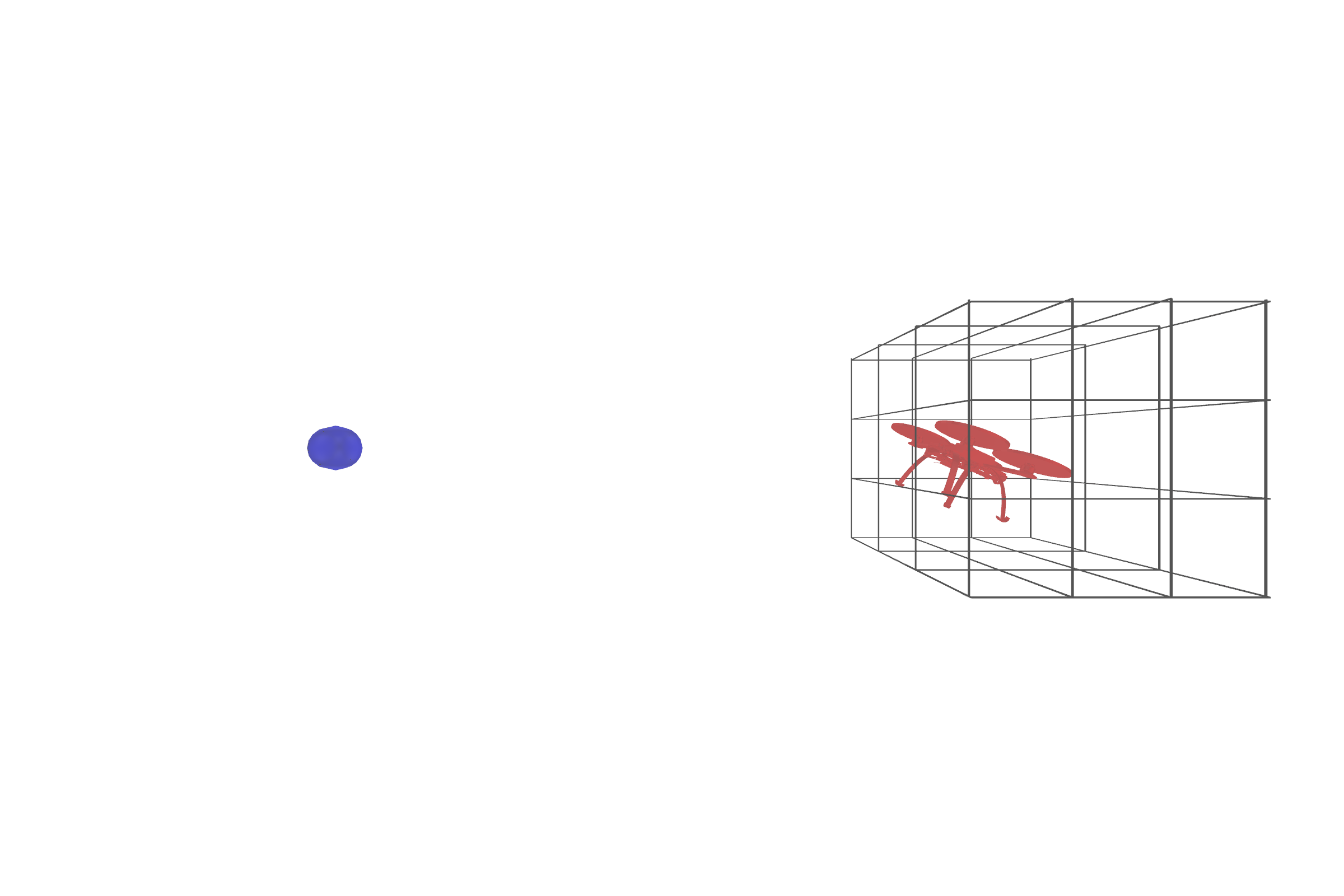}}
\put(-70,5){(a)}&
\frame{\includegraphics[width=.45\linewidth,trim=18cm 15cm 2cm 15cm,clip]{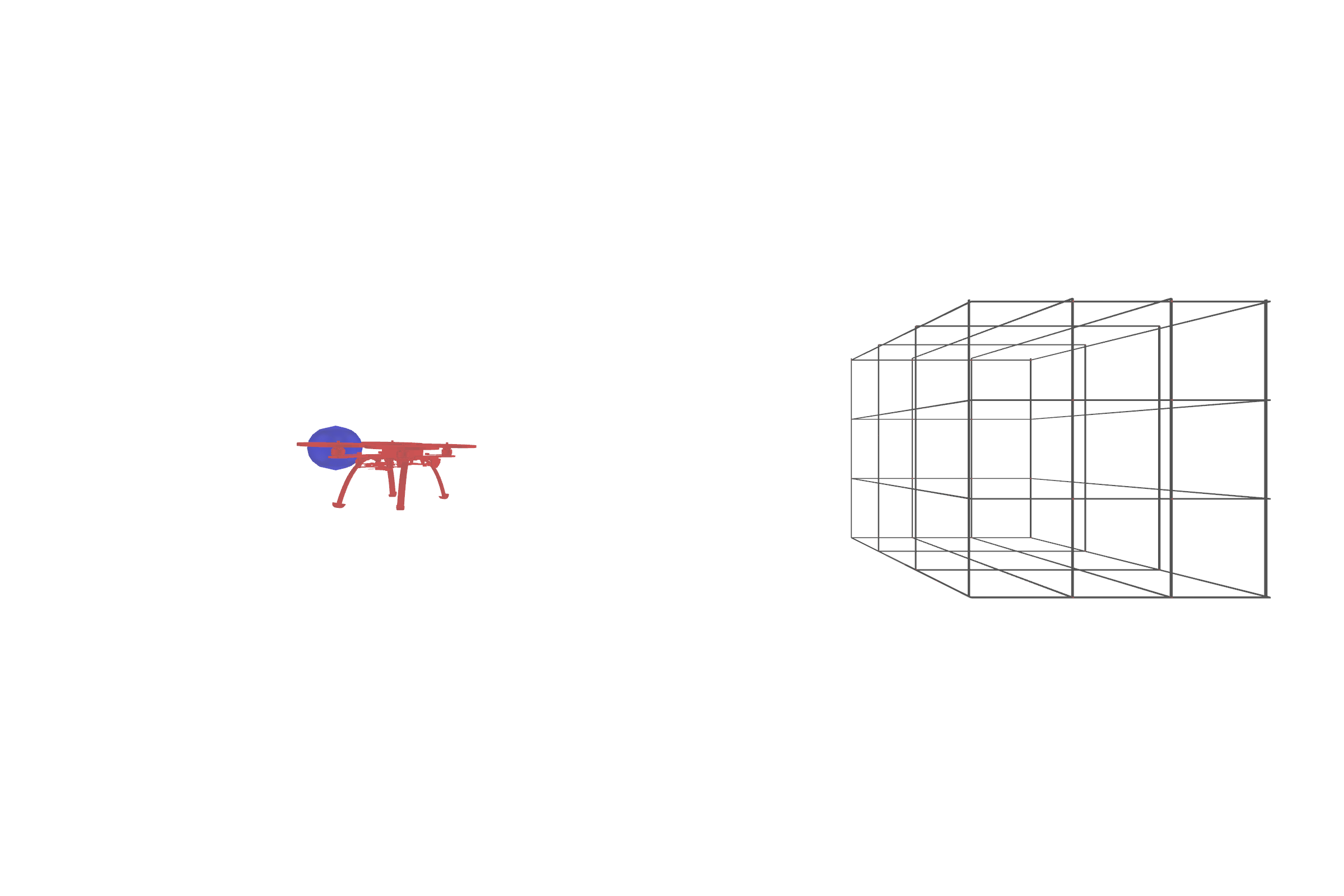}}
\put(-70,5){(b)}
\end{tabular}
\caption{\label{fig:failure} A UAV is trapped inside a cage but trying to fly outside to the blue target point. We show final location of the UAV computed by our discretization method (a) and the exchange method (b).}
\end{figure}
Our second comparative benchmark uses a toy example illustrated in~\prettyref{fig:failure}, where a UAV is trapped in a cage consisting of a row of metal bars. The UAV is trying to fly to the target point outside. Our method would get the UAV close to the target point but still trapped inside the cage, while the exchange method will erroneously get the UAV outside the cage however small $\epsilon$ is. This seemingly surprising result is due to the fact that the exchange method ultimately detects penetrations at discrete time instances with a resolution of $\epsilon$. However, the optimizer is allowed to arbitrarily accelerate the UAV to a point where it can fly out of the cage within $\epsilon$, and the collision constraint will be missed.

Finally, we run the two algorithms on the first and third benchmarks (\prettyref{fig:examples}ac), which involve only a single target position for the robots. (The other two benchmarks involve a grid of target positions making it too costly to compute using the exchange method.) The exchange method succeeds in both benchmarks. On the first benchmark, the exchange method takes 33.09 minutes to finish the computation after 123 index set updates and NLP solves, while our method only takes 5.47 minutes. Similarly, on the third benchmark, the exchange method takes 699.23 minutes to finish the computation after 1804 index set updates and NLP solves, while our method only takes 30.02 minutes. The performance advantage of our method is for two reasons: First, our method does not require the collision to be detected at the finest resolution ($\epsilon=10^{-3}$ in the exchange method). Instead, we only need a subdivision that is sufficient to find descendent directions. Second, our method does not require the NLP to be solved exactly after each update of the energy function. We only run one iteration of Newton's method per round of subdivision.

\subsection{Comparison with Sampling-Based Method}
In contrast to our locally optimal guarantee, sampling-based methods provide a stronger asymptotic global optimality guarantee, so we use the open-source sampling-based algorithm implementation~\cite{sucan2012the-open-motion-planning-library} as a groundtruth and set their low-level discrete collision checker to use a small sampling interval of $\epsilon=10^{-3}$. In our first comparison, we run our method and RRT-Star on the first benchmark (\prettyref{fig:examples}a) with a timeout of 60 minutes. For both methods, we set the objective function to be the weighted combination of the configuration-space trajectory length and the Cartesian-space distance to the target end-effector position. Throughout the optimizations, we save the best trajectory every 10 seconds and profile their trajectory length and distance to target in~\prettyref{fig:RRTStar}. Our method converges much faster to a locally optimal solution, while RRT-Star takes a long computational time to search for better trajectories in the 12-dimensional configuration space, making little progress. We have also tested RRT-Star on the third benchmark (\prettyref{fig:examples}c) with an even higher 54-dimensional configuration space, but it fails to compute a meaningful trajectory within 60 minutes. This is due to the exponential complexity of the zeroth-order sampling-based algorithms.

\begin{figure}[ht]
\centering
\begin{tabular}{cc}
\includegraphics[width=.45\linewidth]{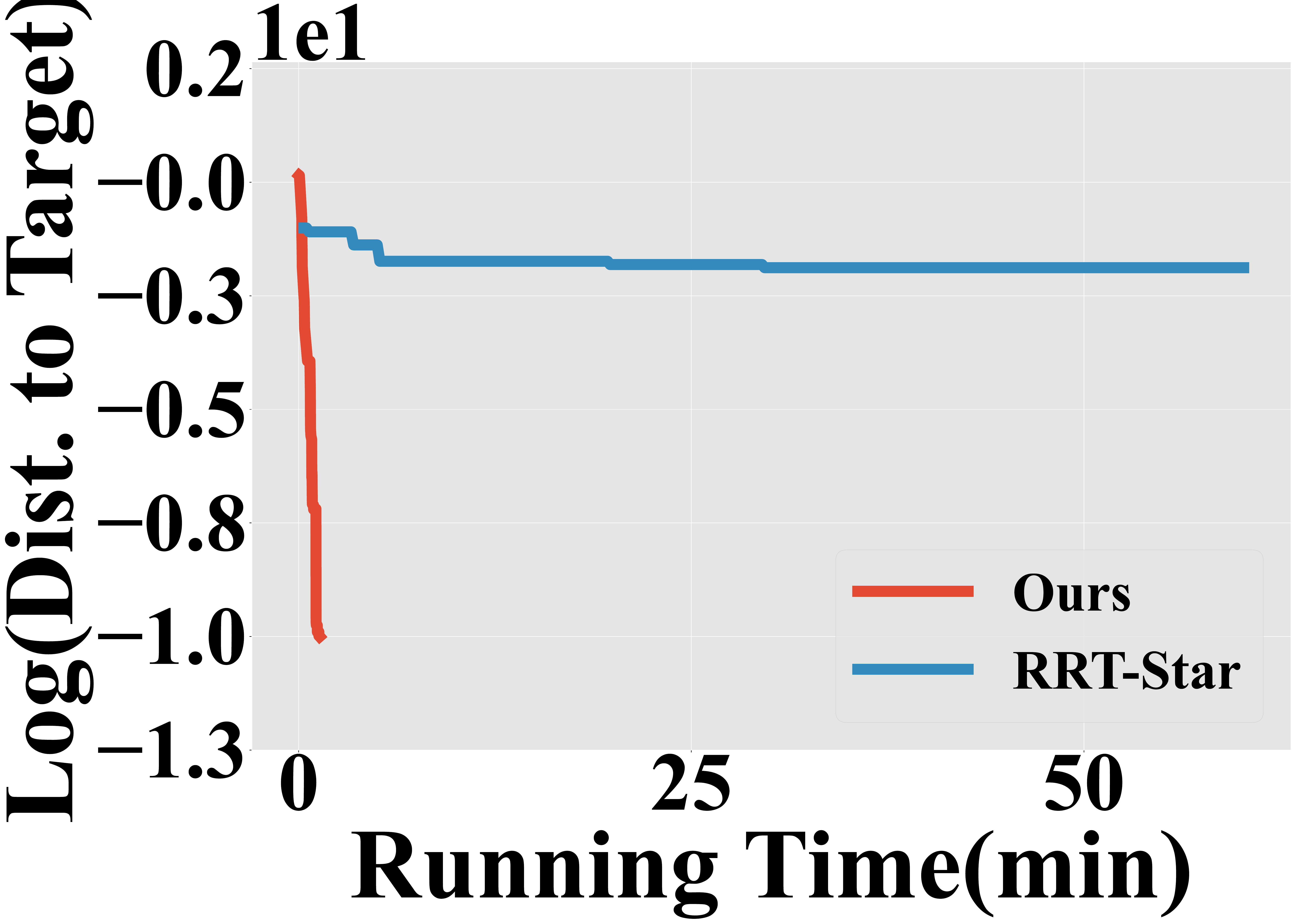}
\put(-5,3){(a)} &
\includegraphics[width=.45\linewidth]{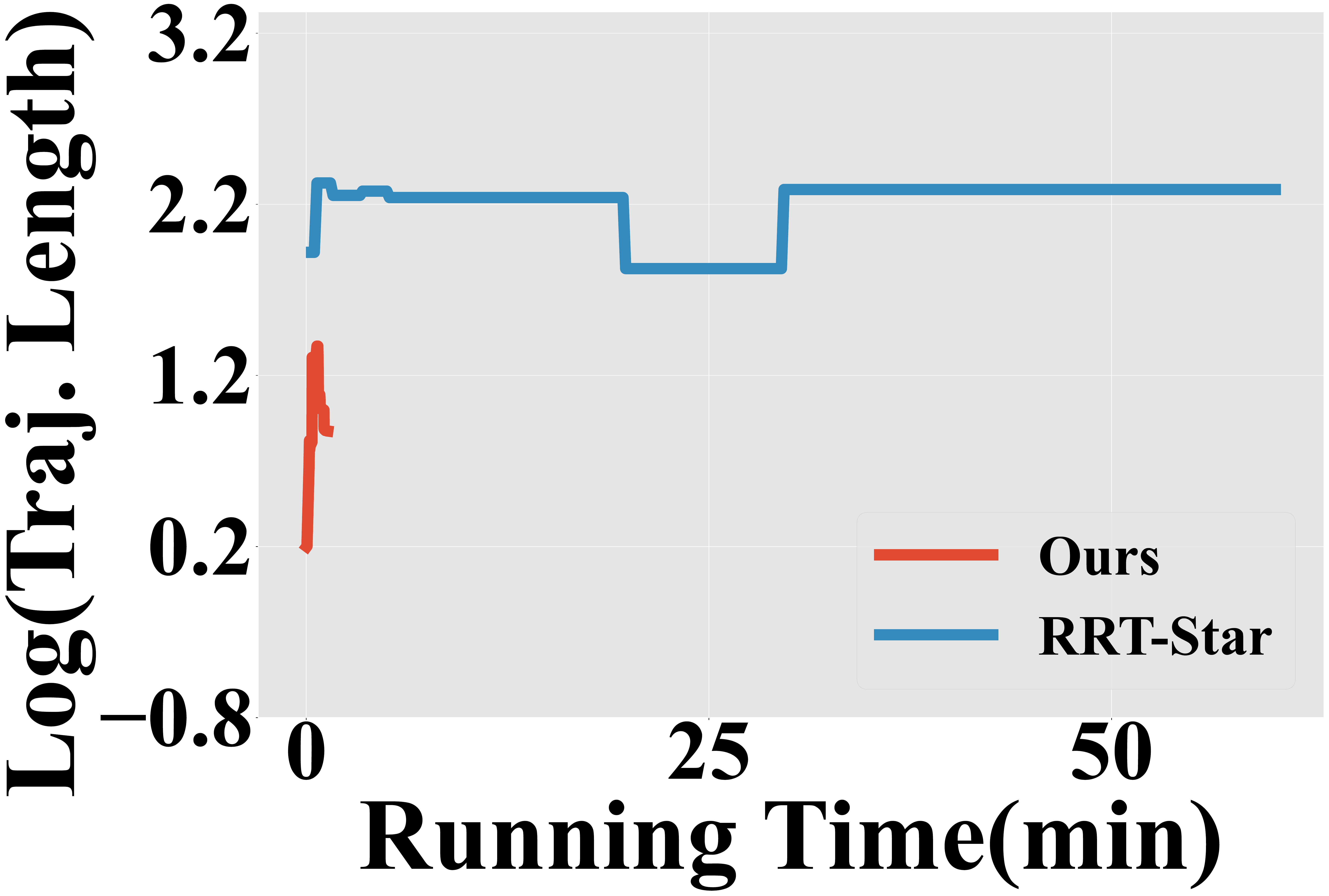}
\put(-5,3){(b)}
\end{tabular}
\caption{\label{fig:RRTStar} We profile the Cartesian-space distance to target (a) and the configuration-space trajectory length (b) over a trajectory optimization procedure using our algorithm and RRT-Star, both in log-scale.}
\end{figure}
In our second comparison, we run PRM on the second benchmark (\prettyref{fig:examples}b) with a timeout of 20 minutes, which is already longer than the computational time taken by our method to reach all target positions, and we consider a target position reached if the end-effector is within 7 centimeters from it. In this 6-dimensional configuration space, PRM succeeds in reaching 46/56 target positions, while our method reaches 39/56 target positions. Our lower success rate is due to our locally optimal nature. For a higher success rate, the sampling-based algorithms can be combined with our local method to provide a high-quality initial guess, e.g., as done in~\cite{choudhury2016regionally}, but this is beyond the scope of this research. We further test PRM on the fourth benchmark (\prettyref{fig:examples}d) with a timeout of 240 minutes, which is again longer than the total computational time of our method. In this 11-dimensional configuration space, PRM can only reach 17/114 target positions, while our method reaches 88/114 target positions, highlighting the benefits of our first-order algorithms.

\begin{table}[ht]
\centering
\setlength{\tabcolsep}{3pt}
\begin{tabular}{lcccccc}
\toprule
Interval $\epsilon$ & 0.1 & 0.05 & 0.025 & 0.0125 & 0.01 \\
\midrule
\#UAV-Escape & 10/10 & 7/10 & 2/10 & 0/10 & 0/10 \\
\bottomrule
\end{tabular}
\caption{\label{table:UAVEscape} The behavior of UAV planned by RRT-Star under different sample intervals $\epsilon$.}
\end{table}
Finally, we compare the two methods on the toy example in~\prettyref{fig:failure} to highlight the sensitivity of RRT-Star to the sampling interval of the collision checker. Again, we set the timeout to 30 minutes for RRT-Star. We use different sampling interval $\epsilon$ and run RRT-Star for 10 times under each $\epsilon$. The number of times when UAV can incorrectly escape the cage is summarized in~\prettyref{table:UAVEscape}. The UAV is correctly trapped in the cage under sufficiently small $\epsilon$, while it escapes under large $\epsilon$. For intermediary values of $\epsilon$, the behavior of UAV depends on the random sample locations. In contrast, our method can guarantee that the UAV is correctly trapped inside the cage without tuning $\epsilon$.
\section{Conclusion}
We propose a provably feasible algorithm for collision-free trajectory generation of articulated robots. We formulate the underlying SIP problem and solve it using a novel feasible discretization method. Our method divides the temporal domain into discrete intervals and chooses one representative constraint for each interval, reducing the SIP to an NLP. We further propose a conservative motion bound that ensures the original SIP constraint is satisfied. Finally, we establish theoretical convergence guarantee and propose practical implementations for articulated robots. Our results show that our method can generate long-horizon trajectories for industrial robot arms within a couple minutes of computation.

Our method pertains several limitations that we consider to address as future work. On the downside, our method requires a large number of subdivisions to approach a feasible and nearly optimal solution. This is partly due to an overly conservative Lipschitz constant estimation and a linear motion bound~\prettyref{lem:bound}. In the future, we plan to reduce the number of subdivisions and improve the computational speed by exploring high-order Taylor models~\cite{10.1145/1276377.1276396}. As a minor limitation, our method requires a customized line-search scheme and cannot use off-the-shelf NLP solvers, which potentially increase the implementation complexity. \revised{Finally, our method does not account for equality constraints, which is useful for modeling the dynamics of the robot. Additional equality constraints can be incorporated by combining our method with merit-function-based techniques~\cite{bertsekas1997nonlinear}, which is an essential avenue of future work.}
\printbibliography

@article{doi:10.1177/0278364920983353,
    author = {Kris Hauser},
    title ={Semi-infinite programming for trajectory optimization with non-convex obstacles},
    journal = {The International Journal of Robotics Research},
    volume = {40},
    number = {10-11},
    pages = {1106-1122},
    year = {2021},
    doi = {10.1177/0278364920983353},
    URL = {https://doi.org/10.1177/0278364920983353},
    eprint = {https://doi.org/10.1177/0278364920983353},
    abstract = { This article presents a novel optimization method that handles collision constraints with complex, non-convex 3D geometries. The optimization problem is cast as a semi-infinite program in which each collision constraint is implicitly treated as an infinite number of numeric constraints. The approach progressively generates some of these constraints for inclusion in a finite nonlinear program. Constraint generation uses an oracle to detect points of deepest penetration, and this oracle is implemented efficiently via signed distance field (SDF) versus point cloud collision detection. This approach is applied to pose optimization and trajectory optimization for both free-flying rigid bodies and articulated robots. Experiments demonstrate performance improvements compared with optimizers that handle only convex polyhedra, and demonstrate efficient collision avoidance between non-convex CAD models and point clouds in a variety of pose and trajectory optimization settings. }
}

@article{lopez2007semi,
  title={Semi-infinite programming},
  author={L{\'o}pez, Marco and Still, Georg},
  journal={European journal of operational research},
  volume={180},
  number={2},
  pages={491--518},
  year={2007},
  publisher={Elsevier}
}

@article{10.1145/1276377.1276396,
    author = {Zhang, Xinyu and Redon, Stephane and Lee, Minkyoung and Kim, Young J.},
    title = {Continuous Collision Detection for Articulated Models Using Taylor Models and Temporal Culling},
    year = {2007},
    issue_date = {July 2007},
    publisher = {Association for Computing Machinery},
    address = {New York, NY, USA},
    volume = {26},
    number = {3},
    issn = {0730-0301},
    url = {https://doi.org/10.1145/1276377.1276396},
    doi = {10.1145/1276377.1276396},
    abstract = {We present a fast continuous collision detection (CCD) algorithm for articulated models using Taylor models and temporal culling. Our algorithm is a generalization of conservative advancement (CA) from convex models [Mirtich 1996] to articulated models with non-convex links. Given the initial and final configurations of a moving articulated model, our algorithm creates a continuous motion with constant translational and rotational velocities for each link, and checks for interferences between the articulated model under continuous motion and other models in the environment and for self-collisions. If collisions occur, our algorithm reports the first time of contact (TOC) as well as collision witness features. We have implemented our CCD algorithm and applied it to several challenging scenarios including locomotion generation, articulated-body dynamics and character motion planning. Our algorithm can perform CCDs including self-collision detection for articulated models consisting of many links and tens of thousands of triangles in 1.22 ms on average running on a 3.6 GHz Pentium 4 PC. This is an improvement on the performance of prior algorithms of more than an order of magnitude.},
    journal = {ACM Trans. Graph.},
    month = {07},
    pages = {15–es},
    numpages = {10},
    keywords = {continuous collision detection, dynamics simulation, articulated models, convex decomposition, conservative advancement}
}

@inproceedings{pan2011fast,
  title={Fast smoothing of motion planning trajectories using b-splines},
  author={Pan, Jia and Zhang, Liangjun and Manocha, Dinesh},
  booktitle={Robotics: Science and Systems},
  year={2011}
}

@article{doi:10.1177/0278364914528132,
author = {John Schulman and Yan Duan and Jonathan Ho and Alex Lee and Ibrahim Awwal and Henry Bradlow and Jia Pan and Sachin Patil and Ken Goldberg and Pieter Abbeel},
title ={Motion planning with sequential convex optimization and convex collision checking},
journal = {The International Journal of Robotics Research},
volume = {33},
number = {9},
pages = {1251-1270},
year = {2014},
doi = {10.1177/0278364914528132},

URL = { 
        https://doi.org/10.1177/0278364914528132
    
},
eprint = { 
        https://doi.org/10.1177/0278364914528132
    
}
,
    abstract = { We present a new optimization-based approach for robotic motion planning among obstacles. Like CHOMP (Covariant Hamiltonian Optimization for Motion Planning), our algorithm can be used to find collision-free trajectories from naïve, straight-line initializations that might be in collision. At the core of our approach are (a) a sequential convex optimization procedure, which penalizes collisions with a hinge loss and increases the penalty coefficients in an outer loop as necessary, and (b) an efficient formulation of the no-collisions constraint that directly considers continuous-time safety Our algorithm is implemented in a software package called TrajOpt.We report results from a series of experiments comparing TrajOpt with CHOMP and randomized planners from OMPL, with regard to planning time and path quality. We consider motion planning for 7 DOF robot arms, 18 DOF full-body robots, statically stable walking motion for the 34 DOF Atlas humanoid robot, and physical experiments with the 18 DOF PR2. We also apply TrajOpt to plan curvature-constrained steerable needle trajectories in the SE(3) configuration space and multiple non-intersecting curved channels within 3D-printed implants for intracavitary brachytherapy. Details, videos, and source code are freely available at: http://rll.berkeley.edu/trajopt/ijrr. }
}

@inproceedings{kyriakopoulos1988minimum,
  title={Minimum jerk path generation},
  author={Kyriakopoulos, Kostas J and Saridis, George N},
  booktitle={Proceedings. 1988 IEEE international conference on robotics and automation},
  pages={364--369},
  year={1988},
  organization={IEEE}
}

@article{kunz2012time,
  title={Time-optimal trajectory generation for path following with bounded acceleration and velocity},
  author={Kunz, Tobias and Stilman, Mike},
  journal={Robotics: Science and Systems VIII},
  pages={1--8},
  year={2012}
}

@inproceedings{hansen2012enhanced,
  title={Enhanced approach for energy-efficient trajectory generation of industrial robots},
  author={Hansen, Christian and {\"O}ltjen, Julian and Meike, Davis and Ortmaier, Tobias},
  booktitle={2012 IEEE International Conference on Automation Science and Engineering (CASE)},
  pages={1--7},
  year={2012},
  organization={IEEE}
}

@article{lavalle1998rapidly,
  title={Rapidly-exploring random trees: A new tool for path planning},
  author={LaValle, Steven M and others},
  year={1998},
  publisher={Ames, IA, USA}
}

@article{doi:10.1177/0278364911406761,
author = {Sertac Karaman and Emilio Frazzoli},
title ={Sampling-based algorithms for optimal motion planning},
journal = {The International Journal of Robotics Research},
volume = {30},
number = {7},
pages = {846-894},
year = {2011},
doi = {10.1177/0278364911406761},

URL = { 
        https://doi.org/10.1177/0278364911406761
    
},
eprint = { 
        https://doi.org/10.1177/0278364911406761
    
}
,
    abstract = { During the last decade, sampling-based path planning algorithms, such as probabilistic roadmaps (PRM) and rapidly exploring random trees (RRT), have been shown to work well in practice and possess theoretical guarantees such as probabilistic completeness. However, little effort has been devoted to the formal analysis of the quality of the solution returned by such algorithms, e.g. as a function of the number of samples. The purpose of this paper is to fill this gap, by rigorously analyzing the asymptotic behavior of the cost of the solution returned by stochastic sampling-based algorithms as the number of samples increases. A number of negative results are provided, characterizing existing algorithms, e.g. showing that, under mild technical conditions, the cost of the solution returned by broadly used sampling-based algorithms converges almost surely to a non-optimal value. The main contribution of the paper is the introduction of new algorithms, namely, PRM* and RRT*, which are provably asymptotically optimal, i.e. such that the cost of the returned solution converges almost surely to the optimum. Moreover, it is shown that the computational complexity of the new algorithms is within a constant factor of that of their probabilistically complete (but not asymptotically optimal) counterparts. The analysis in this paper hinges on novel connections between stochastic sampling-based path planning algorithms and the theory of random geometric graphs. }
}

@article{betts1998survey,
  title={Survey of numerical methods for trajectory optimization},
  author={Betts, John T},
  journal={Journal of guidance, control, and dynamics},
  volume={21},
  number={2},
  pages={193--207},
  year={1998}
}

@inproceedings{pan2012fcl,
  title={FCL: A general purpose library for collision and proximity queries},
  author={Pan, Jia and Chitta, Sachin and Manocha, Dinesh},
  booktitle={2012 IEEE International Conference on Robotics and Automation},
  pages={3859--3866},
  year={2012},
  organization={IEEE}
}

@inproceedings{schulman2013finding,
  title={Finding locally optimal, collision-free trajectories with sequential convex optimization.},
  author={Schulman, John and Ho, Jonathan and Lee, Alex X and Awwal, Ibrahim and Bradlow, Henry and Abbeel, Pieter},
  booktitle={Robotics: science and systems},
  volume={9},
  number={1},
  pages={1--10},
  year={2013},
  organization={Citeseer}
}

@article{bertsekas1997nonlinear,
  title={Nonlinear programming},
  author={Bertsekas, Dimitri P},
  journal={Journal of the Operational Research Society},
  volume={48},
  number={3},
  pages={334--334},
  year={1997},
  publisher={Taylor \& Francis}
}

@article{conn1987exact,
  title={An exact penalty function for semi-infinite programming},
  author={Conn, Andrew R and Gould, Nicholas IM},
  journal={Mathematical Programming},
  volume={37},
  number={1},
  pages={19--40},
  year={1987},
  publisher={Springer}
}

@article{pietrzykowski1969exact,
  title={An exact potential method for constrained maxima},
  author={Pietrzykowski, Tomasz},
  journal={SIAM Journal on numerical analysis},
  volume={6},
  number={2},
  pages={299--304},
  year={1969},
  publisher={SIAM}
}

@article{schattler1996interior,
  title={An interior-point method for semi-infinite programming problems},
  author={Sch{\"a}ttler, Ulrich},
  journal={Annals of Operations Research},
  volume={62},
  number={1},
  pages={277--301},
  year={1996},
  publisher={Springer}
}

@book{osher2005level,
  title={Level set methods and dynamic implicit surfaces},
  author={Osher, Stanley and Fedkiw, Ronald P},
  volume={1},
  year={2005},
  publisher={Springer New York}
}

@inproceedings{1730806,
      author = {Tang, Min and Manocha, Dinesh and Tong, Ruofeng},
      title = {Fast continuous collision detection using deforming non-penetration filters},
      booktitle = {I3D '10: Proceedings of the 2010 ACM SIGGRAPH symposium on Interactive 3D Graphics and Games},
      year = {2010},
      isbn = {978-1-60558-939-8},
      pages = {7--13},
      doi = {http://doi.acm.org/10.1145/1730804.1730806},
      publisher = {ACM},
      address = {New York, NY, USA},
  }

@book{shikin1995handbook,
  title={Handbook on Splines for the User},
  author={Shikin, Eugene V and Plis, Alexander I},
  year={1995},
  publisher={CRC press}
}

@article{majumdar2017funnel,
  title={Funnel libraries for real-time robust feedback motion planning},
  author={Majumdar, Anirudha and Tedrake, Russ},
  journal={The International Journal of Robotics Research},
  volume={36},
  number={8},
  pages={947--982},
  year={2017},
  publisher={SAGE Publications Sage UK: London, England}
}

@inproceedings{clark2021verification,
  title={Verification and synthesis of control barrier functions},
  author={Clark, Andrew},
  booktitle={2021 60th IEEE Conference on Decision and Control (CDC)},
  pages={6105--6112},
  year={2021},
  organization={IEEE}
}

@inproceedings{zhang2021semi,
  title={Semi-infinite programming with complementarity constraints for pose optimization with pervasive contact},
  author={Zhang, Mengchao and Hauser, Kris},
  booktitle={2021 IEEE International Conference on Robotics and Automation (ICRA)},
  pages={6329--6335},
  year={2021},
  organization={IEEE}
}

@inproceedings{majumdar2013control,
  title={Control design along trajectories with sums of squares programming},
  author={Majumdar, Anirudha and Ahmadi, Amir Ali and Tedrake, Russ},
  booktitle={2013 IEEE International Conference on Robotics and Automation},
  pages={4054--4061},
  year={2013},
  organization={IEEE}
}

@article{amice2022finding,
  title={Finding and Optimizing Certified, Collision-Free Regions in Configuration Space for Robot Manipulators},
  author={Amice, Alexandre and Dai, Hongkai and Werner, Peter and Zhang, Annan and Tedrake, Russ},
  journal={arXiv preprint arXiv:2205.03690},
  year={2022}
}

@book{parrilo2000structured,
  title={Structured semidefinite programs and semialgebraic geometry methods in robustness and optimization},
  author={Parrilo, Pablo A},
  year={2000},
  publisher={California Institute of Technology}
}

@article{brochu2012efficient,
  title={Efficient geometrically exact continuous collision detection},
  author={Brochu, Tyson and Edwards, Essex and Bridson, Robert},
  journal={ACM Transactions on Graphics (TOG)},
  volume={31},
  number={4},
  pages={1--7},
  year={2012},
  publisher={ACM New York, NY, USA}
}

@article{choi2008continuous,
  title={Continuous collision detection for ellipsoids},
  author={Choi, Yi-King and Chang, Jung-Woo and Wang, Wenping and Kim, Myung-Soo and Elber, Gershon},
  journal={IEEE Transactions on visualization and Computer Graphics},
  volume={15},
  number={2},
  pages={311--325},
  year={2008},
  publisher={IEEE}
}

@article{choi2006continuous,
  title={Continuous collision detection for two moving elliptic disks},
  author={Choi, Y-K and Wang, Wenping and Liu, Yang and Kim, M-S},
  journal={IEEE Transactions on Robotics},
  volume={22},
  number={2},
  pages={213--224},
  year={2006},
  publisher={IEEE}
}

@inproceedings{ames2019control,
  title={Control barrier functions: Theory and applications},
  author={Ames, Aaron D and Coogan, Samuel and Egerstedt, Magnus and Notomista, Gennaro and Sreenath, Koushil and Tabuada, Paulo},
  booktitle={2019 18th European control conference (ECC)},
  pages={3420--3431},
  year={2019},
  organization={IEEE}
}

@article{borrmann2015control,
  title={Control barrier certificates for safe swarm behavior},
  author={Borrmann, Urs and Wang, Li and Ames, Aaron D and Egerstedt, Magnus},
  journal={IFAC-PapersOnLine},
  volume={48},
  number={27},
  pages={68--73},
  year={2015},
  publisher={Elsevier}
}

@inproceedings{park2012itomp,
  title={ITOMP: Incremental trajectory optimization for real-time replanning in dynamic environments},
  author={Park, Chonhyon and Pan, Jia and Manocha, Dinesh},
  booktitle={Twenty-Second International Conference on Automated Planning and Scheduling},
  year={2012}
}

@inproceedings{oleynikova2016continuous,
  title={Continuous-time trajectory optimization for online uav replanning},
  author={Oleynikova, Helen and Burri, Michael and Taylor, Zachary and Nieto, Juan and Siegwart, Roland and Galceran, Enric},
  booktitle={2016 IEEE/RSJ International Conference on Intelligent Robots and Systems (IROS)},
  pages={5332--5339},
  year={2016},
  organization={IEEE}
}

@inproceedings{deits2015efficient,
  title={Efficient mixed-integer planning for UAVs in cluttered environments},
  author={Deits, Robin and Tedrake, Russ},
  booktitle={2015 IEEE international conference on robotics and automation (ICRA)},
  pages={42--49},
  year={2015},
  organization={IEEE}
}

@inproceedings{10.1145/1576246.1531393,
author = {Harmon, David and Vouga, Etienne and Smith, Breannan and Tamstorf, Rasmus and Grinspun, Eitan},
title = {Asynchronous Contact Mechanics},
year = {2009},
isbn = {9781605587264},
publisher = {Association for Computing Machinery},
address = {New York, NY, USA},
url = {https://doi.org/10.1145/1576246.1531393},
doi = {10.1145/1576246.1531393},
abstract = {We develop a method for reliable simulation of elastica in complex contact scenarios. Our focus is on firmly establishing three parameter-independent guarantees: that simulations of well-posed problems (a) have no interpenetrations, (b) obey causality, momentum- and energy-conservation laws, and (c) complete in finite time. We achieve these guarantees through a novel synthesis of asynchronous variational integrators, kinetic data structures, and a discretization of the contact barrier potential by an infinite sum of nested quadratic potentials. In a series of two- and three-dimensional examples, we illustrate that this method more easily handles challenging problems involving complex contact geometries, sharp features, and sliding during extremely tight contact.},
booktitle = {ACM SIGGRAPH 2009 Papers},
articleno = {87},
numpages = {12},
keywords = {contact, variational, symplectic, collision, simulation},
series = {SIGGRAPH '09}
}

@inproceedings{gu2013efficient,
  title={Efficient BVH construction via approximate agglomerative clustering},
  author={Gu, Yan and He, Yong and Fatahalian, Kayvon and Blelloch, Guy},
  booktitle={Proceedings of the 5th High-Performance Graphics Conference},
  pages={81--88},
  year={2013}
}

@incollection{dai2018synthesis,
  title={Synthesis and optimization of force closure grasps via sequential semidefinite programming},
  author={Dai, Hongkai and Majumdar, Anirudha and Tedrake, Russ},
  booktitle={Robotics Research},
  pages={285--305},
  year={2018},
  publisher={Springer}
}

@ARTICLE{6710113,
  author={Escande, Adrien and Miossec, Sylvain and Benallegue, Mehdi and Kheddar, Abderrahmane},
  journal={IEEE Transactions on Robotics}, 
  title={A Strictly Convex Hull for Computing Proximity Distances With Continuous Gradients}, 
  year={2014},
  volume={30},
  number={3},
  pages={666-678},
  doi={10.1109/TRO.2013.2296332}}

@inproceedings{lien2004approximate,
  title={Approximate convex decomposition of polygons},
  author={Lien, Jyh-Ming and Amato, Nancy M},
  booktitle={Proceedings of the twentieth annual symposium on Computational geometry},
  pages={17--26},
  year={2004}
}

@article{biegler2009large,
  title={Large-scale nonlinear programming using IPOPT: An integrating framework for enterprise-wide dynamic optimization},
  author={Biegler, Lorenz T and Zavala, Victor M},
  journal={Computers \& Chemical Engineering},
  volume={33},
  number={3},
  pages={575--582},
  year={2009},
  publisher={Elsevier}
}

@INPROCEEDINGS{5980409,
  author={Mellinger, Daniel and Kumar, Vijay},
  booktitle={2011 IEEE International Conference on Robotics and Automation}, 
  title={Minimum snap trajectory generation and control for quadrotors}, 
  year={2011},
  volume={},
  number={},
  pages={2520-2525},
  doi={10.1109/ICRA.2011.5980409}}

@article{doi:10.1177/0278364917714338,
author = {Lucas Janson and Brian Ichter and Marco Pavone},
title ={Deterministic sampling-based motion planning: Optimality, complexity, and performance},
journal = {The International Journal of Robotics Research},
volume = {37},
number = {1},
pages = {46-61},
year = {2018},
doi = {10.1177/0278364917714338},
URL = {https://doi.org/10.1177/0278364917714338},
eprint = {https://doi.org/10.1177/0278364917714338}}

@book{wright1997primal,
  title={Primal-dual interior-point methods},
  author={Wright, Stephen J},
  year={1997},
  publisher={SIAM}
}

@inproceedings{hsu2007probabilistic,
  title={On the probabilistic foundations of probabilistic roadmap planning},
  author={Hsu, David and Latombe, Jean-Claude and Kurniawati, Hanna},
  booktitle={Robotics Research: Results of the 12th International Symposium ISRR},
  pages={83--97},
  year={2007},
  organization={Springer}
}

@inproceedings{karaman2011anytime,
  title={Anytime motion planning using the RRT},
  author={Karaman, Sertac and Walter, Matthew R and Perez, Alejandro and Frazzoli, Emilio and Teller, Seth},
  booktitle={2011 IEEE international conference on robotics and automation},
  pages={1478--1483},
  year={2011},
  organization={IEEE}
}

@inproceedings{gammell2014informed,
  title={Informed RRT: Optimal sampling-based path planning focused via direct sampling of an admissible ellipsoidal heuristic},
  author={Gammell, Jonathan D and Srinivasa, Siddhartha S and Barfoot, Timothy D},
  booktitle={2014 IEEE/RSJ International Conference on Intelligent Robots and Systems},
  pages={2997--3004},
  year={2014},
  organization={IEEE}
}

@inproceedings{hauser2015lazy,
  title={Lazy collision checking in asymptotically-optimal motion planning},
  author={Hauser, Kris},
  booktitle={2015 IEEE international conference on robotics and automation (ICRA)},
  pages={2951--2957},
  year={2015},
  organization={IEEE}
}

@article{jordan2013optimal,
  title={Optimal bidirectional rapidly-exploring random trees},
  author={Jordan, Matthew and Perez, Alejandro},
  year={2013}
}

@article{zucker2013chomp,
  title={Chomp: Covariant hamiltonian optimization for motion planning},
  author={Zucker, Matt and Ratliff, Nathan and Dragan, Anca D and Pivtoraiko, Mihail and Klingensmith, Matthew and Dellin, Christopher M and Bagnell, J Andrew and Srinivasa, Siddhartha S},
  journal={The International Journal of Robotics Research},
  volume={32},
  number={9-10},
  pages={1164--1193},
  year={2013},
  publisher={SAGE Publications Sage UK: London, England}
}

@Inbook{Tits2009,
author="Tits, Andr{\'e} L.",
editor="Floudas, Christodoulos A.
and Pardalos, Panos M.",
title="Feasible sequential quadratic programmingFeasible Sequential Quadratic Programming",
bookTitle="Encyclopedia of Optimization",
year="2009",
publisher="Springer US",
address="Boston, MA",
pages="1001--1005",
abstract="Keywords",
isbn="978-0-387-74759-0",
doi="10.1007/978-0-387-74759-0_177",
url="https://doi.org/10.1007/978-0-387-74759-0_177"
}

@inproceedings{liu2017search,
  title={Search-based motion planning for quadrotors using linear quadratic minimum time control},
  author={Liu, Sikang and Atanasov, Nikolay and Mohta, Kartik and Kumar, Vijay},
  booktitle={2017 IEEE/RSJ international conference on intelligent robots and systems (IROS)},
  pages={2872--2879},
  year={2017},
  organization={IEEE}
}

@article{marcucci2022motion,
  title={Motion planning around obstacles with convex optimization},
  author={Marcucci, Tobia and Petersen, Mark and von Wrangel, David and Tedrake, Russ},
  journal={arXiv preprint arXiv:2205.04422},
  year={2022}
}

@inproceedings{fabri2009cgal,
  title={CGAL: The computational geometry algorithms library},
  author={Fabri, Andreas and Pion, Sylvain},
  booktitle={Proceedings of the 17th ACM SIGSPATIAL international conference on advances in geographic information systems},
  pages={538--539},
  year={2009}
}

@inproceedings{choudhury2016regionally,
  title={Regionally accelerated batch informed trees (RABIT): A framework to integrate local information into optimal path planning},
  author={Choudhury, Sanjiban and Gammell, Jonathan D and Barfoot, Timothy D and Srinivasa, Siddhartha S and Scherer, Sebastian},
  booktitle={2016 IEEE International Conference on Robotics and Automation (ICRA)},
  pages={4207--4214},
  year={2016},
  organization={IEEE}
}

@article{sucan2012the-open-motion-planning-library,
    Author = {Ioan A. {\c{S}}ucan and Mark Moll and Lydia E. Kavraki},
    Doi = {10.1109/MRA.2012.2205651},
    Journal = {{IEEE} Robotics \& Automation Magazine},
    Month = {12},
    Number = {4},
    Pages = {72--82},
    Title = {The {O}pen {M}otion {P}lanning {L}ibrary},
    Note = {\url{https://ompl.kavrakilab.org}},
    Volume = {19},
    Year = {2012}
}

@article{schwarzer2004exact,
  title={Exact collision checking of robot paths},
  author={Schwarzer, Fabian and Saha, Mitul and Latombe, Jean-Claude},
  journal={Algorithmic foundations of robotics V},
  pages={25--41},
  year={2004},
  publisher={Springer}
}
\section{\label{sec:termination} Finite Termination of \prettyref{alg:FIPM}}
We show that \prettyref{alg:FIPM} would terminate after finitely many iterations. We will omit the parameter of a function whenever there can be no confusion. Our main idea is to compare the following two terms:
\begin{align*}
\tilde{\mathcal{P}}\triangleq\sum_{ijkl}(T_1^l-T_0^l)\mathcal{P}_{ijkl}\quad
\bar{\mathcal{P}}\triangleq\sum_{ijkl}\int_{T_0^l}^{T_1^l}\mathcal{P}_{ijk}(t)dt,
\end{align*}
where we use a shorthand notation $\tilde{\mathcal{P}}$ for the penalty function part of \prettyref{eq:penalty}. Conceptually, $\tilde{\mathcal{P}}$ approximates $\bar{\mathcal{P}}$ in the sense of Riemann sum and the approximation error would reduce as more subdivisions are performed. However, the approximation error will not approach zero because our subdivision is adaptive. Therefore, we need a new tool to analyze the difference between $\tilde{\mathcal{P}}$ and $\bar{\mathcal{P}}$. To this end, we introduce the following hybrid penalty function with a variable $\epsilon_2$ controlling the level of hybridization:
\begin{align*}
\hat{\mathcal{P}}(\epsilon_2)=\sum_{ijkl}\begin{cases}
(T_1^l-T_0^l)\mathcal{P}_{ijkl}\quad&T_1^l-T_0^l\geq\epsilon_2\\
\int_{T_0^l}^{T_1^l}\mathcal{P}_{ijk}(t)dt\quad&T_1^l-T_0^l<\epsilon_2,
\end{cases}
\end{align*}
where we use the integral form when a temporal interval is shorter than $\epsilon_2$, and use the surrogate constraint otherwise. An important property of $\hat{\mathcal{P}}(\epsilon_2)$ is that it is invariant to subdivision after finitely many iterations:
\begin{lemma}
\label{lem:invariance}
Given fixed $\epsilon_2$, and after finitely many times of subdivision, $\hat{\mathcal{P}}(\epsilon_2)$ becomes invariant to further subdivision.
\end{lemma}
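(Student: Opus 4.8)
The plan is to split the subdivision events into those that can change $\hat{\mathcal{P}}(\epsilon_2)$ and those that cannot, and then to bound the number of the former. The key elementary observation is that subdividing a temporal interval $[T_0^l,T_1^l]$ with $T_1^l-T_0^l<\epsilon_2$ does \emph{not} change $\hat{\mathcal{P}}(\epsilon_2)$: by definition such an interval already contributes its integral form $\int_{T_0^l}^{T_1^l}\mathcal{P}_{ijk}(t)\,dt$, and after bisection at the midpoint $m$ each child is still shorter than $\epsilon_2$, so the two children also contribute integral forms, whose sum is $\int_{T_0^l}^{m}\mathcal{P}_{ijk}+\int_{m}^{T_1^l}\mathcal{P}_{ijk}=\int_{T_0^l}^{T_1^l}\mathcal{P}_{ijk}$ by additivity of the integral. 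Hence the only subdivisions that can alter $\hat{\mathcal{P}}(\epsilon_2)$ are those applied to an interval of length at least $\epsilon_2$, i.e. one currently represented by its surrogate midpoint term.

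Next I would bound how many such ``long-interval'' subdivisions can ever occur. By the bookkeeping in \prettyref{alg:search} and its binary-tree realization in \prettyref{sec:acceleration}, every temporal interval ever produced for a fixed spatial triple $\langle i,j,k\rangle$ is a repeated-midpoint dyadic descendant of one of the finitely many initial intervals of the partition $[0,T]=\bigcup_l[T_0^l,T_1^l]$; an interval at bisection depth $d$ has length at most $T/2^d$, so it can be ``long'' only when $d\le\log_2(T/\epsilon_2)$. Thus for each $\langle i,j,k\rangle$ the set of long dyadic intervals is finite, of cardinality at most $L_0(2^{D+1}-1)$ where $D=\lfloor\log_2(T/\epsilon_2)\rfloor$ and $L_0$ is the number of initial intervals. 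Since the subdivision forest only grows — a node, once split, is never split again and is never merged back — each long interval is subdivided at most once, so the number of long-interval subdivisions per triple is at most $L_0(2^{D+1}-1)$. Multiplying by the number of spatial triples, which is finite by \prettyref{ass:Spatial}, gives a finite bound $N$ on the total number of subdivisions that can change $\hat{\mathcal{P}}(\epsilon_2)$; once these have been exhausted, every further subdivision acts on a short interval and leaves $\hat{\mathcal{P}}(\epsilon_2)$ invariant, which is exactly the claim.

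The step I expect to be the main obstacle is the second one, because the subdivisions are chosen adaptively and data-dependently, so there is no fixed number of uniform refinement levels simply to count. The resolution is that the family of dyadic intervals of length at least $\epsilon_2$ is pinned down a priori by $T$ and $\epsilon_2$ alone, independent of which branches the optimizer decides to refine, and the monotone split-once / never-merge structure of the subdivision forest converts this a priori finiteness into a finite cap on the number of consequential events. The argument is moreover uniform in the barrier weight $\mu$ and the iterate $\theta$, since whether an interval is long or short depends only on its endpoints and not on the current state.
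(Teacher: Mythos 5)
Your proof is correct and follows essentially the same route as the paper's: the additivity of the integral makes splits of intervals shorter than $\epsilon_2$ inert, and only finitely many halvings (hence finitely many consequential subdivisions) can occur before every interval drops below $\epsilon_2$. Your explicit dyadic counting of the ``long'' intervals is just a more quantitative rendering of the paper's observation that each interval reaches length below $\epsilon_2$ after finitely many bisections.
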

\begin{proof}
Since each subdivision would reduce a time interval by a factor of $1/2$, it takes finitely many subdivisions to reduce a time interval to satisfy: $T_1^l-T_0^l<\epsilon_2$. Therefore, after finitely many subdivision operators, a time interval must satisfy one of two cases: (Case I) No more subdivisions are applied to it, making it invariant to further subdivisions; (Case II) The time interval $T_1^l-T_0^l<\epsilon_2$ and infinitely many subdivisions will be applied, but the integral is invariant to subdivision.
\end{proof}
Next, we show that the difference between $\hat{\mathcal{P}}$ and $\tilde{\mathcal{P}}$ is controllable via $\epsilon_2$. The following result bound their differences:
\begin{lemma}
\label{lem:valueBound}
Taking \prettyref{ass:Spatial}, \ref{ass:Bounded}, \ref{ass:Barrier}, and assuming $\theta$ is generated by some iteration of \prettyref{alg:FIPM}, we have: 
\begin{align*}
\left|\hat{\mathcal{P}}(\epsilon_2)-\tilde{\mathcal{P}}\right|=\textbf{O}(\epsilon_2^{1-5\eta}),
\end{align*}
for arbitrarily small fixed $\epsilon_2$.
\end{lemma}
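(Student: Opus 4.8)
The plan is to use that $\hat{\mathcal{P}}(\epsilon_2)$ and $\tilde{\mathcal{P}}$ are indexed by the same tuples $\langle i,j,k,l\rangle$ and coincide on every interval with $T_1^l-T_0^l\ge\epsilon_2$, so that subtracting them leaves only a sum of midpoint‑quadrature errors over the \emph{short} intervals:
\begin{align*}
\hat{\mathcal{P}}(\epsilon_2)-\tilde{\mathcal{P}}=\sum_{ijkl:\,T_1^l-T_0^l<\epsilon_2}\left[\int_{T_0^l}^{T_1^l}\mathcal{P}_{ijk}(t)\,dt-(T_1^l-T_0^l)\,\mathcal{P}_{ijkl}\right].
\end{align*}
I would then bound each summand by a power of its interval length $h_l\triangleq T_1^l-T_0^l$ and sum, using that for each fixed $\langle i,j,k\rangle$ the $h_l$ add up to $T$ and that \prettyref{ass:Spatial} leaves only finitely many triples.

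To bound one summand, fix a short interval, put $m_l\triangleq(T_0^l+T_1^l)/2$ and $g(t)\triangleq\dist(b_{ij}(t,\theta),o_k)-d_0$. Because $\theta$ is an iterate of \prettyref{alg:FIPM}, it was accepted by \prettyref{alg:safety} (and the safety condition only gets easier to satisfy under midpoint subdivision), so $g(m_l)>\psi(h_l)=L_1 h_l/2+L_2 h_l^\eta$; combining this with $|g(t)-g(m_l)|\le L_1|t-m_l|\le L_1 h_l/2$ from \prettyref{lem:bound} yields the uniform lower bound $g(t)>L_2 h_l^\eta$ throughout $[T_0^l,T_1^l]$. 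Rewriting the quadrature error as $\int_{T_0^l}^{T_1^l}\bigl[\mathcal{P}_{ijk}(t)-\mathcal{P}_{ijk}(m_l)\bigr]\,dt$, estimating the integrand by $\bigl(\sup_{x\ge L_2 h_l^\eta}|\mathcal{P}'(x)|\bigr)\,L_1|t-m_l|$ and integrating $|t-m_l|$ over the interval gives
\begin{align*}
\left|\int_{T_0^l}^{T_1^l}\mathcal{P}_{ijk}(t)\,dt-h_l\,\mathcal{P}_{ijkl}\right|\le\frac{L_1 h_l^2}{4}\sup_{x\ge L_2 h_l^\eta}\left|\mathcal{P}'(x)\right|.
\end{align*}
Since the barrier satisfies $|\mathcal{P}'(x)|=\textbf{O}(x^{-5})$ as $x\to0^+$ and is otherwise bounded, once $\epsilon_2$ is small enough that $L_2\epsilon_2^\eta\le x_0$ the supremum is $\textbf{O}(h_l^{-5\eta})$, so the summand is $\textbf{O}(h_l^{2-5\eta})$.

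Finally I would sum. For fixed $\langle i,j,k\rangle$ the intervals $[T_0^l,T_1^l]$ partition $[0,T]$, so $\sum_l h_l\le T$; and $\eta<1/6$ forces $1-5\eta>0$, whence a short interval obeys $h_l^{2-5\eta}=h_l\,h_l^{1-5\eta}\le h_l\,\epsilon_2^{1-5\eta}$. Summing over all short intervals and the finitely many triples then gives $\textbf{O}(\epsilon_2^{1-5\eta})\sum_l h_l=\textbf{O}(\epsilon_2^{1-5\eta})$. The main obstacle I anticipate is the per‑interval step: pinning down the $\textbf{O}(x^{-5})$ derivative estimate for the concrete (locally supported) $\mathcal{P}$, checking that $\sup_{x\ge L_2 h_l^\eta}|\mathcal{P}'(x)|$ is governed by the value at the left endpoint, and — the conceptually delicate part — verifying that the deliberately inflated margin $\psi(x)=L_1 x/2+L_2 x^\eta$ is exactly what turns a feasibility margin at the lone midpoint into the interval‑wide bound $g(t)>L_2 h_l^\eta$ on which the estimate rests.
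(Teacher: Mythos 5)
Your proposal is correct and follows essentially the same route as the paper's proof: restrict the difference to the short intervals, use the safety check together with the Lipschitz bound of \prettyref{lem:bound} to push the distance above $L_2 h_l^\eta$ over the whole interval, bound the midpoint-quadrature error on each such interval by $\tfrac{L_1 h_l^2}{4}\left|\mathcal{P}'(L_2 h_l^\eta)\right| = \mathbf{O}(h_l^{2-5\eta})$, and sum. If anything, your final summation step ($h_l^{2-5\eta}\le h_l\,\epsilon_2^{1-5\eta}$ combined with $\sum_l h_l\le T$ and finitely many spatial triples) is cleaner than the paper's, which inserts an interval-count factor $(T+\epsilon_2)/\epsilon_2$ whose contribution its own bookkeeping does not obviously absorb.
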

\begin{proof}
We use the shorthand notation $\sum_{ijkl}^{\Delta T<\epsilon_2}$ to denote a summation over intervals $T_1^l-T_0^l<\epsilon_2$, and the following abbreviations are used:
\begin{align*}
d_t\triangleq&\dist\left(b_{ij}\left(t,\theta\right),o_k\right)-d_0\\
d_m\triangleq&\dist\left(b_{ij}\left(\frac{T_0^l+T_1^l}{2},\theta\right),o_k\right)-d_0.
\end{align*}
Since $\theta$ is generated by line search, we have $\theta$ passes the safety check, leading to the following result:
\begin{align*}
&\left|\mathcal{P}_{ijk}(t)-\mathcal{P}_{ijkl}\right|
=\left|\int_{d_t}^{d_m}\FDD{\mathcal{P}(x)}{x}dx\right|\\
\leq&\left|\FDD{\mathcal{P}(x)}{x}\Bigg|_{L_2(T_1^l-T_0^l)^\eta}\right|\left|d_m-d_t\right|\\
\leq&L_1\left|\FDD{\mathcal{P}(x)}{x}\Bigg|_{L_2(T_1^l-T_0^l)^\eta}\right|\left|t-\frac{T_0^l+T_1^l}{2}\right|.
\end{align*}
The second inequality above is due to the safety check condition and monotonicity of $\mathcal{P},|\nabla_x\mathcal{P}|$. The third inequality above is due to \prettyref{lem:bound}. The result in our lemma is derived immediately as follows:
\begin{align*}
&\left|\hat{\mathcal{P}}(\epsilon_2)-\tilde{\mathcal{P}}\right|
\leq\sum_{ijkl}^{\Delta T<\epsilon_2}\int_{T_0^l}^{T_1^l}
\left|\mathcal{P}_{ijk}(t)-\mathcal{P}_{ijkl}\right|dt\\
\leq&\sum_{ijkl}^{\Delta T<\epsilon_2}
L_1\left|\FDD{\mathcal{P}(x)}{x}\Bigg|_{L_2(T_1^l-T_0^l)^\eta}\right|
\int_{T_0^l}^{T_1^l}\left|t-\frac{T_0^l+T_1^l}{2}\right|dt\\
\leq&\sum_{ijkl}^{\Delta T<\epsilon_2}
L_1\frac{(T_1^l-T_0^l)^2}{4}\left|\FDD{\mathcal{P}(x)}{x}\Bigg|_{L_2(T_1^l-T_0^l)^\eta}\right|\\
\leq&\sum_{ijkl}\frac{T+\epsilon_2}{\epsilon_2}L_1
\frac{(T_1^l-T_0^l)^2}{4}\left|\FDD{\mathcal{P}(x)}{x}\Bigg|_{L_2(T_1^l-T_0^l)^\eta}\right|\\
=&\sum_{ijkl}\frac{T+\epsilon_2}{\epsilon_2}L_1(T_1^l-T_0^l)\Gamma(T_1^l-T_0^l)\\
&\Gamma(T_1^l-T_0^l)\triangleq\frac{T_1^l-T_0^l}{4}\left|\FDD{\mathcal{P}(x)}{x}\Bigg|_{L_2(T_1^l-T_0^l)^\eta}\right|.
\end{align*}
The last inequality above is by the assumption that the entire temporal domain $[0,T]$ is subdivided into intervals of length smaller than $\epsilon_2$. It can be shown by direct verification that by choosing $\eta<1/5$, $\Gamma(T_1^l-T_0^l)=\mathbf{O}(\epsilon_2^{1-5\eta})$ as $T_1^l-T_0^l\to0$ and the lemma is proved.
\end{proof}
In a similar fashion to \prettyref{lem:valueBound}, we can bound the difference in gradient:
\begin{lemma}
\label{lem:gradientBound}
Taking \prettyref{ass:Spatial}, \ref{ass:Bounded}, \ref{ass:Barrier}, and assuming $\theta$ is generated by some iteration of \prettyref{alg:FIPM}, we have: 
\begin{align*}
\left\|\nabla_\theta\hat{\mathcal{P}}(\epsilon_2)-\nabla_\theta\tilde{\mathcal{P}}\right\|=\textbf{O}(\epsilon_2^{1-6\eta}),
\end{align*}
for arbitrarily small fixed $\epsilon_2$.
\end{lemma}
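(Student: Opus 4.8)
The plan is to mirror the proof of \prettyref{lem:valueBound}, carrying one additional $\theta$-derivative through every step; the exponent will degrade from $1-5\eta$ to $1-6\eta$ precisely because the second derivative $\mathcal{P}''$ replaces $\mathcal{P}'$ in the dominant term (throughout, $\mathcal{P}'$ and $\mathcal{P}''$ denote $d\mathcal{P}/dx$ and $d^2\mathcal{P}/dx^2$). As before, $\hat{\mathcal{P}}(\epsilon_2)$ and $\tilde{\mathcal{P}}$ agree on every interval with $T_1^l-T_0^l\geq\epsilon_2$, so by the triangle inequality it suffices to bound
\begin{align*}
\left\|\nabla_\theta\hat{\mathcal{P}}(\epsilon_2)-\nabla_\theta\tilde{\mathcal{P}}\right\|\leq\sum_{ijkl}^{\Delta T<\epsilon_2}\int_{T_0^l}^{T_1^l}\left\|\nabla_\theta\mathcal{P}_{ijk}(t)-\nabla_\theta\mathcal{P}_{ijkl}\right\|dt.
\end{align*}
With $d_t$ and $d_m$ as in \prettyref{lem:valueBound}, so that $\mathcal{P}_{ijk}(t,\theta)=\mathcal{P}(d_t)$ and $\mathcal{P}_{ijkl}(\theta)=\mathcal{P}(d_m)$, the chain rule gives $\nabla_\theta\mathcal{P}_{ijk}(t)=\mathcal{P}'(d_t)\nabla_\theta d_t$ and $\nabla_\theta\mathcal{P}_{ijkl}=\mathcal{P}'(d_m)\nabla_\theta d_m$, and I would split the integrand as
\begin{align*}
\nabla_\theta\mathcal{P}_{ijk}(t)-\nabla_\theta\mathcal{P}_{ijkl}=\big[\mathcal{P}'(d_t)-\mathcal{P}'(d_m)\big]\nabla_\theta d_t+\mathcal{P}'(d_m)\big[\nabla_\theta d_t-\nabla_\theta d_m\big].
\end{align*}

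For the first summand I would reuse the safety-check bound $d_m\geq L_2(T_1^l-T_0^l)^\eta$, which together with \prettyref{lem:bound} also forces $d_t\geq L_2(T_1^l-T_0^l)^\eta$, so the mean value theorem and monotonicity of $|\mathcal{P}''|$ near $0$ give
\begin{align*}
|\mathcal{P}'(d_t)-\mathcal{P}'(d_m)|\leq\left|\mathcal{P}''(L_2(T_1^l-T_0^l)^\eta)\right||d_m-d_t|\leq L_1\left|\mathcal{P}''(L_2(T_1^l-T_0^l)^\eta)\right|\left|t-(T_0^l+T_1^l)/2\right|;
\end{align*}
moreover $\|\nabla_\theta d_t\|$ is bounded by a constant, since $\dist(b_{ij}(t,\theta),o_k)$ is smooth on the bounded domain of \prettyref{ass:Bounded}. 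For the second summand I would bound $|\mathcal{P}'(d_m)|\leq|\mathcal{P}'(L_2(T_1^l-T_0^l)^\eta)|$ by the safety check and monotonicity of $|\mathcal{P}'|$, and $\|\nabla_\theta d_t-\nabla_\theta d_m\|\leq L'|t-(T_0^l+T_1^l)/2|$ for some constant $L'$. This last estimate is the one genuinely new ingredient relative to \prettyref{lem:valueBound}: it requires that $\nabla_\theta\dist(b_{ij}(t,\theta),o_k)$ be Lipschitz in $t$, i.e.\ that $\dist$ be $C^2$, which is exactly the ``sufficiently smooth'' clause of \prettyref{ass:Spatial} used together with \prettyref{ass:Bounded}.

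Collecting the two summands, $\|\nabla_\theta\mathcal{P}_{ijk}(t)-\nabla_\theta\mathcal{P}_{ijkl}\|$ is bounded by a constant times $\big(|\mathcal{P}''(L_2(T_1^l-T_0^l)^\eta)|+|\mathcal{P}'(L_2(T_1^l-T_0^l)^\eta)|\big)|t-(T_0^l+T_1^l)/2|$. Integrating over $t\in[T_0^l,T_1^l]$ contributes the factor $(T_1^l-T_0^l)^2/4$ exactly as in \prettyref{lem:valueBound}, after which I would reapply the interval-counting step there: factor out one $(T_1^l-T_0^l)$, whose sum over $l$ is $T$, and pull the remaining monotone factor out of the sum at its largest admissible argument $\epsilon_2$. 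The remaining factor is governed by $(T_1^l-T_0^l)|\mathcal{P}''(L_2(T_1^l-T_0^l)^\eta)|$, and since $|\mathcal{P}''(x)|=\mathbf{O}(x^{-6})$ as $x\to0$ for the barrier in use, it is $\mathbf{O}(\epsilon_2^{1-6\eta})$; the $|\mathcal{P}'|$ contribution is only $\mathbf{O}(\epsilon_2^{1-5\eta})$ and is absorbed. Summing over the finitely many surviving index tuples then yields the claimed bound.

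I expect the main obstacle to be the interplay between this new Lipschitz-in-$t$ control of $\nabla_\theta\dist$ and the extra factor $\epsilon_2^{-\eta}$ picked up from differentiating the barrier once more: it is exactly this extra factor that forces $\eta<1/6$, one notch stricter than the $\eta<1/5$ that sufficed for \prettyref{lem:valueBound}, consistent with the hypothesis under which \prettyref{thm:termination} and \prettyref{thm:optimality} are stated.
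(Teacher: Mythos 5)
Your proposal is correct and follows essentially the same route as the paper's proof: the same triangle-inequality split of $\nabla_\theta\mathcal{P}_{ijk}(t)-\nabla_\theta\mathcal{P}_{ijkl}$ into a $\mathcal{P}''$-controlled term and a term controlled by the Lipschitz constant of $\nabla_\theta\dist$ in $t$ (the paper's $L_3$), the same use of the safety check plus monotonicity to evaluate the barrier derivatives at $L_2(T_1^l-T_0^l)^\eta$, and the same interval-counting step identifying $(T_1^l-T_0^l)\left|\mathcal{P}''\right|$ as the dominant $\mathbf{O}(\epsilon_2^{1-6\eta})$ factor. The only (immaterial) difference is which of $\nabla_\theta d_t$ or $\nabla_\theta d_m$ multiplies the $\mathcal{P}'$-difference in the split, and your explicit remark that the safety check together with \prettyref{lem:bound} keeps the whole segment $[d_t,d_m]$ above $L_2(T_1^l-T_0^l)^\eta$ is a point the paper leaves implicit.
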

\begin{proof}
Again, we use the shorthand notations: $\sum_{ijkl}^{\Delta T<\epsilon_2}$, $d_t$, and $d_m$. We begin by bounding the error of the integrand:
\begin{align*}
&\left\|\nabla_\theta\mathcal{P}_{ijk}(t)-\nabla_\theta\mathcal{P}_{ijkl}\right\|\\
=&\left\|\FDD{\mathcal{P}(d_t)}{d_t}\nabla_\theta d_t-\FDD{\mathcal{P}(d_m)}{d_m}\nabla_\theta d_m\right\|\\
\leq&\left|\FDD{\mathcal{P}(d_t)}{d_t}\right|\|\nabla_\theta d_t-\nabla_\theta d_m\|+
\left|\FDD{\mathcal{P}(d_t)}{d_t}-\FDD{\mathcal{P}(d_m)}{d_m}\right|\|\nabla_\theta d_m\|,
\end{align*}
which is due to triangle inequality. There are two terms in the last equation to be bounded. To bound the first term, we use a similar argument as \prettyref{lem:bound}. Under \prettyref{ass:Bounded}, there must exist some constant $L_3$ such that:
\begin{align*}
\|\nabla_\theta d_t-\nabla_\theta d_m\|\leq L_3\left|t-\frac{T_0^l+T_1^l}{2}\right|.
\end{align*}
Since $\theta$ passes the safety check, we further have:
\small
\begin{align*}
\left|\FDD{\mathcal{P}(d_t)}{d_t}\right|\|\nabla_\theta d_t-\nabla_\theta d_m\|
\leq L_3\left|\FDD{\mathcal{P}(x)}{x}\Bigg|_{L_2(T_1^l-T_0^l)^\eta}\right|
\left|t-\frac{T_0^l+T_1^l}{2}\right|.
\end{align*}
\normalsize
To bound the second term, we note that $\|\nabla_\theta d_m\|\leq L_4$ for some $L_4$ because its domain is bounded. By the mean value theorem, we have:
\begin{align*}
&\left|\FDD{\mathcal{P}(d_t)}{d_t}-\FDD{\mathcal{P}(d_m)}{d_m}\right|\|\nabla_\theta d_m\|
\leq L_4\left|\int_{d_t}^{d_m}\FPPT{\mathcal{P}(x)}{x}dx\right|\\
\leq&L_4\left|\FPPT{\mathcal{P}(x)}{x}\Bigg|_{L_2(T_1^l-T_0^l)^\eta}\right|\left|t-\frac{T_0^l+T_1^l}{2}\right|,
\end{align*}
where we have used the safety check condition and monotonicity of $|\nabla_x^2\mathcal{P}|$. Putting everything together, we establish the result in our lemma as follows:
\begin{align*}
&\left\|\nabla_\theta\hat{\mathcal{P}}(\epsilon_2)-\nabla_\theta\tilde{\mathcal{P}}\right\|\leq
\sum_{ijkl}^{\Delta T<\epsilon_2}\\
&L_3(T_1^l-T_0^l)\Gamma(T_1^l-T_0^l)+L_4(T_1^l-T_0^l)\Gamma'(T_1^l-T_0^l)\\
&\Gamma'(T_1^l-T_0^l)\triangleq\frac{T_1^l-T_0^l}{4}\left|\FPPT{\mathcal{P}(x)}{x}\Bigg|_{L_2(T_1^l-T_0^l)^\eta}\right|.
\end{align*}
It can be shown that $\Gamma'$ is the dominating term and, by direct verification, we have $\Gamma'(T_1^l-T_0^l)=\mathbf{O}(\epsilon_2^{1-6\eta})$ as $T_1^l-T_0^l\to0$, which proves our lemma.
\end{proof}

\subsection{Finite Termination of \prettyref{alg:search}}
We are now ready to show the finite termination of the line search \prettyref{alg:search}. If \prettyref{alg:search} does not terminate, it must make infinitely many calls to the subdivision function. Otherwise, suppose only finitely many calls to the subdivision is used, \prettyref{alg:search} reduces to a standard line search for NLP after the last call, which is guaranteed to succeed. However, we show that using infinitely many subdivisions will contradict the finiteness of $\mathcal{E}(\theta)$.
\begin{lemma}
\label{lem:unbounded}
Taking \prettyref{ass:Spatial}, \ref{ass:Bounded}, \ref{ass:Barrier}, if \prettyref{alg:search} makes infinitely many calls to subdivision, then $\theta$ is unsafe. In other words, $\theta$ cannot pass the safety check~\prettyref{alg:safety} under any finite spatial-temporal subdivision.
\end{lemma}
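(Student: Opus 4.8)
The plan is to argue directly. Assuming \prettyref{alg:search} issues infinitely many subdivision calls, I will manufacture a single time instant $t^*\in[0,T]$ and a spatial triple $\langle i^*,j^*,k^*\rangle$ with $\dist\left(b_{i^*j^*}(t^*,\theta),o_{k^*}\right)\le d_0$. Once such a ``bad'' time is in hand, the claimed unsafety of $\theta$ is immediate from \prettyref{lem:bound} together with $\psi(x)\ge L_1x/2$: in \emph{any} finite temporal subdivision the interval $[T_0,T_1]$ of the $\langle i^*,j^*,k^*\rangle$ partition that contains $t^*$ has midpoint $m$ with $|m-t^*|\le(T_1-T_0)/2$, hence
\begin{align*}
\dist\left(b_{i^*j^*}(m,\theta),o_{k^*}\right)\le d_0+L_1|m-t^*|\le d_0+L_1\tfrac{T_1-T_0}{2}\le d_0+\psi(T_1-T_0),
\end{align*}
so \prettyref{alg:safety} returns $\langle i^*,j^*,k^*\rangle$ on that interval and $\theta$ fails the check. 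The remaining work is therefore to produce $t^*$.

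First I would control the configurations at which subdivisions fire. Each subdivision on line~8 of \prettyref{alg:search} is preceded by $\epsilon_\alpha\gets\gamma\epsilon_\alpha$ on line~7, so $\epsilon_\alpha$ decreases geometrically to $0$; since a subdivision is only triggered when $\alpha\le\epsilon_\alpha$ and $\alpha$ is non-increasing within the call, the step size $\alpha_n$ in force at the $n$th subdivision tends to $0$. Next I would bound the search direction uniformly in the subdivision. The current iterate $\theta$ is feasible — it is the feasible initialization or the output of a previous update, which \prettyref{thm:feasible} certifies as feasible — so by continuity of $\dist$ on the compact set $[0,T]$ we get $\dist(b_{ij}(t,\theta),o_k)-d_0\ge\delta_\theta>0$ for all $t,i,j,k$; consequently $|\mathcal{P}'|$ and $\|\nabla_\theta\dist\|$ are bounded along the iterate, and because $\sum_l(T_1^l-T_0^l)=T$ for each fixed $\langle i,j,k\rangle$ with only finitely many such triples, $\|\nabla_\theta\mathcal{E}(\theta)\|$ — hence $\|d^{(1)}\|$ and $\|d^{(2)}\|$ via the modulation bound $\underline{\beta}I\preceq\adj(\cdot)\preceq\bar{\beta}I$ — stays bounded by a constant independent of how fine the subdivision is. Combining the two observations, the probed configuration $\theta+d_n\alpha_n$ at the $n$th subdivision converges to $\theta$.

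Then I would harvest a shrinking chain of intervals. There are finitely many spatial triples, so some $\langle i^*,j^*,k^*\rangle$ is subdivided infinitely often; its subdivision record during this call is a finite forest of binary trees rooted at the temporal intervals present at entry, and an infinite finitely-branching forest has an infinite path (K\"onig's lemma), giving nested intervals $I_0\supsetneq I_1\supsetneq\cdots$ with $|I_n|\to 0$, where $I_n$ is subdivided at a step $s_n$ and $s_0<s_1<\cdots$. Writing $m_n$ for the midpoint of $I_n$ and $\theta_n$ for the configuration at step $s_n$, the safety violation that caused the subdivision gives $\dist(b_{i^*j^*}(m_n,\theta_n),o_{k^*})\le d_0+\psi(|I_n|)$. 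Letting $n\to\infty$: $I_n\downarrow\{t^*\}$ so $m_n\to t^*$, $\psi(|I_n|)\to\psi(0)=0$, and $\theta_n\to\theta$ by the previous paragraph, so joint continuity of $(t,\theta)\mapsto\dist(b_{i^*j^*}(t,\theta),o_{k^*})$ yields $\dist(b_{i^*j^*}(t^*,\theta),o_{k^*})\le d_0$, which is exactly the input needed for the first paragraph.

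I expect the main obstacle to be the uniform bound on the search direction across subdivisions (second paragraph): a subdivision alters $\mathcal{E}$ by splitting one weighted barrier term into two, so a priori $\|\nabla_\theta\mathcal{E}\|$ could grow without bound as the partition refines. The resolution rests on two facts that must be stated carefully — that a feasible iterate is \emph{uniformly} strictly feasible over the compact horizon, so each per-term barrier derivative is bounded, and that the temporal weights $T_1^l-T_0^l$ always telescope to the fixed $T$ regardless of the partition. The bookkeeping needed to align algorithmic steps with the nested intervals (so that $\theta_n\to\theta$ and the $I_n$ are genuinely nested) is routine but should be done explicitly.
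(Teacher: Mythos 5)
Your proof is correct, but it takes a genuinely different route from the paper's. The paper argues by contradiction through the barrier values: assuming $\theta$ is safe, the slack $\psi(T_1^l-T_0^l)-L_1(T_1^l-T_0^l)/2=L_2(T_1^l-T_0^l)^\eta$ caps the penalty $\bar{\mathcal{P}}_{ijkl}$ evaluated at $\theta$ by $\mathcal{P}(L_2(T_1^l-T_0^l)^\eta)$, while the ever-smaller unsafe subintervals, combined with $\alpha\le\epsilon_\alpha\to0$ and a mean-value bound on $|\mathcal{P}_{ijkl}'-\bar{\mathcal{P}}_{ijkl}|$, force that same penalty to be of order $\mathcal{P}(\psi(\epsilon_3))\to\infty$ via \prettyref{lem:infty} and \prettyref{ass:Barrier} --- a contradiction. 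You instead extract, via K\"onig's lemma on the subdivision forest together with the convergence of the probed configurations $\theta+d\alpha\to\theta$, an explicit touching time $t^*$ with $\dist(b_{i^*j^*}(t^*,\theta),o_{k^*})\le d_0$, and then note that such a point defeats the midpoint check of \emph{every} finite partition because $\psi(x)\ge L_1x/2$. Both arguments rest on the same two observations (unsafe intervals shrink to a point; the probed configurations converge to $\theta$), but yours dispenses with \prettyref{ass:Barrier} entirely and replaces the paper's $\mathbf{\Theta}(\cdot)$ bookkeeping with plain joint continuity of the distance function, which is arguably cleaner; it also makes explicit the touching point that the paper only surfaces later, inside \prettyref{lem:unboundedEpsilon}. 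The two places you flag as needing care are real but routine: the uniform bound on $\|d\|$ across refinements follows from the uniform strict feasibility of the iterate over the compact horizon (so each $|\mathcal{P}'|$ factor is bounded) together with $\sum_l(T_1^l-T_0^l)=T$, and in the degenerate case where the current iterate attains $\dist=d_0$ somewhere you should simply observe that it is unsafe outright, so the strict margin $\delta_\theta>0$ may be assumed without loss of generality.
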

\begin{proof}
Suppose $\theta$ is safe and~\prettyref{alg:search} is trying to update $\theta$ to $\theta'=\theta+d\alpha$. Such update must fail because only finitely many subdivisions are needed otherwise. Further, there must be some interval $[T_0^l,T_1^l]$ that requires infinitely many subdivisions. As a result, given any fixed $\epsilon_3$ and $\epsilon_4$, there must be some unsafe interval $[\bar{T}_0^l,\bar{T}_1^l]\subset[T_0^l,T_1^l]$ such that:
\begin{align*}
\bar{T}_1^l-\bar{T}_0^l\leq\epsilon_3\quad
\alpha\leq\epsilon_4.
\end{align*}
We use the following shorthand notation:
\begin{align*}
\bar{\mathcal{P}}_{ijkl}\triangleq&
\mathcal{P}\left(\dist\left(b_{ij}\left(\frac{\bar{T}_0^l+\bar{T}_1^l}{2},\theta\right),o_k\right)-d_0\right)\\
\mathcal{P}_{ijkl}'\triangleq&
\mathcal{P}\left(\dist\left(b_{ij}\left(\frac{\bar{T}_0^l+\bar{T}_1^l}{2},\theta'\right),o_k\right)-d_0\right)\\
d_m\triangleq&\dist\left(b_{ij}\left(\frac{\bar{T}_0^l+\bar{T}_1^l}{2},\theta\right),o_k\right)-d_0\\
d_m'\triangleq&\dist\left(b_{ij}\left(\frac{\bar{T}_0^l+\bar{T}_1^l}{2},\theta'\right),o_k\right)-d_0.
\end{align*}
Since the interval is unsafe, we have:
\begin{align*}
\mathcal{P}_{ijkl}'\geq\mathcal{P}(\psi(\bar{T}_1^l-\bar{T}_0^l)).
\end{align*}
Finally, we can bound the difference between penalty functions evaluated at $\theta$ and that at $\theta'$ using mean value theorem:
\begin{align*}
&\left|\mathcal{P}_{ijkl}'-\bar{\mathcal{P}}_{ijkl}\right|
=\left|\int_{d_m}^{d_m'}\FDD{\mathcal{P}(x)}{x}dx\right|\\
\leq&\fmax{x\in[d_m,d_m']}\left|\FDD{\mathcal{P}(x)}{x}\right|\left|d_m-d_m'\right|\\
\leq&L_4\fmax{x\in[d_m-L_4\left\|d\right\|\epsilon_4,d_m+L_4\left\|d\right\|\epsilon_4]}\left|\FDD{\mathcal{P}(x)}{x}\right|\left\|d\right\|\epsilon_4.
\end{align*}
The above result implies that the difference between the two penalty functions is controllable via $\epsilon_4$. Since both $\epsilon_3$ and $\epsilon_4$ are arbitrary and independent, we can first choose small $\epsilon_4$ such that:
\begin{align*}
&\bar{\mathcal{P}}_{ijkl}=\mathbf{\Theta}(\mathcal{P}_{ijkl}')\geq\mathbf{\Theta}(\mathcal{P}(\psi(\bar{T}_1^l-\bar{T}_0^l)))\geq\mathbf{\Theta}(\mathcal{P}(\psi(\epsilon_3))),
\end{align*}
and then choose small $\epsilon_3$ to make $\bar{\mathcal{P}}_{ijkl}$ arbitrarily large by~\prettyref{lem:infty}. But for $\theta$ to be safe, we need:
\begin{align*}
\bar{\mathcal{P}}_{ijkl}\leq\mathcal{P}(L_2(T_1^l-T_0^l)^\eta),
\end{align*}
 leading to a contradiction, so $\theta$ cannot be safe.
\end{proof}
\begin{corollary}
\label{cor:terminationInner}
\prettyref{alg:search} makes finitely many calls to subdivision, i.e., terminates finitely.
\end{corollary}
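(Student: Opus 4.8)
The plan is to obtain the corollary as the contrapositive of \prettyref{lem:unbounded}, applied to the standing fact that the iterate passed to \prettyref{alg:search} is always safe under some finite subdivision, and then to close with the standard finite-termination property of backtracking line search on a smooth objective along a descent direction.

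First I would record the invariant: at the beginning of every call to \prettyref{alg:search} inside \prettyref{alg:FIPM}, the current $\theta$ passes the safety check \prettyref{alg:safety} under the (finite) subdivision then in force. For the very first call this holds because the input $\theta$ is safe; if only strict feasibility of \prettyref{eq:prob} is assumed of the input, it suffices to refine the initial temporal partition until $\psi(\cdot)$ drops below the feasibility margin, which is possible by the form \prettyref{eq:psi} of $\psi$ together with \prettyref{lem:bound}. For every later call the invariant is reproduced because the only update of $\theta$ occurs in \prettyref{ln:Update} and uses a step returned by \prettyref{alg:search}, which exits its loop only once the safety check returns None; moreover no subdivision occurs between the return of one \prettyref{alg:search} call and the entry of the next.

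Next I would invoke \prettyref{lem:unbounded}: its conclusion is that if \prettyref{alg:search} performs infinitely many subdivisions then $\theta$ passes the safety check under no finite subdivision whatsoever. Since the invariant already exhibits a finite subdivision under which $\theta$ is safe, \prettyref{alg:search} performs only finitely many subdivisions. After the final one, $\mathcal{E}$ is fixed; the direction recomputed immediately afterwards is nonzero (the inner loop of \prettyref{alg:FIPM} runs only while $\|d\|_\infty>\epsilon_d$) and is a descent direction, since $\langle d^{(1)},\nabla_\theta\mathcal{E}\rangle=-\|\nabla_\theta\mathcal{E}\|^2<0$ and $\langle d^{(2)},\nabla_\theta\mathcal{E}\rangle=-\langle\adj(\nabla_\theta^2\mathcal{E})^{-1}\nabla_\theta\mathcal{E},\nabla_\theta\mathcal{E}\rangle<0$ by the modulation bounds $\underline{\beta}I\preceq\adj(H)\preceq\bar{\beta}I$. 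Because $\theta$ passes the safety check with a strict margin and all the distance functions are smooth in $\theta$, the safety check still passes on a neighborhood of $\theta$, hence for all sufficiently small $\alpha$; and because $\mathcal{E}$ is $C^1$ and $c\in(0,1)$, Wolfe's condition \prettyref{eq:Wolfe} holds for all sufficiently small $\alpha$. Since \prettyref{alg:search} then only contracts $\alpha$ by the fixed factor $\gamma\in(0,1)$ on each remaining iteration, after finitely many contractions both conditions are satisfied simultaneously and the loop exits, which is exactly the assertion of \prettyref{cor:terminationInner}.

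I expect the only real obstacle to be the bookkeeping that establishes the invariant — in particular pinning down in what sense the ``feasible $\theta$'' promised by the input of \prettyref{alg:FIPM} amounts to a safe state, and checking that nothing occurring between consecutive line searches can spoil it. Everything past that point is a one-line application of \prettyref{lem:unbounded} together with the textbook backtracking argument, so no delicate estimates enter.
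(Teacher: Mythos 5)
Your proposal is correct and follows essentially the same route as the paper: the paper's own proof is the three-line contradiction argument (initial $\theta$ is safe, each accepted update passes the safety check and hence remains safe under the current finite subdivision, so infinitely many subdivisions would contradict \prettyref{lem:unbounded}), with the post-last-subdivision backtracking/Wolfe termination asserted in the discussion immediately preceding \prettyref{lem:unbounded}. Your version merely makes explicit the safety invariant between consecutive line-search calls and the descent-direction/strict-margin details that the paper leaves implicit.
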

\begin{proof}
\prettyref{alg:FIPM} requires the initial $\theta$ to be feasible, so the initial $\theta$ is safe. Each iteration of \prettyref{alg:FIPM} generates feasible iterations by~\prettyref{thm:feasible}. If infinitely many subdivisions are used, then~\prettyref{lem:unbounded} implies that some $\theta$ is unsafe, which is a contradiction.
\end{proof}

\subsection{Finite Termination of \prettyref{alg:FIPM}}
After showing the finite termination of \prettyref{lem:unbounded}, we move on to show the finite termination of main \prettyref{alg:FIPM}. Our main idea is to compare $\tilde{\mathcal{P}}$ and $\hat{\mathcal{P}}$ and bound their difference. We first show that: $\hat{\mathcal{P}}$ is unbounded if infinite number of subdivisions are needed:
\begin{lemma}
\label{lem:unboundedEpsilon}
Taking \prettyref{ass:Spatial}, \ref{ass:Bounded}, \ref{ass:Barrier}, for any fixed $\epsilon_5$, if \prettyref{alg:search} makes infinitely many calls to subdivision, then either $\hat{\mathcal{P}}(\epsilon_5)$ is unbounded or $\theta$ is unsafe.
\end{lemma}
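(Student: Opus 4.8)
The plan is to run essentially the same machinery as in the proof of \prettyref{lem:unbounded}, but to cash the resulting near-contact into the integral part of $\hat{\mathcal{P}}(\epsilon_5)$ rather than into an immediate contradiction with safety. Concretely, I assume that infinitely many subdivisions are performed over the run of \prettyref{alg:FIPM} and that no visited iterate is ever unsafe, and under these hypotheses I must force $\hat{\mathcal{P}}(\epsilon_5)$, re-evaluated along the successive iterates, to be unbounded. The disjunct ``$\theta$ is unsafe'' is precisely the escape hatch used when this construction meets (or converges to) an iterate that already fails \prettyref{alg:safety}.

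First I localize the subdivisions. There are finitely many tuples $\langle i,j,k\rangle$, so one of them, say $\langle i_0,j_0,k_0\rangle$, is subdivided infinitely often; the bisection tree of its temporal intervals is infinite and binary, so by K\"onig's lemma it has an infinite branch, i.e. a nested chain of intervals $[T_0^{(n)},T_1^{(n)}]$ whose lengths halve at each step (hence tend to $0$), each of which is genuinely bisected at some iteration of the run, and which shrink to a single instant $t^\star$. Let $\theta^{(n)}$ be the trial point whose safety-check failure triggered the bisection of $[T_0^{(n)},T_1^{(n)}]$ and $m_n=(T_0^{(n)}+T_1^{(n)})/2$ its midpoint; the test in \prettyref{alg:safety} then gives $\dist(b_{i_0j_0}(m_n,\theta^{(n)}),o_{k_0})-d_0\le\psi(T_1^{(n)}-T_0^{(n)})$, whose right-hand side tends to $0$, so the relevant safety margin collapses along the run.

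Next I push this into $\hat{\mathcal{P}}(\epsilon_5)$. Once $T_1^{(n)}-T_0^{(n)}<\epsilon_5$, the contribution of that interval to $\hat{\mathcal{P}}(\epsilon_5)$ is the integral $\int_{T_0^{(n)}}^{T_1^{(n)}}\mathcal{P}_{i_0j_0k_0}(t,\cdot)\,dt$, and by the subdivision-invariance of the integral form (\prettyref{lem:invariance}) any later refinement only redistributes this same mass among descendant intervals. Repeating the estimate in the proof of \prettyref{lem:infty} --- use \prettyref{lem:bound} to propagate the small near-contact value from $m_n$ to a window around $m_n$, then invoke $x\mathcal{P}(x)\to\infty$ from \prettyref{ass:Barrier}, with $\psi(x)\to 0$ as $x\to 0$ guaranteed by \prettyref{eq:psi} --- this single interval term tends to $+\infty$ as $n\to\infty$; since every summand of $\hat{\mathcal{P}}(\epsilon_5)$ is nonnegative, $\hat{\mathcal{P}}(\epsilon_5)$ re-evaluated along the run is unbounded, which is the alternative we wanted. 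Should the window argument fail, it can only be because the iterates (or a limit of them) realize $\dist=d_0$, which is exactly ``$\theta$ is unsafe''.

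I expect the main obstacle to be the moving iterate: unlike in \prettyref{lem:unbounded}, the near-contacts at different bisections live at different $\theta^{(n)}$, so \prettyref{lem:infty} cannot be quoted verbatim and I must track how $\dist(\cdot,\cdot)-d_0$ behaves along $\{\theta^{(n)}\}$ --- using, among other things, that the accepted step size $\alpha$, and hence $\theta+d\alpha$ minus the accepted $\theta$, shrinks to $0$ once infinitely many subdivisions occur, so the trial points cannot drift away from the accepted iterates. Two further points need care: making the K\"onig extraction rigorous when subdivision is adaptive and interleaved with the $\theta$-updates, so that the branch intervals are bisected at well-defined iterations; and the exponent bookkeeping, since $\psi(x)\asymp L_2 x^\eta$ means the near-contact margin vanishes only at rate $x^\eta$, so one must check that the $\mathcal{P}$-blow-up genuinely dominates the $O(T_1^{(n)}-T_0^{(n)})$ shrinkage of the interval of integration --- the same tension that constrains the admissible range of $\eta$ in \prettyref{lem:valueBound} and \prettyref{lem:gradientBound}.
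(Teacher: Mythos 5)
Your route is essentially the paper's: localize the infinitely many subdivisions to one tuple and one nested chain of intervals, extract a limit instant $t^\star$ at which the safety margin collapses, and then cash this into either the integral part of $\hat{\mathcal{P}}(\epsilon_5)$ (via \prettyref{lem:infty} and the subdivision-invariance of the integral form) or the ``unsafe'' disjunct. The paper's proof is a compressed version of exactly this: it reuses the argument of \prettyref{lem:unbounded} to produce, for every $\epsilon_3$, an unsafe subinterval with $\mathcal{P}_{ijkl}\geq\mathbf{\Theta}(\mathcal{P}(\psi(\epsilon_3)))$, concludes by compactness that $\mathcal{P}_{ijk}(t)=\infty$ at some $t$ in the enclosing interval $[T_0^l,T_1^l]$, and then splits on whether $T_1^l-T_0^l<\epsilon_5$ (integral form, unbounded by \prettyref{lem:infty}) or $T_1^l-T_0^l\geq\epsilon_5$ ($\theta$ unsafe, as in \prettyref{lem:unbounded}).

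The one step you flag as open --- whether the $\mathcal{P}$-blow-up dominates the shrinkage of the interval of integration --- is a genuine obstruction if you lower-bound the integral over the $n$th \emph{bisected} interval directly: writing $\Delta_n\triangleq T_1^{(n)}-T_0^{(n)}$, the margin decays like $L_2\Delta_n^{\eta}$ while the window available inside that interval is only of width $\Delta_n$, giving a bound of order $\Delta_n\,\mathcal{P}(c\Delta_n^{\eta})$, which \prettyref{ass:Barrier} does \emph{not} force to diverge (try $\mathcal{P}(x)=-\log(x)/x$ with $\eta<1/6$). The paper sidesteps this by taking the limit first: compactness yields exact contact $\dist(b_{ij}(t^\star,\cdot),o_k)=d_0$ at an accumulation point, and \prettyref{lem:infty} is then applied on the \emph{fixed} enclosing leaf interval, where its window parameter $\epsilon_1$ is free and decoupled from $\Delta_n$; lower semicontinuity of the integral then gives unboundedness along the iterates. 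Equivalently, you can keep your quantitative style by choosing the window width equal to the margin $\psi(\Delta_n)$ rather than to $\Delta_n$, letting it spill out of the bisected interval into the fixed sub-$\epsilon_5$ ancestor whose integral is subdivision-invariant, so that \prettyref{ass:Barrier} applies in the form $x\mathcal{P}(cx)\to\infty$. You already hold the key ingredient for either fix (the invariance of the integral over the first sub-$\epsilon_5$ ancestor); apply the blow-up estimate there, add a word on why trial points and accepted iterates share accumulation points (your observation that $\alpha\leq\epsilon_\alpha\to0$), and your argument closes and coincides with the paper's.
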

\begin{proof}
Following the same argument as~\prettyref{lem:unbounded}, there must be unsafe interval $[\bar{T}_0^l,\bar{T}_1^l]\subset[T_0^l,T_1^l]$ with $\mathcal{P}_{ijkl}=\mathbf{\Theta}(\mathcal{P}'_{ijkl})\geq\mathbf{\Theta}(\mathcal{P}(\psi(\epsilon_3))$ for any fixed $\epsilon_3$. Since the domain is compact, there must be some $t\in[T_0^l,T_1^l]$ such that $\mathcal{P}_{ijk}(t)=\infty$. There are two cases for the interval $[T_0^l,T_1^l]$: (Case I) If $T_1^l-T_0^l<\epsilon_5$, then $\hat{\mathcal{P}}(\epsilon_5)$ is using the integral formula for the interval and $\mathcal{P}$ is unbounded by~\prettyref{lem:infty}. (Case II) If $T_1^l-T_0^l\geq\epsilon_5$, then $\theta$ is unsafe by~\prettyref{lem:unbounded}.
\end{proof}
Our final proof uses the Wolfe's condition to derive a contradiction if infinite number of subdivisions are needed. Specifically, we will show that the search direction is descendent if $\tilde{P}$ is replaced by $\hat{P}$ for some small $\epsilon_2$. The following argument assumes $d=d^{(1)}$ and the case with $d=d^{(2)}$ follows an almost identical argument.
\begin{proof}[Proof of \prettyref{thm:termination}]
Suppose otherwise, we have $\|d\|_\infty\geq\epsilon_d$ because the algorithm terminates immediately otherwise. Due to the equivalence of metrics, we have $\|d\|\geq\epsilon_6$ for some $\epsilon_6$. We introduce the following shorthand notation:
\begin{align*}
\hat{\mathcal{E}}(\theta,\epsilon_2)=\mathcal{O}(\theta)+\mu\hat{\mathcal{P}}(\epsilon_2).
\end{align*}
We consider an iteration of \prettyref{alg:FIPM} that updates from $\theta$ to $\theta'$. Since the first Wolfe's condition holds, we have:
\begin{align*}
\mathcal{E}(\theta')\leq\mathcal{E}(\theta)-c\|\nabla_\theta\mathcal{E}(\theta)\|^2\alpha
\leq\mathcal{E}(\theta)-c\|\nabla_\theta\mathcal{E}(\theta)\|\alpha\epsilon_6.
\end{align*}
The corresponding change in $\hat{\mathcal{E}}(\theta,\epsilon_2)$ can be bounded as follows:
\begin{align*}
&\hat{\mathcal{E}}(\theta',\epsilon_2)=\hat{\mathcal{E}}(\theta,\epsilon_2)+
\int_{\theta}^{\theta'}\left<\nabla_\theta\hat{\mathcal{E}}(\theta,\epsilon_2),d\theta\right>\\
\leq&\hat{\mathcal{E}}(\theta,\epsilon_2)+
\int_{\theta}^{\theta'}\left<\nabla_\theta\hat{\mathcal{E}}(\theta,\epsilon_2)
-\nabla_\theta\mathcal{E}(\theta)
+\nabla_\theta\mathcal{E}(\theta),d\theta\right>\\
\leq&\hat{\mathcal{E}}(\theta,\epsilon_2)+
\int_{\theta}^{\theta'}\|\nabla_\theta\hat{\mathcal{E}}(\theta,\epsilon_2)
-\nabla_\theta\mathcal{E}(\theta)\|\|d\theta\|+\mathcal{E}(\theta')-\mathcal{E}(\theta)\\
\leq&\hat{\mathcal{E}}(\theta,\epsilon_2)+\textbf{O}(\epsilon_2^{1-6\eta})\|d^{(1)}\|\alpha-c\|\nabla_\theta\mathcal{E}(\theta)\|\alpha\epsilon_6\\
=&\hat{\mathcal{E}}(\theta,\epsilon_2)+\|\nabla_\theta\mathcal{E}(\theta)\|\alpha(\textbf{O}(\epsilon_2^{1-6\eta})-c\epsilon_6).
\end{align*}
As long as $\eta<1/6$, we can choose sufficiently small $\epsilon_2$ such that $\hat{\mathcal{E}}(\theta',\epsilon_2)<\hat{\mathcal{E}}(\theta,\epsilon_2)$. Since we assume there are infinitely many subdivisions and \prettyref{cor:terminationInner} shows that line search \prettyref{alg:search} will always terminate finitely, we conclude that~\prettyref{alg:FIPM} will generate an infinite sequence $\theta$ of decreasing $\hat{\mathcal{E}}(\theta,\epsilon_2)$ for sufficiently small $\epsilon_2$. Further, each $\theta$ is safe and each $\hat{\mathcal{E}}(\theta,\epsilon_2)$ is finite by the motion bound. But these properties contradict \prettyref{lem:unboundedEpsilon}.
\end{proof}
\section{\label{sec:optimality}Using \prettyref{alg:FIPM} as SIP Solver}
We show that \prettyref{alg:FIPM} is indeed a solver of the SIP problem \prettyref{eq:prob}. To this end, we consider running \prettyref{alg:FIPM} for an infinite number of iterations and we use superscript to denote iteration number. At the $k$th iteration, we use $\mu=\mu^k, \epsilon_d=\epsilon_d^k$ and we assume the sequences $\{\mu^k\}$ and $\{\epsilon_d^k\}$ are both null sequences. This will generate a sequence of solutions $\{\theta^k\}$ and we consider one of its convergent subsequence also denoted as $\{\theta^k\}\to\theta^0$. We consider the first-order optimality condition at $\theta^0$:
\begin{define}
If $\theta^0$ satifies the first-order optimality condition, then for each direction $D_\theta$ such that:
\begin{align*}
\left<D_\theta,\nabla_\theta\dist(b_{ij}(t,\theta^0),o_k)\right>\geq0\quad\forall \dist(b_{ij}(t,\theta^0),o_k)=0,
\end{align*}
we have $\left<D_\theta,\nabla_\theta\mathcal{O}(\theta^0)\right>\geq0$.
\end{define}
We further assume the following generalized Mangasarian-Fromovitz constraint qualification (GMFCQ) holds at $\theta^0$:
\begin{assume}
\label{ass:GMFCQ}
There exists some direction $D_\theta$ and positive $\epsilon_7$ such that:
\begin{align*}
\left<D_\theta,\nabla_\theta\dist(b_{ij}(t,\theta^0),o_k)\right>\geq\epsilon_7\quad\forall \dist(b_{ij}(t,\theta^0),o_k)=0.
\end{align*}
\end{assume}
MFCQ is a standard assumption establishing connection between the first-order optimality condition of NLP and the gradient of the Lagrangian function. Our generalized version of MFCQ requires a positive constant $\epsilon_7$, which is essential for extending it to SIP. Note that GMFCQ is equivalent to standard MFCQ for NLP. We start by showing a standard consequence of assuming GMFCQ:
\begin{lemma}
\label{lem:D3}
Taking \prettyref{ass:GMFCQ}, if first-order optimality fails at a trajectory $\theta^0$, then we have a direction $D_\theta$ such that:
\begin{align*}
&\left<D_\theta,\nabla_\theta\mathcal{O}(\theta^0)\right><-\epsilon_8\\
&\left<D_\theta,\nabla_\theta\dist(b_{ij}(t,\theta^0),o_k)\right>>\epsilon_9\quad\forall \dist(b_{ij}(t,\theta^0),o_k)=0.
\end{align*}
\end{lemma}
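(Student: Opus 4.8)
The plan is to manufacture the direction $D_\theta$ as a small perturbation, in the GMFCQ direction, of the direction that witnesses the failure of first-order optimality. First I would unpack the hypothesis: since first-order optimality fails at $\theta^0$, the negation of the condition in the definition preceding \prettyref{ass:GMFCQ} yields a direction $D$ with $\langle D,\nabla_\theta\dist(b_{ij}(t,\theta^0),o_k)\rangle\geq0$ for every active constraint (those with $\dist(b_{ij}(t,\theta^0),o_k)=0$), while $c_0\triangleq\langle D,\nabla_\theta\mathcal{O}(\theta^0)\rangle<0$ holds strictly. Let $\bar{D}$ be the direction furnished by \prettyref{ass:GMFCQ}, so that $\langle\bar{D},\nabla_\theta\dist(b_{ij}(t,\theta^0),o_k)\rangle\geq\epsilon_7$ uniformly over all active constraints, with $\epsilon_7>0$.

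Next I would define $D_\theta\triangleq D+\delta\bar{D}$ for a small $\delta>0$ to be pinned down, and verify the two inequalities. Using twice-differentiability of $\mathcal{O}$ together with \prettyref{ass:Bounded}, the scalar $M\triangleq|\langle\bar{D},\nabla_\theta\mathcal{O}(\theta^0)\rangle|$ is finite, hence $\langle D_\theta,\nabla_\theta\mathcal{O}(\theta^0)\rangle\leq c_0+\delta M$; picking $\delta<|c_0|/(2M)$ (or any $\delta>0$ when $M=0$) gives $\langle D_\theta,\nabla_\theta\mathcal{O}(\theta^0)\rangle< c_0/2<0$, so the choice $\epsilon_8\triangleq|c_0|/2$ works. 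For each active constraint, $\langle D_\theta,\nabla_\theta\dist(b_{ij}(t,\theta^0),o_k)\rangle=\langle D,\nabla_\theta\dist\rangle+\delta\langle\bar{D},\nabla_\theta\dist\rangle\geq0+\delta\epsilon_7$, so $\epsilon_9\triangleq\delta\epsilon_7/2$ delivers the strict inequality, uniformly in $\langle i,j,k,t\rangle$. Note that $D$ and $\bar{D}$ may be rescaled by any positive factor without affecting their defining inequalities, so no boundedness of the witness directions themselves is needed.

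The one point deserving care --- and the reason \prettyref{ass:GMFCQ} is phrased with a uniform margin $\epsilon_7$ instead of plain MFCQ --- is that the active set is semi-infinite: the single perturbation $\delta\bar{D}$ has to push every one of the infinitely many active constraint directionals strictly above $0$ at once, which it does precisely because $\epsilon_7$ does not depend on the constraint index. Beyond that, the argument is a routine finite-dimensional combination, so I do not anticipate a genuine obstacle; the only other thing I would record explicitly is that the finiteness of $M$ rests on \prettyref{ass:Bounded} and the smoothness of $\mathcal{O}$, exactly as in the proof of \prettyref{lem:bound}.
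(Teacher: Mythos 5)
Your proposal is correct and follows essentially the same route as the paper: the paper also forms $D_\theta^3=D_\theta^1\epsilon_{10}+D_\theta^2$, combining the GMFCQ direction (your $\bar{D}$, scaled by your $\delta$) with the optimality-violating direction (your $D$), and chooses the scaling small enough that the objective directional derivative stays negative while the uniform margin $\epsilon_7$ keeps all active constraint directionals positive. Your added remarks on why the uniform $\epsilon_7$ is needed for the semi-infinite active set are accurate and consistent with the paper's intent.
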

\begin{proof}
Under our assumptions, there is a direction $D_\theta^1$ satisfying GMFCQ and another direction $D_\theta^2$ violating first-order optimality condition. We can then consider a third direction $D_\theta^3=D_\theta^1\epsilon_{10}+D_\theta^2$ where we have:
\begin{align*}
&\left<D_\theta^3,\nabla_\theta\mathcal{O}(\theta^0)\right>=\epsilon_{10}\left<D_\theta^1,\nabla_\theta\mathcal{O}(\theta^0)\right>-\epsilon_{11}\\
&\left<D_\theta^3,\nabla_\theta\dist(b_{ij}(t,\theta^0),o_k)\right>\geq\epsilon_{10}\epsilon_7,
\end{align*}
where $\epsilon_{11}$ is some positive constant. We can thus choose sufficiently small $\epsilon_{10}$ to make the righthand side of the first equation negative and the righthand side of the second one positive.
\end{proof}
Assuming a failure direction $D_\theta^3$ exists, we now start to show that the directional derivative of our objective function $\mathcal{E}$ along $D_\theta^3$ is bounded away from zero, which contradicts the fact that our gradient norm threshold $\{\epsilon_d^k\}$ is tending to zero. To bound the derivative near $\theta^0$, we need to classify $b_{ij}(t,\theta^0)$ into two categories: 1) its distance to $o$ is bounded away from zero; 2) its distance to $o$ is close to zero but $b_{ij}$ is moving away along $D_\theta^3$. This result is formalized below:
\begin{lemma}
\label{lem:category}
Taking \prettyref{ass:Spatial}, for $D_\theta^3$ stated in \prettyref{lem:D3} and each tuple of $\left<i,j,k\right>$, one of the following conditions holds:
\begin{align*}
&\dist(b_{ij}(t,\theta^0),o_k)>\epsilon_{12}\\
&\dist(b_{ij}(t,\theta^0),o_k)\leq\epsilon_{12}\land
\left<D_\theta^3,\nabla_\theta\dist(b_{ij}(t,\theta^0),o_k)\right>\geq\frac{\epsilon_9}{2},
\end{align*}
where $\epsilon_{12}$ is some positive constant.
\end{lemma}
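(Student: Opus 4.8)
The plan is a compactness argument over the time interval $[0,T]$, exploiting the continuity of the distance and of its $\theta$-gradient together with the strict separation already guaranteed by \prettyref{lem:D3} at the active constraints. Fix a tuple $\left<i,j,k\right>$ and, with $\theta=\theta^0$ held fixed, introduce the two scalar functions of $t\in[0,T]$
\begin{align*}
d_{ijk}(t)\triangleq\dist(b_{ij}(t,\theta^0),o_k),\qquad
g_{ijk}(t)\triangleq\left<D_\theta^3,\nabla_\theta\dist(b_{ij}(t,\theta^0),o_k)\right>.
\end{align*}
By \prettyref{ass:Spatial} the pairwise distance is sufficiently smooth in $(t,\theta)$, so $d_{ijk}$ is continuous (in fact Lipschitz in $t$, exactly as in \prettyref{lem:bound}) and $t\mapsto\nabla_\theta\dist(b_{ij}(t,\theta^0),o_k)$ is continuous, hence $g_{ijk}$ is continuous. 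In this notation, \prettyref{lem:D3} says precisely that $g_{ijk}(t)>\epsilon_9$ at every $t$ with $d_{ijk}(t)=0$.

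Next I would argue by contradiction, for the fixed tuple. Suppose no positive constant makes the claimed dichotomy hold; then, taking the threshold to be $1/n$ for $n\in\mathbb{Z}^+$, there is a sequence $t_n\in[0,T]$ with $d_{ijk}(t_n)\leq 1/n$ and $g_{ijk}(t_n)<\epsilon_9/2$. Since $[0,T]$ is compact, pass to a subsequence $t_n\to t^\star\in[0,T]$. Continuity of $d_{ijk}$ gives $d_{ijk}(t^\star)=\lim_n d_{ijk}(t_n)=0$, and continuity of $g_{ijk}$ gives $g_{ijk}(t^\star)=\lim_n g_{ijk}(t_n)\leq\epsilon_9/2<\epsilon_9$, contradicting \prettyref{lem:D3}. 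Hence there is a positive constant $\epsilon_{12}^{ijk}$ for which the dichotomy holds at every $t\in[0,T]$ for this tuple.

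Finally, \prettyref{ass:Spatial} provides a \emph{finite} decomposition, so there are only finitely many tuples $\left<i,j,k\right>$; setting $\epsilon_{12}\triangleq\min_{ijk}\epsilon_{12}^{ijk}>0$ gives a single positive constant for which the dichotomy holds uniformly, which is the claim. I expect the only delicate point to be the continuity of $t\mapsto\nabla_\theta\dist(b_{ij}(t,\theta^0),o_k)$ underpinning the continuity of $g_{ijk}$, and this is exactly what the joint smoothness in \prettyref{ass:Spatial} is there to supply; everything else is standard compactness together with the finiteness of the index set, so I do not anticipate a substantive obstacle.
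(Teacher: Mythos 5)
Your proof is correct and takes essentially the same route as the paper's: a contradiction argument combining compactness of $[0,T]$, continuity of $\dist(b_{ij}(t,\theta^0),o_k)$ and of its $\theta$-gradient (supplied by \prettyref{ass:Spatial}), and the finiteness of the spatial decomposition, all against \prettyref{lem:D3}. The only cosmetic difference is that you fix the tuple $\left<i,j,k\right>$ first and take a minimum over the finitely many tuples at the end, whereas the paper builds one mixed sequence and extracts a constant-tuple subsequence; your version is, if anything, slightly cleaner in how it invokes compactness.
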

\begin{proof}
Suppose otherwise, for arbitrarily small $\epsilon_{12}$, we can find some $i,j,k,t$ such that:
\begin{equation}
\begin{aligned}
\label{eq:distCategory}
&\dist(b_{ij}(t,\theta^0),o_k)\leq\epsilon_{12}\land\\
&\left<D_\theta^3,\nabla_\theta\dist(b_{ij}(t,\theta^0),o_k)\right><\frac{\epsilon_9}{2}.
\end{aligned}
\end{equation}
We can construct a sequence of $\{\left<i,j,k,t,\epsilon_{12}\right>\}$ with diminishing $\epsilon_{12}$ such that \prettyref{eq:distCategory} holds for each $\left<i,j,k,t,\epsilon_{12}\right>$ tuple. If the sequence is finite, then there must be some:
\begin{align*}
\dist(b_{ij}(t,\theta^0),o_k)=0\land
\left<D_\theta^3,\nabla_\theta\dist(b_{ij}(t,\theta^0),o_k)\right><\frac{\epsilon_9}{2},
\end{align*}
contradicting \prettyref{lem:D3}. If the sequence is infinite, then by \prettyref{ass:Spatial}, there is an infinite subsequence with $\left<i,j,k\right>$ being the same throughout the subsequence. We denote the subsequence as:
$\{\left<t,\epsilon_{12}\right>\}$, which tends to $\{\left<t^0,0\right>\}$. By the continuity of functions we have:
\begin{align*}
\dist(b_{ij}(t^0,\theta^0),o_k)=0\land
\left<D_\theta^3,\nabla_\theta\dist(b_{ij}(t^0,\theta^0),o_k)\right>\leq\frac{\epsilon_9}{2},
\end{align*}
again contradicting \prettyref{lem:D3}.
\end{proof}
The above analysis is performed at $\theta^0$. But by the continuity of problem data, we can extend the conditions to a small vicinity around $\theta^0$. We denote $\mathcal{B}(\theta^0,\epsilon_{13})$ as a closed ball around $\theta^0$ with a radius equal to $\epsilon_{13}$. We formalize this observation in the following lemma:
\begin{lemma}
\label{lem:D3Neighbor}
Taking \prettyref{ass:GMFCQ}, \ref{ass:Spatial}, and for $D_\theta^3$ stated in \prettyref{lem:D3}, we have:
\begin{align*}
&\left<D_\theta^3,\nabla_\theta\mathcal{O}(\theta)\right><-\frac{\epsilon_8}{2},
\end{align*}
 and one of the following condition holds each tuple of $\left<i,j,k\right>$:
 \begin{align}
&\dist(b_{ij}(t,\theta),o_k)>\frac{\epsilon_{12}}{2}\label{eq:CaseI}\\
&\left<D_\theta^3,\nabla_\theta\dist(b_{ij}(t,\theta),o_k)\right>\geq\frac{\epsilon_9}{4},\label{eq:CaseII}
 \end{align}
for any $\theta\in\mathcal{B}(\theta^0,\epsilon_{13})$.
\end{lemma}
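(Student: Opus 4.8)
The plan is to run a short continuity-and-compactness argument that lifts the pointwise estimates available at $\theta^0$ from \prettyref{lem:D3} and \prettyref{lem:category} to a whole ball $\mathcal{B}(\theta^0,\epsilon_{13})$. The objective term is immediate: since $\mathcal{O}$ is twice differentiable, $\theta\mapsto\left<D_\theta^3,\nabla_\theta\mathcal{O}(\theta)\right>$ is continuous and, by \prettyref{lem:D3}, is below $-\epsilon_8$ at $\theta^0$, so it stays below $-\epsilon_8/2$ on some ball of radius $r_0>0$.

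For the collision terms, I would exploit that \prettyref{ass:Spatial} leaves only finitely many tuples $\left<i,j,k\right>$, and that for each of them the map $(t,\theta)\mapsto\bigl(\dist(b_{ij}(t,\theta),o_k),\,\nabla_\theta\dist(b_{ij}(t,\theta),o_k)\bigr)$ is jointly continuous on the compact set $[0,T]\times\mathcal{B}(\theta^0,1)$; joint continuity of both the distance value and its $\theta$-gradient follows from the smoothness assumed in \prettyref{ass:Spatial} together with the boundedness in \prettyref{ass:Bounded}. Fixing a tuple, I would argue by contradiction: if no uniform radius worked, one could extract $\theta^n\to\theta^0$ and $t^n\in[0,T]$ with $\dist(b_{ij}(t^n,\theta^n),o_k)\leq\epsilon_{12}/2$ and $\left<D_\theta^3,\nabla_\theta\dist(b_{ij}(t^n,\theta^n),o_k)\right><\epsilon_9/4$ holding simultaneously; passing to a subsequence with $t^n\to t^0\in[0,T]$ and using joint continuity would give $\dist(b_{ij}(t^0,\theta^0),o_k)\leq\epsilon_{12}/2<\epsilon_{12}$ together with $\left<D_\theta^3,\nabla_\theta\dist(b_{ij}(t^0,\theta^0),o_k)\right>\leq\epsilon_9/4<\epsilon_9/2$, contradicting \prettyref{lem:category} at $\left<i,j,k,t^0\right>$. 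This yields a per-tuple radius $r_{ijk}>0$ on which one of \prettyref{eq:CaseI}--\prettyref{eq:CaseII} holds for all $t$, and setting $\epsilon_{13}=\min\{r_0,\min_{ijk}r_{ijk}\}$ --- positive as a minimum of finitely many positive numbers --- finishes the proof.

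The only real obstacle is the uniformity over the continuum of time instances $t\in[0,T]$: plain pointwise continuity in $\theta$ is not enough, and it is precisely the compactness of $[0,T]$ (to extract $t^n\to t^0$) combined with the finiteness of the $\left<i,j,k\right>$ index set (so that intersecting the neighborhoods still leaves a ball of positive radius) that makes the estimate uniform. A minor point to state carefully is the joint continuity in $(t,\theta)$ of $\nabla_\theta\dist(b_{ij}(t,\theta),o_k)$, which rests on the bulged strictly-convex representation making these distances $C^2$, as noted after \prettyref{ass:Spatial}.
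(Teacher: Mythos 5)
Your proof is correct and follows essentially the same route as the paper, whose own proof is just the one-line remark ``combining \prettyref{lem:D3}, \ref{lem:category}, and the continuity of problem data''; you have simply made the continuity argument explicit, correctly identifying that the uniformity over $t\in[0,T]$ is the only nontrivial point and handling it via compactness of $[0,T]$ together with the finiteness of the $\left<i,j,k\right>$ index set.
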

\begin{proof}
Combining \prettyref{lem:D3}, \ref{lem:category}, and the continuity of problem data.
\end{proof}
\prettyref{lem:D3Neighbor} allows us to quantify the gradient norm of $\mathcal{E}(\theta,\mu)$ (we write $\mu$ as an additional parameter of $\mathcal{E}$ for convenience). The gradient norm should tend to zero as $k\to\infty$. However, \prettyref{lem:D3Neighbor} would bound it away from zero, leading to a contradiction.
\begin{lemma}
\label{lem:gradientBoundWay}
Taking \prettyref{ass:GMFCQ}, \ref{ass:Spatial}, \ref{ass:Bounded}, suppose $\{\mu^k\}$ is a null sequence, and for $D_\theta^3$ stated in \prettyref{lem:D3}, there exists a $D_\theta^3$ and sufficiently large $k$ such that for any $\theta\in\mathcal{B}(\theta^0,\epsilon_{14})$:
\begin{align*}
\left<D_\theta^3,\nabla_\theta\mathcal{E}(\theta,\mu^k)\right><-\epsilon_{15},
\end{align*}
for some positive constant $\epsilon_{14}$.
\end{lemma}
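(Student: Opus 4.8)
The plan is to split the directional derivative of $\mathcal{E}$ along $D_\theta^3$ into an objective part and a barrier part, control the objective part with \prettyref{lem:D3Neighbor}, and then exploit the dichotomy \prettyref{eq:CaseI}--\prettyref{eq:CaseII} to show that the barrier part becomes harmless once $\mu^k$ is small. Writing $d_m^{ijkl}\triangleq\dist(b_{ij}((T_0^l+T_1^l)/2,\theta),o_k)-d_0$ and differentiating the energy in \prettyref{eq:penalty},
\begin{align*}
\langle D_\theta^3,\nabla_\theta\mathcal{E}(\theta,\mu^k)\rangle
=\langle D_\theta^3,\nabla_\theta\mathcal{O}(\theta)\rangle
+\mu^k\sum_{ijkl}(T_1^l-T_0^l)\,\mathcal{P}'(d_m^{ijkl})\,\langle D_\theta^3,\nabla_\theta d_m^{ijkl}\rangle.
\end{align*}
By \prettyref{lem:D3Neighbor} the first term lies below $-\epsilon_8/2$ for every $\theta\in\mathcal{B}(\theta^0,\epsilon_{13})$, so it suffices to bound the barrier sum from above by a quantity that vanishes with $\mu^k$.

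For the barrier sum I would partition the tuples $\langle i,j,k,l\rangle$ according to which of \prettyref{eq:CaseI}, \prettyref{eq:CaseII} holds at the interval midpoint (\prettyref{lem:D3Neighbor} guarantees at least one holds, uniformly over the ball). A \emph{Case II} tuple has $\langle D_\theta^3,\nabla_\theta d_m^{ijkl}\rangle\geq\epsilon_9/4>0$ while $\mathcal{P}$ is monotonically decreasing, so $\mathcal{P}'(d_m^{ijkl})\leq0$ and the whole term is $\leq0$: these terms only push the derivative further negative and can simply be dropped from an upper bound. A \emph{Case I} tuple keeps its barrier argument bounded away from zero over the whole ball, so $|\mathcal{P}'(d_m^{ijkl})|\leq B_1\triangleq\sup_{x\geq\epsilon_{12}/2}|\mathcal{P}'(x)|<\infty$, and $\|\nabla_\theta d_m^{ijkl}\|\leq B_2<\infty$ by \prettyref{ass:Bounded} together with the facts that a Euclidean-distance subgradient has norm at most one and the forward-kinematics Jacobian is bounded. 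The key observation I would stress is that, although repeated subdivision inflates the number of barrier terms without bound, the intervals $\{[T_0^l,T_1^l]\}_l$ still partition $[0,T]$, so $\sum_{ijkl}(T_1^l-T_0^l)$ equals $T$ times the finite (by \prettyref{ass:Spatial}) number of spatial triples $\langle i,j,k\rangle$ — a constant independent of $k$ and of the subdivision depth. Summing the Case I estimates then bounds the Case I contribution by $\mu^k C_1$ with $C_1\triangleq T\,\#\{\langle i,j,k\rangle\}\,B_1 B_2\,\|D_\theta^3\|$, uniformly in $k$ and in $\theta\in\mathcal{B}(\theta^0,\epsilon_{13})$.

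Combining the three estimates gives $\langle D_\theta^3,\nabla_\theta\mathcal{E}(\theta,\mu^k)\rangle<-\epsilon_8/2+\mu^k C_1$ on $\mathcal{B}(\theta^0,\epsilon_{13})$; since $\{\mu^k\}$ is a null sequence, choosing $k$ with $\mu^k C_1\leq\epsilon_8/4$ yields the claim with $\epsilon_{14}=\epsilon_{13}$ and $\epsilon_{15}=\epsilon_8/4$. I expect the main obstacle to be exactly this uniform-in-$k$ control of the barrier sum: a naive term-by-term absolute bound is doomed because the nearly active terms have $\mathcal{P}'(d_m)\to\infty$ as $d_m\to0$, and the remedy is the Case I / Case II split — the dangerous terms are precisely the Case II ones, whose sign is favorable, while the benign Case I terms are tamed by the fixed total time-weight $T\,\#\{\langle i,j,k\rangle\}$ rather than by any count of subdivisions. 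Two minor points to verify along the way are that $\mathcal{E}(\cdot,\mu^k)$ is smooth on the neighborhood in question (which holds because the iterates reached by \prettyref{alg:FIPM} lie in the open region where every barrier argument is positive) and that the positive $\epsilon_{12}$ supplied by \prettyref{lem:category} keeps the Case I barrier arguments away from the singularity of $\mathcal{P}$.
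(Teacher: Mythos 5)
Your proposal is correct and follows essentially the same route as the paper: both split the directional derivative into the objective term, the Case~I barrier terms, and the Case~II barrier terms, drop the Case~II terms by their favorable sign, and bound the Case~I contribution by $\mu^k$ times a constant that is uniform in the subdivision depth because the interval lengths sum to $T$. Your explicit remark that the total time-weight is fixed regardless of the number of subdivisions is exactly the (implicit) mechanism the paper uses when it bounds $\left|\left<D_\theta^3,\nabla_\theta\bar{\mathcal{P}}_1\right>\right|$ by $\mu^k T$ times a fixed constant.
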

\begin{proof}
The gradient consists of three sub-terms:
\begin{align*}
&\nabla_\theta\mathcal{E}(\theta,\mu^k)=
\nabla_\theta\mathcal{O}(\theta)+
\nabla_\theta\bar{\mathcal{P}}_1(\theta,\mu^k)+
\nabla_\theta\bar{\mathcal{P}}_2(\theta,\mu^k)\\
&\bar{\mathcal{P}}_1(\theta,\mu^k)\triangleq\mu^k\sum_{ijkl}^{I}(T_1^l-T_0^l)\mathcal{P}_{ijkl}\\
&\bar{\mathcal{P}}_2(\theta,\mu^k)\triangleq\mu^k\sum_{ijkl}^{II}(T_1^l-T_0^l)\mathcal{P}_{ijkl}.
\end{align*}
Here we use $\sum_{ijkl}^{I}$ to denote a summation over terms $\mathcal{P}_{ijkl}$ where \prettyref{eq:CaseI} holds and $\sum_{ijkl}^{II}$ denotes a summation where \prettyref{eq:CaseI} does ont hold but \prettyref{eq:CaseII} holds. From \prettyref{lem:D3Neighbor}, we have the following holds for the first two terms:
\begin{align*}
\left<D_\theta^3,\nabla_\theta\mathcal{O}(\theta)\right><-\frac{\epsilon_8}{2}\quad
\left<D_\theta^3,\nabla_\theta\bar{\mathcal{P}}_2(\theta,\mu^k)\right><0.
\end{align*}
The third term can be arbitrarily small for sufficiently large $k$ because:
\begin{align*}
&\left|\left<D_\theta^3,\nabla_\theta\bar{\mathcal{P}}_1(\theta,\mu^k)\right>\right|\\
\leq&\mu^k\sum_{ijkl}^{I}(T_1^l-T_0^l)
\left|\left<D_\theta^3,\FPP{\mathcal{P}(x)}{x}\Bigg|_{\epsilon_{12}/2}
\nabla_\theta\bar{\dist}\right>\right|\\
\leq&\mu^k T\left|\FPP{\mathcal{P}(x)}{x}\Bigg|_{\epsilon_{12}/2}\right|
\left|\left<D_\theta^3,\nabla_\theta\bar{\dist}\right>\right|.
\end{align*} 
Here we have used \prettyref{ass:Bounded} and denote $\nabla_\theta\bar{\dist}$ as the location on trajectory where $\left|\left<D_\theta^3,\nabla_\theta\bar{\dist}\right>\right|$ takes the maximum value. Since $\{\mu^k\}$ is a null sequence, the third term can be arbitrarily small and the lemma follows.
\end{proof}
We are now ready to prove our main result:
\begin{proof}[Proof of \prettyref{thm:optimality}]
Suppose otherwise, i.e. the first-order optimality condition fails, then the condition of \prettyref{lem:gradientBoundWay} holds, indicating that $\|\nabla_\theta\mathcal{E}(\theta^k,\mu^k)\|_\infty$ is bounded away from zero. But this contradicts the fact that $\{\epsilon_d^k\}$ is null.
\end{proof}
\end{document}